\documentclass{article}

\makeatletter
\g@addto@macro\normalsize{%
  \setlength\abovedisplayskip{5pt plus 2pt minus 2pt}%
  \setlength\belowdisplayskip{5pt plus 1pt minus 1pt}%
  \setlength\abovedisplayshortskip{4pt plus 2pt minus 2pt}%
  \setlength\belowdisplayshortskip{2pt plus 1pt minus 1pt}%
}
\makeatother

\usepackage{microtype}
\usepackage{graphicx}
\usepackage{subfigure}
\usepackage{booktabs} 
\usepackage{multirow} 

\usepackage{hyperref}

\usepackage[accepted]{icmlarxiv}

\usepackage{amsmath}
\usepackage{amssymb}
\usepackage{mathtools}
\usepackage{amsthm}
\usepackage{comment}

\usepackage{xcolor}
\definecolor{code-gray}{rgb}{0.5,0.5,0.5}
\newcommand{\GrayComment}[1]{\textcolor{code-gray}{// #1}}

\usepackage[capitalize,noabbrev]{cleveref}
\usepackage[most]{tcolorbox}

\theoremstyle{plain}
\newtheorem{theorem}{Theorem}[section]
\newtheorem{proposition}[theorem]{Proposition}
\newtheorem{lemma}[theorem]{Lemma}
\newtheorem{corollary}[theorem]{Corollary}
\theoremstyle{definition}
\newtheorem{definition}[theorem]{Definition}

\theoremstyle{remark}

\tcbset{
  dcttheorem/.style={
    enhanced,
    boxrule=0.4pt,
    colback=gray!6,
    colframe=gray!40,
    arc=2pt,
    left=5pt,
    right=5pt,
    top=4pt,
    bottom=4pt,
    fontupper=\small,
    fonttitle=\bfseries,
    coltitle=black,
    attach boxed title to top left={xshift=2mm,yshift=-2mm},
    boxed title style={
      colback=gray!40,
      colframe=gray!40,
      boxrule=0pt,
      arc=2pt,
      left=3pt,
      right=3pt,
      top=1pt,
      bottom=1pt,
    },
  }
}
\newtcbtheorem[use counter*=theorem]{dctproposition}{Proposition}{dcttheorem}{prop}

\usepackage[textsize=tiny]{todonotes}

\icmltitlerunning{Distribution-Conditioned Transport}

\begin{document}

\twocolumn[
\icmltitle{Distribution-Conditioned Transport}


\icmlsetsymbol{equal}{*}

\begin{icmlauthorlist}
\icmlauthor{Nic Fishman}{equal,harvard}
\icmlauthor{Gokul Gowri}{equal,mit}
\icmlauthor{Paolo L. B. Fischer}{equal,bidmc,hms}\\
\icmlauthor{Marinka Zitnik}{hms}
\icmlauthor{Omar Abudayyeh}{hms,bwh}
\icmlauthor{Jonathan Gootenberg}{bidmc,hms}
\end{icmlauthorlist}

\icmlaffiliation{harvard}{ Harvard University}
\icmlaffiliation{mit}{MIT}
\icmlaffiliation{hms}{Harvard Medical School}
\icmlaffiliation{bidmc}{Beth Israel Deaconess Medical Center}
\icmlaffiliation{bwh}{Brigham and Women’s Hospital}

\icmlcorrespondingauthor{Nic Fishman}{njwfish@gmail.com}
\icmlcorrespondingauthor{Gokul Gowri}{gokulg@mit.edu}

\icmlkeywords{Machine Learning, ICML}

\vskip 0.3in
]



\printAffiliationsAndNotice{\icmlEqualContribution} 

\begin{abstract}


Learning a transport model that maps a source distribution to a target distribution is a canonical problem in machine learning, but scientific applications increasingly require models that can generalize to source and target  distributions unseen during training. We introduce distribution-conditioned transport (DCT), a framework that conditions transport maps on learned embeddings of source and target distributions, enabling generalization to unseen distribution pairs. DCT also allows semi-supervised learning for distributional forecasting problems: because it learns from arbitrary distribution pairs, it can leverage distributions observed at only one condition to improve transport prediction. DCT is agnostic to the underlying transport mechanism, supporting models ranging from flow matching to distributional divergence-based models (e.g. Wasserstein, MMD).  We demonstrate the practical performance benefits of DCT on synthetic benchmarks and four applications in biology: batch effect transfer in single-cell genomics, perturbation prediction from mass cytometry data, learning clonal transcriptional dynamics in hematopoiesis, and modeling T-cell receptor sequence evolution.

\end{abstract}

\section{Introduction}
\begin{figure}
    \centering
    \includegraphics[width=0.75\linewidth]{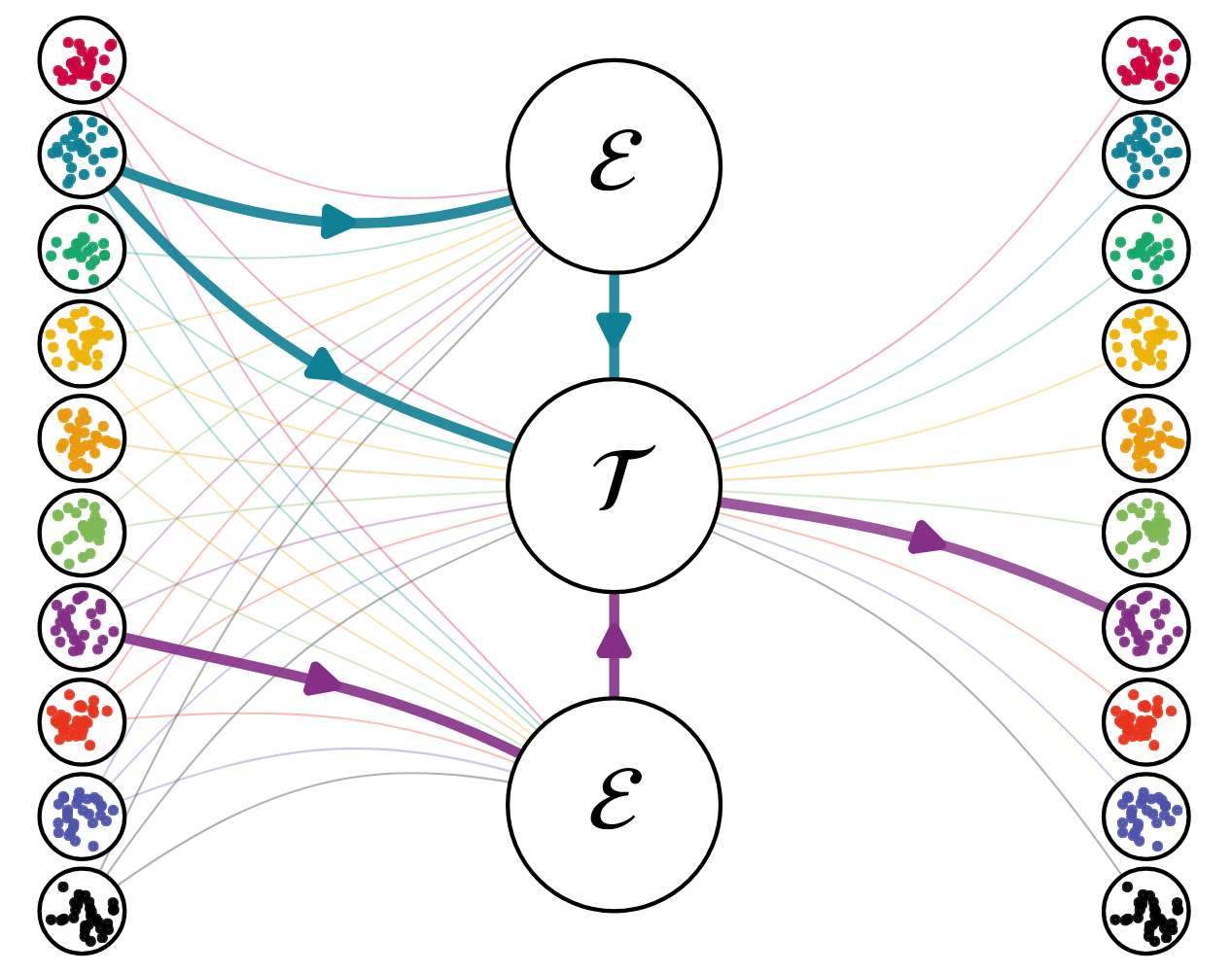}
    \caption{\textit{The distribution-conditioned transport framework.} A source distribution (teal) is pushed to a target distribution (purple) by a transport model $\mathcal{T}$ which is conditioned on distribution embeddings learned by an encoder $\mathcal{E}$. The learned transport map is universal in the sense that any distribution can in principle be pushed to any distribution by conditioning on the corresponding source and target embeddings.}
    \label{fig:schematic}
\end{figure}

Modern datasets in the sciences increasingly have multiscale structure. An example of this complexity can be seen in single-cell RNA-sequencing (scRNA-seq) data. Rather than making measurements of cells from a single experimental condition, modern datasets sample cells from many distinct donors \cite{Yazar2022-bo}, timepoints \cite{Qiu2024-sb}, perturbations \cite{zapatero2023trellis}, or clones \cite{Weinreb2020-vh}, each of which can be thought of as inducing its own distribution over cell states. Experimental techniques such as clonal lineage-traced genomics have enabled the generation of datasets containing snapshots of thousands of distinct populations evolving in parallel \cite{biddy2018single,Weinreb2020-vh,wagner2020lineage}. These datasets can be sparse: not all time-marginals are observed for all populations. For example, some populations may be observed at both initial and final times $t_0$ and $t_1$, while others are ``orphans'' observed at only a single timepoint.

This multiscale structure can be formalized as a hierarchical model. Suppose we have sets of samples (e.g., cell gene expression profiles) from $n$ different conditions (e.g., donors). For each condition $i$, we observe a set of $m$ samples $S_i = \{x_{ij}\}_{j=1}^m$, where each sample is a vector $x_{ij} \in \mathcal{X}$. Each condition induces a distribution $P_i$, and we can think of these condition-specific distributions as being drawn from a shared metadistribution $Q$ over the space of probability measures $\mathcal{P}(\mathcal{X})$:
$x_{ij} \sim P_i$ and $P_i \sim Q$. This induces a learning problem over $P(\mathcal{X})$, where each distribution $P_i$ is a task drawn from $Q$. The hierarchical dataset consists of these $n$ sample sets: $\mathcal{D} = \{S_i\}_{i=1}^n$.

Many practical challenges in analyzing these hierarchical datasets can be framed as transport problems. For example, in scRNA-seq data, one often is interested in how a distribution would appear under different technical effects (batch integration), how distributions change in response to unseen treatment conditions (perturbation prediction), or forecasting how distributions evolve (dynamic inference). While a wide variety of tools exist to model transport between a single pair of distributions \cite{goodfellow2014gan,rezende2015nf,genevay2018sinkhorn,liu2022rectified_flow,lipman2023flow_matching,albergo2023stochastic_interpolants}, these hierarchical datasets and distributional tasks require transport models which can generalize across a broad range of distributions.

Several lines of work have begun to develop tools for this setting. Multimarginal stochastic interpolants learn dynamics between any pair of a fixed set of $K$ distributions, solving a $K$-to-$K$ transport problem \citep{albergo2023multimarginal}, but cannot condition on a continuous space of distributions nor generalize to distributions unseen during training. In another direction, \citet{fishman2025generative} develop approaches to learn ``autoencoders'' on the space of distributions by simultaneously learning to embed distributions and sample from the distribution conditional on that embedding. Meta flow matching (MFM) \citep{Atanackovic2024-ml} couples a population encoder with a flow-matching transport map for generalization across source populations. While effective in settings where many source-target pairs are available, the MFM framework cannot ingest unpaired marginal distributions (e.g., cell populations observed at a single timepoint), effectively wasting these data.

In this work, we introduce \textit{distribution-conditioned transport} (DCT), a framework for solving transport problems on the space of distributions that unifies and generalizes these perspectives. We show how generative distribution embeddings (GDE) developed in \citet{fishman2025generative} can be coupled with a broad class of transport models for source-conditioned transport, extending GDEs to any-to-any transport by conditioning on both source and target distribution embeddings. This formulation enables us to generalize across distributions, predicting transport maps between population pairs unseen during training, as well as allowing us to make use of unstructured, partial observations such as orphan marginals. Concretely, our framework addresses three classes of practically relevant problems:

\textbf{Supervised (one-to-one) transport} between prescribed pairs of source and target distributions. For example, in clonal lineage-traced scRNA-seq data we observe many clonal populations at multiple timepoints. Each clone has a pair of population-level observations $(P_i^{t=1}, P_i^{t=2})$, and the task of interest is to forecast $P_i^{t=2}$ given $P_i^{t=1}$.

\textbf{Unsupervised (any-to-any) transport} between any pair of distributions. In some cases, we may want to transport between any pair of distributions at test time, with no particular source-target coupling. For example, in scRNA-seq batch integration, we wish to transport cells between any pair of experimental batches, enabling any-to-any batch correction even for experimental batches unseen during training.

\textbf{Semi-supervised transport} with access to orphan marginals. For example, real lineage tracing data often contains incomplete time-series with many populations observed at only a single timepoint \cite{Weinreb2020-vh}. Our goal is still to forecast $P_i^{t=2}$ for any $P_i^{t=1}$ while making use of these orphan marginals.

In the following section, we will show how distribution-conditioned transport provides a framework for addressing each of these problem settings. First, we will show that supervised transport problems can be solved by source-conditioned transport models, formalizing and generalizing the approach of Meta Flow Matching \cite{Atanackovic2024-ml}. Then, we will introduce source-target-conditioned transport models for the unsupervised transport setting, which subsumes the setting addressed by multimarginal stochastic interpolants \cite{albergo2023multimarginal}. Finally, we will show how these two perspectives can be combined to solve the semi-supervised transport problem when one has access to sparse observations of population pairs. We demonstrate the effectiveness of our approach on synthetic benchmarks and real-world applications in biology. The codebase is available \href{https://github.com/njwfish/DistributionConditionedTransport}{here}.

\section{Methods}\label{sec:methods}


\subsection{Distribution encoders}
\label{subsec:encoders}

We follow \citet{fishman2025generative}, defining a \emph{distribution encoder}
\(
\mathcal E : S_{i} \mapsto z_{i} \in \mathbb R^d,
\)
which produces a fixed-dimensional embedding $z_{i}$ intended to summarize the entire distribution $P_{i}$, rather than any particular sample (e.g a single cell). The key aim of distribution encoders is that $z_{i}$ reflects only the underlying distributional signal and not sampling noise in $S_{i}$. To enforce this the encoder must be \emph{distributionally invariant}: (i) \emph{permutation invariant}, so that reordering samples does not change $\mathcal E(S_i)$, and (ii) \emph{proportionally invariant}, so that uniformly duplicating samples does not change $\mathcal E(S_i)$.

When these hold, the encoder depends only on the empirical measure
\[
\widehat P_i \;=\; \frac{1}{m_i}\,{\displaystyle\sum\nolimits_{j=1}^{m_i}} \delta_{x_{ij}}.
\]
In particular, there exists a measurable functional $\phi$ such that
\(
\mathcal E(S_{i})
=
\phi(\widehat P_{i}).
\label{eq:encoder_phi}
\)
We refer to $z_{i} = \phi(\widehat P_{i,m})$ as a \emph{distribution embedding}.
A key property of such encoders is that they admit a central limit theorem (CLT). 
Let
\[
z^\star_{i}
=
\lim_{m_{i} \to \infty}
\phi\!\left(\widehat P_{i}\right) = \phi\!\left(P_{i}\right)
\]
denote the population-level embedding obtained in the infinite-sample limit.
Under the invariances above and Hadamard differentiability of $\phi$, we have
\begin{equation}
\sqrt{m_{i}}\,
\bigl(
    \mathcal E(S_{i}) - z^\star_{i}
\bigr)
\;\xrightarrow{\;\; d \;\;}\;
\mathcal N(0, \Sigma_{i})
\label{eq:clt}
\end{equation}
for a covariance $\Sigma_{i}$ depending on $P_{i}$ and the encoder $\mathcal{E}$.

For our purposes, this CLT is the most important feature of distribution encoders because, as proved in \cite{fishman2025generative}, it implies that for any sufficiently smooth downstream objective we can train using moderate-sized subsamples and recover population-level behavior up to $O(m^{-1/2})$ error. We formalize this plug-in limit theory for DCT in Prop.~\ref{prop:transport_plugin_clt} and App.~\ref{app:dct_theory}.

\subsection{Supervised (one-to-one) transport} 
\label{subsec:supervised}

A key setting for distributional transport is when we observe ``pairs'' of distributions. For example, in perturbation prediction, we observe cell populations of various types (e.g., cell lines, donors) in their untreated and perturbed states. Here, the goal is to transport untreated cell populations into their perturbed condition. This transport setting is analogous to supervised learning: we observe many source sets of cells and their corresponding target distribution, and we want to learn a one-to-one map from the source to the target. We show that these supervised problems can be addressed by conditioning transport maps on embeddings of the source distribution. We refer to these as source-conditioned transport models.


A number of recent works have developed particular instances source-conditioned models such as \cite{Atanackovic2024-ml,klein2025cellflow} in the context of flow matching or \cite{he2025morph} using a direct MMD objective. Our notion of source-conditioning generalizes these under a common framework and clarifies the underlying assumptions under which they enable distributional transport.

Formally, we will assume throughout this section that we have a distribution of pairs sampled from some meta distribution (e.g., control cell populations and their corresponding perturbed populations)
\[(P_{1}^{\mathrm{src}}, P_{1}^{\mathrm{tgt}}), \dots (P_{n}^{\mathrm{src}}, P_{n}^{\mathrm{tgt}}) \sim Q_{\text{joint}},\]
and we observe empirical sets
\small
\[
S_{i}^{\mathrm{src}}=\left\{x_{ij}^{\mathrm{src}}\right\}_{j=1}^{m_{i}^{\mathrm{src}}}\sim P_{i}^{\mathrm{src}} \text{ and } S_{i}^{\mathrm{tgt}}=\left\{x_{ij}^{\mathrm{tgt}}\right\}_{j=1}^{m_{i}^{\mathrm{tgt}}}\sim P_{i}^{\mathrm{tgt}}.
\]
\normalsize
We encode the \emph{source only}, so
\(
z_{i}^{\mathrm{src}}=\mathcal E\left(S_{i}^{\mathrm{src}}\right),
\)
where $\mathcal E$ is a distribution encoder as laid out in Sec.~\ref{subsec:encoders}.
We can then learn a source-conditioned transport map acting on individual samples,
\begin{equation*}
\mathcal T:\ \mathcal X\times\mathbb R^d\to\mathcal X,
\quad
x_{i}^\mathrm{src} \mapsto \hat x_{i}^\mathrm{tgt}
=
\mathcal T\bigl(x_{i}^\mathrm{src} \mid z_\mathrm{src}\bigr),
\end{equation*}
so that transported samples asymptotically follow the target distribution:
\begin{equation}
\mathcal T(S_{i}^\mathrm{src}\mid \mathcal E(S_{i}^\mathrm{src}))
\ \xrightarrow[m_{i}^\mathrm{src}\longrightarrow\infty]{\;\; d \;\;} 
P_{i}^\mathrm{tgt}
\label{eq:sc_limit}
\end{equation}
Here $\mathcal T$ can be any conditional distributional transport model, and under a supervised pairing policy, the correct destination $P_{i}^\mathrm{tgt}$ is implied once the source is specified.

We jointly fit $\mathcal E$ and $\mathcal T$ using the native loss of the chosen transport mechanism, written abstractly as
\begin{equation}
\mathcal L_{sc}
=
\ell\!\left(
S_i^{\mathrm{tgt}},
\ \mathcal T(S_{i}^\mathrm{src} \mid \mathcal{E}(S_{i}^\mathrm{src}))
\right),
\label{eq:supervised_loss_transport}
\end{equation}
where $\ell$ may be a flow-matching objective \cite{Atanackovic2024-ml}, a distributional divergence (e.g.\ Sinkhorn/MMD) \cite{he2025morph}, or any other generative transport model (see Sec \ref{sec:transport_models}).
Because plug-in losses are mean consistent and inherit CLTs (Prop.~\ref{prop:transport_plugin_clt} and App.~\ref{app:dct_theory}; see also Cor.~\ref{cor:unbiased_training}), we train with minibatches from $S_\mathrm{src}$ and $S_\mathrm{tgt}$ without biasing the population-level objective in the large-$m$ limit (Alg.~\ref{alg:dct}). After training, $\mathcal T(\cdot\mid \mathcal E(\cdot))$ can be applied to unseen source samples to generate counterfactual realizations under their implied targets.

Our notion of a \emph{transport map} (see Sec.~\ref{sec:transport_models}) is intentionally broad and includes models that, in some regimes, behave more like conditional generation than a coupling-selecting pushforward. In particular, conditional generative models with access to additional noise can achieve Eqn.~\ref{eq:sc_limit} and match the target distributions while ignoring the source sample $x_i^{\mathrm{src}}$ they are supposed to condition on. We prove this in App.~\ref{app:when_source_matters} and provide a practical alignment diagnostic to detect this failure mode (Tab.~\ref{tab:source_sample_alignment}). Importantly, this is a property of the underlying mechanism/objective rather than the DCT conditioning framework: DCT enables models to condition on distributions, but does not impose a within-distribution coupling structure beyond what the transport map would learn between a single pair of distributions. 

\subsection{Unsupervised (any-to-any) transport}
\label{subsec:anytoany}

In some settings, we wish to transport not between specific pairs of source-target distributions, but rather any pair of distributions. For example, in batch integration for scRNA-seq data, we are interested in the counterfactual of what one dataset would look like under a different experimental batch condition -- and this counterfactual is of interest for any pair of experimental batches.

In other settings, we are interested in particular source-target pairings, but we do not have access to many true pairs. Returning to our running lineage tracing case, in many datasets, we may only observe the same clone in both timepoints a small fraction of the time. In \citet{Weinreb2020-vh}, only about 10\% of clones are observed at multiple timepoints. Here, we would ideally like a model to also make use of the abundance of unpaired data.

To address both of these settings, we develop an ``unsupervised'' analogue to the ``supervised'' transport setting considered above. In this unsupervised setting, we simply have a set of unstructured distributions: $P_1, \dots P_n \sim Q$ and associated samples:
\(S_i = \left\{x_{ij}\right\}_{j=1}^{m_i} \sim P_i\).

Our goal is to develop a model capable of learning to transport between any two distributions $i$ and $i'$. Multimarginal stochastic interpolants \citep{albergo2023multimarginal} developed a class of models that can flow between any pair of distributions from a fixed initial set of $K$ distributions by effectively embedding each distribution as a corner of the $K$-simplex and conditioning on the source and target corners. Our notion of source-target conditioned transport models leverages distribution encoders to learn distribution embeddings, enabling us to generalize to a new $(K+1)^{\text{th}}$ distribution, and to embed a continuous number of distributions. 
\begin{equation*}
\mathcal T:\ \mathcal X\times\mathbb R^d\times\mathbb R^d\to\mathcal X,
\;
x_i \mapsto \widehat x_{i\to i'}
=
\mathcal T\!\bigl(x_i \mid z_i, z_{i'} \bigr),
\end{equation*}
to satisfy
\begin{equation}
\mathcal T(S_i \mid \mathcal E(S_i), \mathcal E(S_{i'}))
\ \xrightarrow[m_i,m_{i'}\longrightarrow\infty]{\;\; d \;\;}\ 
P_{i'}
\label{eq:any_limit}
\end{equation}
Then, using the same plug-in loss logic (Prop.~\ref{prop:transport_plugin_clt}), we can train on samples using the objective conditioning on the encoder for the transport map of choice
\begin{equation}
\mathcal L_{\mathrm{stc}}
=
\ell\left(
S_{i'},
\mathcal T(S_{i} \mid \mathcal E(S_i), \mathcal E(S_{i'}))
\right)
\label{eq:any_loss}
\end{equation}

An important point is that our Alg. \ref{alg:dct} does not incur any quadratic cost \textit{per gradient step}, even when training on all pairs of distributions. The computational efficiency is the same as the base transport model, up to a forward pass of the distribution encoder.

\subsection{Semi-supervised transport}\label{subsec:semisup_methods} 

An important use of the unsupervised transport modelling objective is in partial supervision: we want to use all available data to learn a high-quality transport model, but at test time, we are concerned with prediction of a specific target distribution for each source. Source-target-conditioned models can be applied in this semi-supervised setting through two practical adaptations: (1) latent predictions of target distribution embeddings and (2) sampling distribution pairs during training (i.e., choosing $Q_{\text{joint}}$) in a way that aligns with the supervised task at hand.



To make this concrete, we return to the lineage tracing case where we want to leverage all the clones, but the test-time objective is still clonal fate prediction, as in the supervised setting. To convert the any-to-any model into a ``supervised'' model we can fit a lightweight (e.g., linear) model to predict $z_{\mathrm{src}} \mapsto z_{\mathrm{tgt}}$ using the subset of available $(\mathrm{src},\mathrm{tgt})$ pairs for the task of interest; we then evaluate $\mathcal T(\cdot \mid z_\mathrm{src}, \widehat z_\mathrm{tgt})$ to generate counterfactual samples ``as if'' drawn from $P_\mathrm{tgt}$ in a semi-supervised manner. 

While the source-target conditioned model provides an interface for any-to-any transport, we need not train on all possible pairs. In some scenarios, only certain pairs are of interest. For example, in lineage tracing, it makes more sense to learn forward-time transport between clones ($t_1 \to t_2$) rather than all pairs including potentially uninformative within-timepoint and reverse-time transport. In general, we can define a meta-distribution $Q_{\text{joint}}$ over source-target pairs, allowing practitioners to encode domain-specific structure into the model. 


\begin{algorithm}[tb]
   \caption{Training distribution-conditioned models}
   \label{alg:dct}
\begin{algorithmic}[1]
   \STATE {\bfseries Input:} Dataset $\mathcal{D}$, loss $\ell$, metadistribution $Q_{\text{joint}}$
   \FOR{each training step}
      \STATE Sample indices $(u, v) \sim Q_{\text{joint}}$
      \STATE Subsample $\hat{S}_u \subset S_u, \hat{S}_v \subset S_v$ \GrayComment{\small justified by Prop. \ref{prop:transport_plugin_clt}}
\STATE \GrayComment{\small embed distributions and compute loss}
\STATE \hspace{-2pt}%
   {\footnotesize
   \begin{tabular}[t]{@{}l@{\quad}|@{\quad}l@{}}
   \textit{source-conditioned} & \textit{source-target-conditioned} \\[2pt]
   $z_u \gets \mathcal{E}(\hat S_u)$ & $z_u, z_v \gets \mathcal{E}(\hat S_u), \mathcal{E}(\hat S_v)$ \\[2pt]
   $\mathcal{L} \gets \ell(\hat S_v, \mathcal{T}(\hat{S}_u \mid z_u))$ & 
   $\mathcal{L} \gets \ell(\hat S_v, \mathcal{T}(\hat{S}_u \mid z_u, z_v))$
   \end{tabular}}
      \STATE Backpropagate $\mathcal{L}$
   \ENDFOR
\end{algorithmic}
\end{algorithm}

\begin{dctproposition}{Loss CLT (informal)}{transport_plugin_clt}
Fix indices $(u,v)$. Let $z_u=\mathcal E(\hat S_u)$ and $z_v=\mathcal E(\hat S_v)$ be embeddings from minibatches of sizes $m$ and $m'$, with limits $z_u^\star=\phi(P_u)$ and $z_v^\star=\phi(P_v)$.
Under the assumptions of \citet{fishman2025generative} and the sampling schemes in App.~\ref{app:dct_theory},
\[
\bigl(\sqrt m(z_u-z_u^\star),\,\sqrt {m'}(z_v-z_v^\star)\bigr)
\ \xRightarrow{\;d\;}\ 
\mathcal N\!\bigl(0,\Sigma_{uv}\bigr),
\]
where $\Sigma_{uv}=\mathrm{diag}(\Sigma_u,\Sigma_v)$ for independent minibatches (product coupling) and may have off-diagonal blocks for paired/coupled minibatches.
Consequently, losses are mean consistent and admit plug-in CLTs; see App.~\ref{app:dct_theory}.
\end{dctproposition}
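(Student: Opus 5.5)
The approach is to reduce the joint statement to the single-encoder CLT \eqref{eq:clt} of \citet{fishman2025generative}, using a functional delta method at the level of empirical processes. For a finite-dimensional bridge, assume as in \citet{fishman2025generative} that $\phi$ is Hadamard differentiable at $P_u$ tangentially to a suitable class $\mathcal F$. The empirical process $\mathbb G_m^u=\sqrt m(\widehat P_u-P_u)$ converges weakly in $\ell^\infty(\mathcal F)$ to a $P_u$-Brownian bridge $\mathbb G_u$, provided $\mathcal F$ is $P_u$-Donsker. The delta method then gives
\[
\sqrt m\,(z_u-z_u^\star)=\phi'_{P_u}(\mathbb G_m^u)+o_P(1),
\]
with $\phi'_{P_u}$ continuous and linear. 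The same holds for $v$ with $m'$.

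\textbf{Joint convergence.} The key step is to show that the pair of empirical processes $(\mathbb G_m^u,\mathbb G_{m'}^v)$ converges jointly in $\ell^\infty(\mathcal F)\times\ell^\infty(\mathcal F)$.

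\emph{Product coupling.} Here $\hat S_u$ and $\hat S_v$ are independent. Each process is asymptotically tight, so the pair is asymptotically tight, and the limit is the product measure $\mathbb G_u\otimes\mathbb G_v$. Applying the continuous linear map $(\phi'_{P_u},\phi'_{P_v})$ via the continuous mapping theorem yields a centered Gaussian limit. Its covariance is block diagonal with $\Sigma_u=\mathrm{Cov}\,\phi'_{P_u}(\mathbb G_u)$ and $\Sigma_v$ defined likewise.

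\emph{Coupled minibatches.} Here pairs $(x_j^u,x_j^v)$ are drawn i.i.d.\ from a coupling $\pi\in\Pi(P_u,P_v)$; this is the paired scheme of App.~\ref{app:dct_theory}, with $m=m'$ or $m/m'\to\lambda\in(0,\infty)$ using an overlapping index set. Stack the function class as $\{(x,y)\mapsto f(x)\}\cup\{(x,y)\mapsto g(y)\}$ on $\mathcal X\times\mathcal X$. This union of two Donsker classes is $\pi$-Donsker, so the stacked process converges to a single $\pi$-Brownian bridge. Its cross-covariance is $\mathrm{Cov}_\pi(f(X),g(Y))$. Pushing this through the derivative gives off-diagonal blocks
\[
\Sigma_{uv}^{12}=\mathrm{Cov}_\pi\bigl(\psi_u(X),\psi_v(Y)\bigr),
\]
where $\psi_u,\psi_v$ are the influence functions of $\phi$. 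When $m\neq m'$ the rescaling introduces a factor $\sqrt{\lambda}$ or its inverse on the shared indices.

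\textbf{Consequences for losses.} Take a loss $L(\theta;z_u,z_v)$ that is differentiable in the embeddings, so that the sample losses are plug-in functionals $\Lambda(\widehat P_u,\widehat P_v)$ which are Hadamard differentiable. A second application of the delta method (chain rule) then gives a CLT for the loss with variance $\nabla L^\top\Sigma_{uv}\nabla L$. For mean consistency, add uniform integrability, e.g.\ a bounded loss gradient or a moment bound on $\sqrt m(z-z^\star)$. Then the first-order term has mean zero and $\mathbb E[L(z_u,z_v)]=L(z_u^\star,z_v^\star)+O(m^{-1})$ if $L$ is $C^2$, and at least $o(m^{-1/2})$ otherwise. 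Sample-level terms in $\ell$ that average over $\hat S_v$ are U/V-statistics and are handled by the same joint empirical-process argument.

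\textbf{Main obstacle.} I expect the hardest part to be the coupled case with unequal batch sizes and dependent sampling schemes, e.g.\ subsampling without replacement from finite $S_u$. To handle it, I would first try to reduce to the i.i.d.\ coupled case using Hájek's finite-population CLT for subsamples. Failing that, I would simply state the off-diagonal block as the limit of $\mathrm{Cov}(\sqrt m\,\bar\psi_u,\sqrt{m'}\,\bar\psi_v)$, assuming it exists, and verify a Lindeberg condition for the linearized sum directly.
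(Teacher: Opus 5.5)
Your argument is sound for the stated result, but it works at a different level than the paper's. The paper's formal version (Thm.~\ref{thm:plugin_loss_clts}) simply \emph{assumes} asymptotic linearity of the encoder, $\sqrt m(z_u-z_u^\star)=m^{-1/2}\sum_i\psi_u(X_{ui})+o_{\mathbb P}(1)$ (Assump.~(C.1) of \citet{fishman2025generative}). Given that assumption, the joint statement reduces to the finite-dimensional multivariate CLT for the i.i.d.\ vectors $(\psi_u(X_{ui}),\psi_v(X_{vi}))$ in the paired case, and to two independent sums in the product case. The loss CLTs then follow from the ordinary delta method applied to $z\mapsto\ell^{\mathrm{sc}}(x,z)$ and $(z,z')\mapsto\ell^{\mathrm{stc}}(x,z,z')$ at a fixed $x$. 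The paper adds a remark that reusing finitely many points of $\hat S_u$ changes only finitely many summands.

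You instead derive the linearization from Hadamard differentiability and a Donsker class. You then get joint convergence at the empirical-process level: independence for the product coupling, and a stacked $\pi$-Donsker class on $\mathcal X\times\mathcal X$ for the paired one. This buys you three things:
\begin{itemize}
\item you start from the hypothesis the main text actually states;
\item you cover coupled minibatches with $m\neq m'$, whereas the paper's scheme (B) fixes $m'=m$;
\item through $\Lambda(\widehat P_u,\widehat P_v)$ you target the set-level Energy/SWD losses actually trained, not just a pointwise $\ell(x,z)$.
\end{itemize}
The costs are heavier machinery and a U/V-statistic extension that is only asserted. Also, writing the off-diagonal block as $\mathrm{Cov}_\pi(\psi_u(X),\psi_v(Y))$ needs an extra step: you must show $\phi'_P(\mathbb G_P)$ is represented by an influence function in $L^2_0(P)$. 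This is true for a continuous linear functional of a tight Gaussian process, but it is not free, whereas the paper has $\psi_u,\psi_v$ by assumption. Alternatively, you can simply leave the block as $\mathrm{Cov}(\phi'_{P_u}(\mathbb G_u),\phi'_{P_v}(\mathbb G_v))$ under the stacked limit. With a common $\sqrt m$ normalization, the $v$-block of your loss variance should also carry $\rho=\lim m/m'$, as in the paper's (iii). Your without-replacement worry lies outside the paper's claim, since schemes (A)/(B) model minibatches as i.i.d.\ draws from $P_u,P_v$.

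One step in your mean-consistency paragraph is wrong: the first-order term $\nabla L^\top(z-z^\star)$ does not have mean zero. Only the linear part $m^{-1}\sum_i\psi(X_i)$ is centered. The $o_{\mathbb P}(m^{-1/2})$ linearization remainder can contribute bias of any order up to $o(m^{-1/2})$, so $C^2$ smoothness of $L$ alone does not give an $O(m^{-1})$ bias. The paper's Cor.~\ref{cor:unbiased_training} gets $O(m^{-1})$ only after additionally assuming $\mathbb E[z_u-z_u^\star]=O(m^{-1})$ and $\mathbb E[z_v-z_v^\star]=O({m'}^{-1})$. Mean consistency itself, which is all the statement claims, survives. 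Under your uniform integrability, $\mathbb E[\sqrt m(z-z^\star)]\to 0$, so the bias is still $o(m^{-1/2})$; the paper reaches the same conclusion from local Lipschitzness plus second-moment bounds.
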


\section{Related work}

\subsection{Distribution embeddings} 
Distribution conditioned transport relies on a vector representation of a distribution, or a distribution embedding. Kernel methods, including kernel mean embeddings, provide a toolkit for representing probability measures as points in a reproducing kernel Hilbert space \cite{Smola2007-bs, Muandet2012-id, Oliva2013-ij, Szabo2015-ss, Muandet2017-np}. In parallel, others have explored learning distribution embeddings which preserve Wasserstein distances between distributions \cite{Haviv2024-qj}, in analogy to multidimensional scaling. Our distribution encoders follows the formal framework in \citet{fishman2025generative} to learn distributionally-invariant embeddings by leveraging permutation-invariant architectures with a particular class of pooling operators \cite{Zaheer2017-tu, Wagstaff2021-xj, Zhang2022-ah}.

\subsection{Transport models}\label{sec:transport_models}
A range of generative transport mechanisms can be used as the conditional map
$\mathcal T(\cdot \mid z)$ or $\mathcal T(\cdot \mid z,z')$ in our framework.
Classical adversarial approaches learn source--target mappings via discriminators, including
GANs \cite{goodfellow2014gan} and Wasserstein GANs \cite{arjovsky2017wgan}.
We can also transport distributions by minimizing distributional divergences \cite{gneiting2007strictly,li2015gmmn,sutherland2016generative,genevay2018sinkhorn,kolouri2019generalized}.
Normalizing flows provide invertible transport parameterizations through continuous change-of-variable models \cite{rezende2015nf}.
Continuous-time flow-based models, such as flow matching, rectified flows, and stochastic interpolants
\cite{lipman2023flow_matching,liu2022rectified_flow,albergo2023stochastic_interpolants},
define transport by learning a velocity field along a path interpolating between source and target.

On discrete sequences, unsupervised machine translation methods
\cite{lample2018unsupervised_mt,artetxe2018unsupervised_nmt,lample2018phrase_neural_umt}
learn bidirectional maps between languages using only monolingual corpora, providing a discrete analogue of our unpaired any-to-any transport between distributions. More recently, flow-matching has been extended to the simplex and general discrete domains 
\cite{gat2024discreteflowmatching,cheng2024categoricalflowmatching,davis2024fisherflow}.

All of these models define mappings that take samples from a source distribution and produce samples matching a target distribution.
Our framework is orthogonal to the specific transport mechanism: any such model can be conditioned on distribution embeddings to yield a source-conditioned or source--target--conditioned transport map that satisfies the population-level limits in Eqn.~\eqref{eq:sc_limit} and Eqn.~\eqref{eq:any_limit}.

\subsection{Multi-distribution transport} 
Meta Flow Matching \cite{Atanackovic2024-ml} develops the idea of source-conditioning for learning transport maps across contexts. Our notion of source-conditioned models formalizes and generalizes their approach to a broader class of transport models, proving the conditions under which we can expect it to achieve Eqn.~\ref{eq:sc_limit}. A number of recent cellular perturbation response prediction methods are also effectively source-conditioned models \cite{klein2025cellflow,he2025morph,adduri2025predicting}. Furthermore, these approaches all assume access to many paired source–target distributions, whereas source-target-conditioned transport supports arbitrary pairings and can incorporate orphan marginals. 


Multimarginal stochastic interpolants (MMSI) studies a form of any-to-any transport between a fixed set of $K$ distributions \cite{albergo2023multimarginal}. This is a particular instance of the unsupervised modelling problem we consider, where $Q$ is chosen to cover a discrete set of distributions that are all seen at training time. 
Additionally, MMSI enforces that transport paths go through all $K$ distributions to learn a barycenter, but in DCT we do not impose any constraint. 

Style-transfer and unpaired image-translation methods provide a complementary line of work. Approaches such as CycleGAN, StarGAN, and MUNIT \citep{zhu2017cyclegan,choi2018stargan,huang2018munit} condition the generator on a domain identifier, typically a one-hot categorical label or an instance-level style code. These approaches are restricted to a fixed, finite set of domains and cannot generalize transport to a new domain without retraining.


\subsection{Transport models for biological data}

The family of transport models described above are widely used in biology. They have attracted significant recent interest for at least two problems: predicting cellular perturbation responses from scRNA-seq, imaging, and mass cytometry data \cite{Atanackovic2024-ml, klein2025cellflow, adduri2025predicting, he2025morph, Bunne2023-cd}, and inferring cellular dynamics from snapshot observations \cite{Schiebinger2019-pu, Lavenant2021-pb, Tong2023-xg, Tong2023-ye, Vinyard2025-rq}.

\section{Unsupervised Transport}

Using both synthetic data and real scRNA-seq data, we will show that source-target-conditioned DCTs can address the any-to-any unsupervised transport problem in practice. We will compare source-target-conditioned DCTs using a mean pooled deep-sets encoder (see. App.~\ref{app:standard_encoder}) to $K$-to-$K$ models implemented using a ``one-hot'' $\mathcal{E}$ rather than a distributionally invariant architecture. Using the FM transport map, this is effectively a MMSI model \cite{albergo2023multimarginal}, with minor differences we discuss in App. \ref{app:baselines}. To enable test-time generalization, $K$-to-$K$ models are conditioned on the most similar training set distributions, determined by nearest-neighbor over distribution centroids. We implement both approaches across sliced Wasserstein distance (SWD) models, Energy score models, and flow matching (FM) models (see App. \ref{app:standard_generators}), and evaluate using MMD in the main text with SWD and Energy distances available for all experiments in the appendix (\ref{app:standard_metrics}).

\subsection{Gaussian transport}\label{subsec:unsup_gauss}
\begin{figure}
    \centering
    \includegraphics[width=\linewidth]{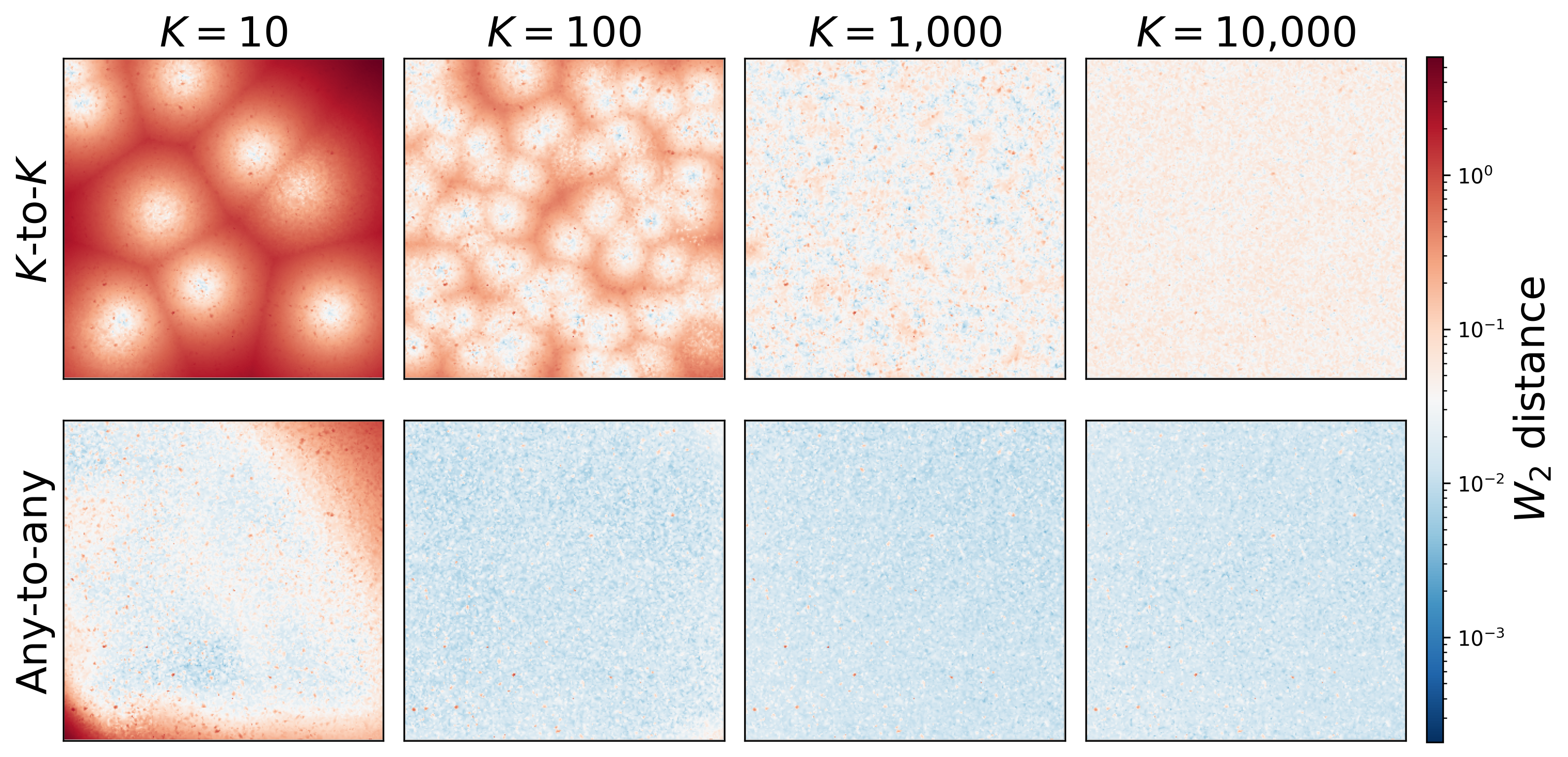}\vspace{-2em}
    \caption{\textit{Bivariate normal transport error landscape.} $K$-to-$K$ (top) vs.\ any-to-any (bottom), showing $W_2$ distance for targets $\mu \in [0,5]^2$ across $K$. The $K$-to-$K$ model predicts via nearest training distribution, yielding Voronoi-like error patterns; the any-to-any model embeds targets directly and achieves uniformly lower error across all $K$.}
    \label{fig:mvn}\vspace{-2em}
\end{figure}

We first study transport between simple distributions in two dimensions, where we can control the data-generating process and directly visualize performance. 

We construct datasets of bivariate normal (MVN) distributions and Gaussian mixture models (GMM). For MVN, we sample parameters $(\mu_i, \Sigma_i)$, $i = 1,\dots,n$, with means $\mu_i \in [0,5]^2$ drawn from a uniform prior and covariances $\Sigma_i \in \mathbb R^{2\times 2}$ drawn from an inverse–Wishart prior. For GMM, we additionally sample mixture weights from a Dirichlet prior and per-component parameters (see App.~\ref{app:gaussian_dgp} for details). We choose $K \in \{10, 10^2, 10^3, 10^4\}$ distinct parameter sets and draw $n = 50{,}000$ sample sets.

For each $K$ we train two types of model. The first is a conventional \emph{$K$-to-$K$} architecture that treats each of the $K$ distinct distributions as a discrete label (a ``corner'') and learns transport conditional on distribution-specific labels. At test time, this model cannot condition on a new target distribution, so we assign each target to its nearest training distribution. The second is our \emph{source–target conditioned} model from Sec.~\ref{subsec:anytoany}, which encodes both source and target sets via the distribution encoder $z_i = \mathcal E(S_i)$ and learns a transport map $\mathcal T(x \mid z_{\mathrm{src}}, z_{\mathrm{tgt}})$ that can operate between arbitrary pairs of distributions, including unseen targets. 


To visualize zero-shot performance, we fix a single source distribution from the training set and evaluate transport to a dense grid of target distributions, with $\mu \in [0, 5]^2$ and corresponding random covariances. Results (Figs.~\ref{fig:mvn} and \ref{fig:unsup}) show that the $K$-to-$K$ model outperforms DCT on in-distribution targets when $K$ is small, but loses this advantage as $K$ grows. And crucially, on out-of-distribution targets, DCT achieves significantly lower error: the baseline cannot extrapolate beyond its training corners, while the any-to-any model generalizes smoothly. This pattern holds not only for MVN but also GMM, suggesting that the benefits of DCT extend to complex distributions. Overall, K-to-K models specialize to distributions seen during training, while source-target-conditioned (any-to-any) models interpolate through the learned distribution embedding space to generalize to unseen targets.

\begin{figure}
    \centering
    \includegraphics[width=\linewidth]{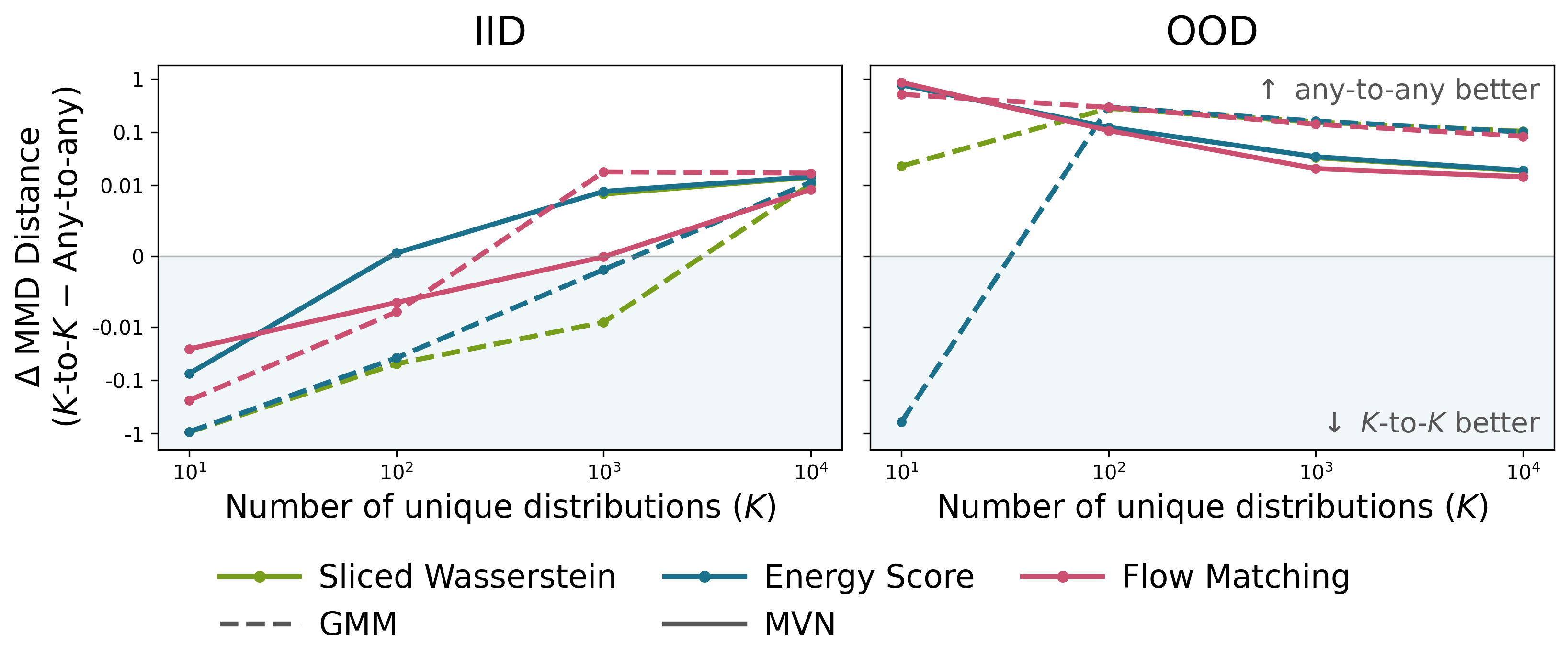}\vspace{-2em}
    \caption{\textit{Unsupervised transport model generalization.} The gap between $K$-to-$K$ based embedding and any-to-any distribution encoders across generator families, evaluated on in-distribution (IID) and out-of-distribution (OOD) test sets. Positive values indicate the distribution encoder achieves lower transport cost. At low $K$, the embedding encoder outperforms on IID data while the distribution encoder shows stronger OOD generalization.}
    \label{fig:unsup}\vspace{-1.25em}
\end{figure}

\subsection{Transferring scRNA-seq batch effects}\label{subsec:unsup_batch}

A common problem in scRNA-seq analysis is technical variation between experimental batches \cite{Luecken2022-pw}. We show that source-target-conditioned models can be used to push cells from a source batch to a target batch condition, making a prediction for how cells appear under different technical effects. This problem, which we refer to as batch effect transfer, is closely related to batch integration.

Source-target-conditioning, in principle, allows us to fit a single batch transfer model which can be applied to unseen experimental batches at test time. We demonstrate this using a real-world dataset of murine pancreas cells from 56 mice across ages and growth conditions \cite{Hrovatin2023-wd}. We hold out all mice from a single condition (old age), inducing a distribution shift.

We might expect that the $K$-to-$K$ models are not able to adapt to distribution shift, given that they can only condition on distribution identities seen at test time, while DCTs in principle may be able to encode information about the correct target distribution.

DCTs outperform $K$-to-$K$ alternatives and two commonly used methods, scVI and Harmony \cite{Korsunsky2019-ji, Lopez2018-lh} on transport of cells from held-out experimental batches (Table~\ref{tab:batch}, implementation details in App.~\ref{app:batch}). In Fig.~\ref{fig:batch}, we visualize model predictions for batch effect transfer, which similarly show that DCT predictions more closely match the target cell distribution than those of a $K$-to-$K$ model.

\begin{figure}
    \centering
    \includegraphics[width=0.8\linewidth]{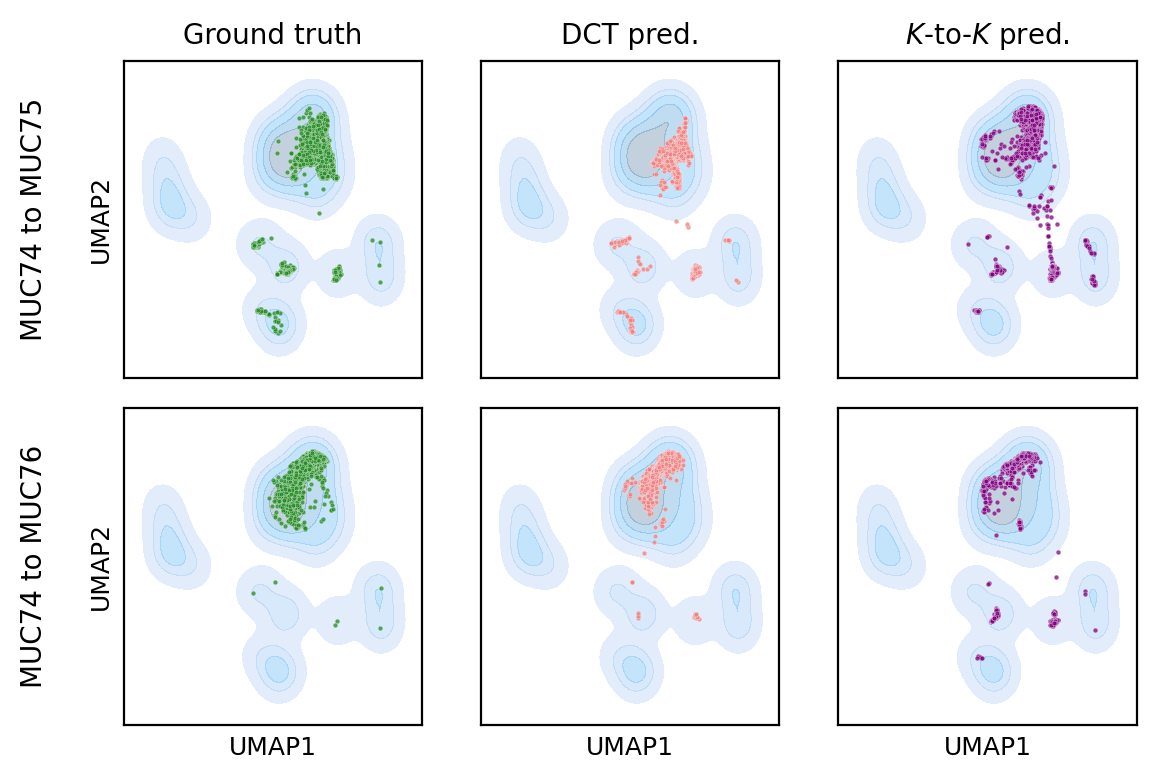}
    \caption{\textit{Batch effect transfer for two held-out donor pairs (rows).} Blue contours represent the reference density ($\sim 3\cdot10^5$ cells) computed from all 56 donors. (\textit{left}) Ground truth cells from the target donor. (\textit{center}) predictions from DCT and (\textit{right}) $K$-to-$K$ model. DCT predictions more closely match ground truth than $K$-to-$K$ predictions.} 
    \label{fig:batch}
\end{figure}

\begin{table}[t]
    \centering
    
    \begin{tabular}{lcc}
    \toprule
     Generator & $K$-to-$K$ & Any-to-any \\
    \midrule
    SWD & 0.3160{\scriptsize$\pm$0.0286} & 0.1164{\scriptsize$\pm$0.0118} \\
    Energy & 0.4000{\scriptsize$\pm$0.0453} & \textbf{0.0639}{\scriptsize$\pm$0.0046} \\
    FM & 0.2429{\scriptsize$\pm$0.0385} & 0.0733{\scriptsize$\pm$0.0119} \\
    \midrule
    scVI & \multicolumn{2}{c}{0.9074{\scriptsize$\pm$0.0698}} \\
    Harmony & \multicolumn{2}{c}{0.0903{\scriptsize$\pm$0.0086}} \\
    \bottomrule
    \end{tabular}
    \caption{\textit{Batch effect transfer quality on held-out samples.} MMD distance ($\downarrow$) averaged across all pairs of 3 held-out donors.}\vspace{-2em}
    \label{tab:batch}
\end{table}

\section{Semi-supervised Transport}\label{sec:applications}

Across synthetic data, mass cytometry data, and protein sequence data, we will show that source-target-conditioned DCTs can improve performance in the semi-supervised setting. We compare source-target-conditioned DCTs with their source-conditioned counterparts, across SWD-based, Energy-based, and FM transport maps (see App. \ref{app:standard_generators}). We evaluate using three distributional metrics (MMD, SWD, and Energy) with MMD presented in the main text and all three available in the App.~\ref{app:standard_metrics}. 

The source-conditioned models we evaluate against include the source-conditioned FM, which is identical to MFM \citep{Atanackovic2024-ml} aside from the choice of a CLT-satisfying population encoder (see App.~\ref{app:baselines} for discussion). All semi-supervised approaches use ridge regression fit on the same data as the supervised (SC) model, with the regularization selected through cross-validation. We also compare with ``oracle'' models which are STC models allowed to condition on the embedding of the true distribution. A central finding is that in real data applications we can often improve OOD generalization of supervised models by incorporating a wider set of data without explicit paired distributions (see the orphan marginals discussion in Sec.~\ref{subsec:semisup_methods}).

\subsection{Gaussian Transport}\label{subsec:semisup_gauss}

Returning to our Gaussian setting, we build a benchmark for semi-supervised learning. We define paired distributions via fixed transformations: for MVN, a simple shift $Y = X + b$; for GMM, a bimodal target with off-axis displacements. We compare three approaches: (i) a \emph{supervised} source-conditioned model trained on paired examples with means $\|\mu\|_\infty \leq 2.5$, (ii) the \emph{semi-supervised} any-to-any model from Sec.~\ref{subsec:unsup_gauss}, and (iii) an \emph{oracle} using the true target embedding. All models are evaluated across $\|\mu\|_\infty \leq 5$, testing generalization beyond the supervised training support.

\begin{figure}[t]
    \centering
    \includegraphics[width=\linewidth]{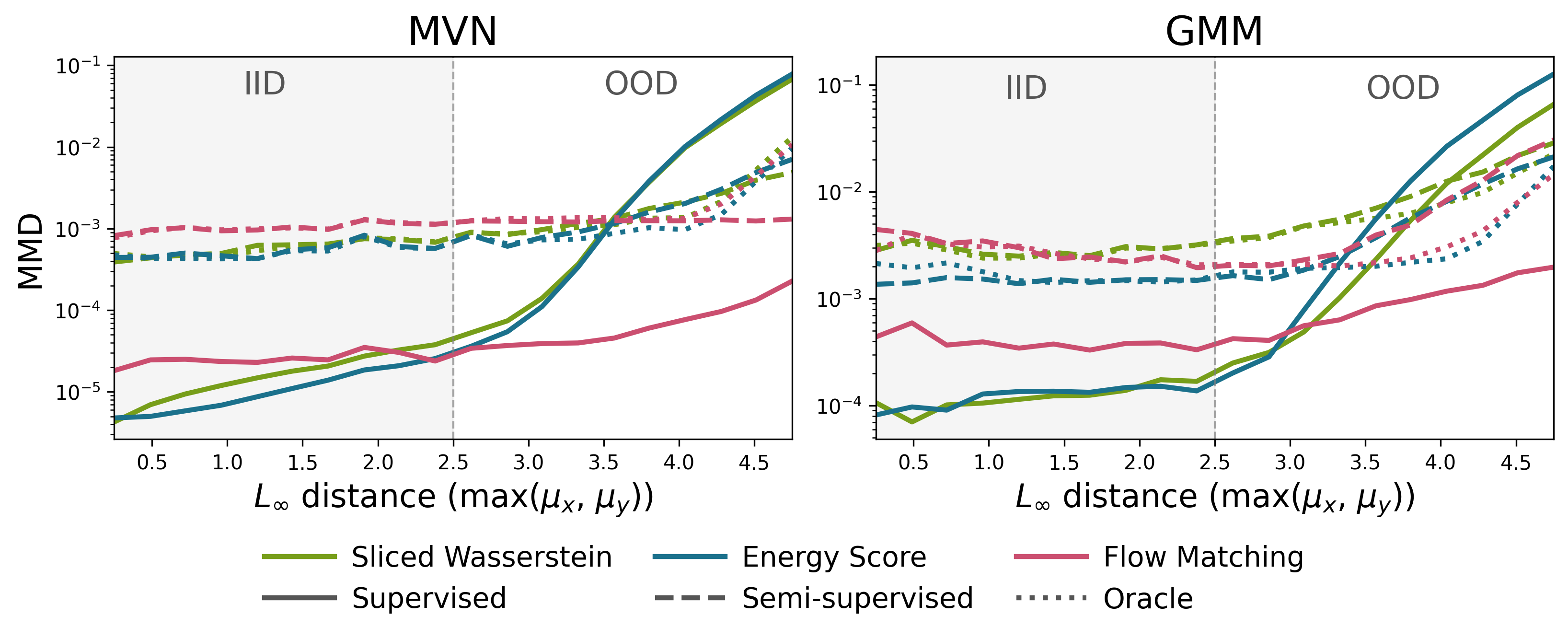}\vspace{-1em}
    \caption{\textit{Semi-supervised transport model generalization.} Supervised models are trained on distributions with $\|\mu\|_\infty \leq 2.5$ (shaded) and evaluated across $\|\mu\|_\infty \leq 5$. Supervised models (red) degrade sharply outside the training support (except flow matching models), while semi-supervised models (blue) maintain stable performance approaching the oracle (green). Results shown for multivariate normal (left) and Gaussian mixture (right).}
    \label{fig:semisup_gaussian}\vspace{-2em}
\end{figure}

Figure~\ref{fig:semisup_gaussian} shows energy distance as a function of $L^\infty$ distance for both settings. Within the training support, supervised and semi-supervised methods achieve comparable performance. Beyond it, the semi-supervised approach maintains substantially lower error: the any-to-any objective learns distributional structure that enables extrapolation even when the latent predictor must generalize. The flow matching model is an exception here: it generalizes very well outside its support. The extremely strong performance of FM models here may be connected to the tightly coupled transport maps learned by FM (see App.~\ref{app:when_source_matters}), but this performance is not replicated in any of our real-world experiments. These results show that semi-supervised STC improves extrapolation beyond the supervised support by leveraging unpaired distributions through the any-to-any objective.


\subsection{Drug perturbation prediction on organoids}\label{subsec:trellis}

We evaluate DCTs on a single-cell mass-cytometry drug screen from \citet{zapatero2023trellis}, which profiles patient-derived organoids (PDOs) from 10 colorectal cancer patients treated with 11 chemotherapy compounds. Each experimental replicate consists of a matched control-treatment pair, yielding 927 population pairs. Cells are characterized by an abundance profile over 43 proteins.

A key challenge is predicting patient-specific treatment response: tumors from different patients exhibit distinct drug sensitivities. The task is to predict the post-treatment distribution given the control population and treatment identity. We evaluate on two splits: (i) held-out replicates within patients seen during training, and (ii) held-out patients, testing generalization.

We compare a source-conditioned baseline (including MFM) against our semi-supervised source-target conditioned model. Both models condition on treatment identity. As shown in Table~\ref{tab:pdo}, the source-conditioned model achieves lower error on held-out replicates within seen patients, but the any-to-any model generalizes better to unseen patients—echoing the IID/OOD tradeoff observed in our Gaussian benchmarks (Fig.~\ref{fig:unsup}).

\begin{table}[]
    \centering
\begin{tabular}{l|lcc|c}
\toprule
& & Supervised & Semi-sup. & Oracle \\
\midrule
\multirow{4}{*}{IID} 
& SWD & {0.64}{\scriptsize±.22} & 0.94{\scriptsize±.18} & 0.10{\scriptsize±.00} \\
& Energy & \textbf{0.54}{\scriptsize±.19} & 1.03{\scriptsize±.32} & 0.11{\scriptsize±.02} \\
& FM & {1.01}{\scriptsize±.23} & 1.06{\scriptsize±.25} & 0.30{\scriptsize±.03} \\
\cmidrule{2-5}
    & scGen & \multicolumn{3}{c}{2.19{\scriptsize$\pm$0.57}} \\
    & CellOT & \multicolumn{3}{c}{1.98{\scriptsize$\pm$0.46}} \\
\midrule
\multirow{4}{*}{OOD} 
& SWD & 2.59{\scriptsize±.08} & {1.97}{\scriptsize±.22} & 0.26{\scriptsize±.09} \\
& Energy & 2.98{\scriptsize±.19} & \textbf{1.91}{\scriptsize±.49} & 0.22{\scriptsize±.02} \\
& FM & 2.27{\scriptsize±.06} & {2.08}{\scriptsize±.18} & 0.53{\scriptsize±.13} \\
\cmidrule{2-5}
    & scGen & \multicolumn{3}{c}{2.65{\scriptsize$\pm$0.22}} \\
    & CellOT & \multicolumn{3}{c}{3.04{\scriptsize$\pm$0.33}} \\
\bottomrule
\end{tabular}
    \caption{\textit{Perturbation prediction perfomance on PDO task.} MMD distance $\times 10^2$ ($\downarrow$) averaged over held-out patients.}
    \label{tab:pdo}\vspace{-0em}
\end{table}

\subsection{Modeling clonal population dynamics}\label{subsec:lt}

We next apply DCTs to learn clonal population dynamics using lineage-traced single-cell RNA-sequencing (scRNA-seq) data from \citet{Weinreb2020-vh}. A critical challenge in this dataset is sparsity: there are about $6\cdot 10^3$ clones measured in total, however only about $2\cdot10^3$ of these clones are profiled at multiple timepoints. While source-conditioned approaches can only be trained on clones observed at multiple timepoints, training DCTs with any-to-any pairings allow us to make use of all available marginals. Here, when training without coupling information we restrict source-target pairing to only allow an early timepoint clone as a source distribution and a late timepoint clone as a target distribution.

We compare source-conditioned (supervised) and source-target-conditioned (semi-supervised) approaches using an identical architectures operating on the first 50 principal components of gene expression profiles (implementation details in App.~\ref{app:lt}). As shown in Table~\ref{tab:lt}, the source-target-conditioned models significantly outperform the source-conditioned baselines, demonstrating the benefit of incorporating orphan marginals. 


\begin{table}[t]
    \centering
    \label{tab:lt}
\begin{tabular}{lcc|c}
\toprule
& Supervised & Semi-supervised & Oracle \\
\midrule
SWD & 9.87{\scriptsize±.27} & 8.67{\scriptsize±.22} & 2.99{\scriptsize±.05} \\
Energy & 9.42{\scriptsize±.26} & \textbf{8.54}{\scriptsize±.21} & 2.88{\scriptsize±.05} \\
FM & 14.10{\scriptsize±.28} & 9.67{\scriptsize±.25} & 5.08{\scriptsize±.10} \\
\bottomrule
\end{tabular}
    \caption{\textit{Clonal distribution forecasting on \citet{Weinreb2020-vh}.} MMD distance ($\downarrow$) across 628 held-out clones.}\vspace{-1em}
\end{table}


\subsection{Longitudinal TCR repertoire forecasting}\label{subsec:tcr}

We evaluate DCT on longitudinal T-cell receptor (TCR) repertoire sequencing from COVID-19 patients \citep{schultheiss2020next}. The dataset contains repertoires from 37 patients, of whom only 10 are profiled at multiple timepoints -- a sparsity pattern mirroring the orphan marginal regime in lineage tracing. We hold out three longitudinal patients for evaluation and train on the remainder. The task is to forecast a patient's repertoire at time $t{+}1$ given their repertoire at $t$.

Each repertoire is an empirical distribution over TRB CDR3 amino acid sequences. We embed sequences using a pretrained ESM backbone \citep{lin2023evolutionary} with mean-pooled DeepSets aggregation. We compare two discrete transport mechanisms: a ProGen-based bridge \citep{Nijkamp2023-xq} and a discrete flow matching (DFM) bridge \citep{gat2024discreteflowmatching}, both trained in the same distribution-conditioned framework. The supervised baseline trains only on within-patient adjacent pairs $(t, t{+}1)$; the semi-supervised model trains on cross-patient pairs (earlier-to-later timepoints), incorporating single-timepoint patients as unpaired marginals. 

\begin{table}[h]
\centering
\begin{tabular}{lcc|c}
\toprule
& Supervised & Semi-supervised & Oracle \\
\midrule
Progen & 0.0588{\scriptsize±.009} & 0.0587{\scriptsize±.009} & 0.0567{\scriptsize±.009} \\
DFM & 0.0579{\scriptsize±.010} & \textbf{0.0215}{\scriptsize±.004} & 0.0078{\scriptsize±.001} \\
\bottomrule
\end{tabular}
    \caption{\textit{TCR repertoire forecasting performance.} MMD distance ($\downarrow$) averaged over held-out patients.}
    \label{tab:tcr}
\end{table}

The ProGen model appears degenerate; it learns identical embeddings for all distributions with average cosine similarity $0.998$ (see App.~\ref{app:tcr_discussion}). The semi-supervised DFM model reduces energy distance by more than half, demonstrating that cross-patient distributional structure improves forecasting even with limited longitudinal supervision.

\section{Conclusion}

We introduced \emph{distribution-conditioned transport}, a framework that couples distribution encoders with conditional transport models to learn transport maps which can generalize to unseen source and target distributions. By conditioning on learned embeddings of source and target distributions, DCT supports supervised, any-to-any, and semi-supervised settings, and can leverage orphan marginals observed at only one timepoint. We showed that DCT formalizes and generalizes existing approaches under a unified framework, leading to new models with empirical performance gains on synthetic benchmarks and four real world problems in biology.

\paragraph{Limitations and extensions}
Although source-target-conditioned models outperform source-conditioned and K-to-K baselines out-of-distribution, they sometimes underperform in-distribution. This may be due to underfitting: while all compared models are trained with equal compute budgets, source-target-conditioned models may have different scaling properties and require more training.
\section*{Acknowledgements}

 G.G. is supported by a Tayebati Fellowship. N.F. is supported by the NSF GRFP.
 
\section*{Impact Statement}

This paper presents work whose goal is to advance the field of 
Machine Learning. There are many potential societal consequences 
of our work, none which we feel must be specifically highlighted here.


\bibliography{ref,paperpile}
\bibliographystyle{icml2025}

\newpage
\appendix
\onecolumn
\appendix

\section{Theory}
\label{app:dct_theory}

\subsection{Why can we train on minibatches?}

This appendix formalizes two ingredients used throughout the main text:
(i) distribution encoders factor through the empirical measure, and (ii) encoder CLTs yield plug-in CLTs for objectives that condition on the encoder state.
The plug-in loss theory is adapted from \citet[Assump.~(C.1)--(C.3)]{fishman2025generative}; we include short proofs to make the manuscript self-contained and to handle two DCT-specific points: (a) joint CLTs for source--target conditioned models when the source and target minibatches are \emph{coupled} (paired data), and (b) the small dependence introduced when a loss conditions on a point $x$ that is reused inside the same minibatch used to compute $\mathcal E(\hat S)$.
We refer to \citet{fishman2025generative,van1996weak} for broader context.

\subsubsection{Empirical-measure factorization}

Let $(\mathcal X,\mathcal B)$ be a measurable space and let $\mathcal P(\mathcal X)$ denote probability measures on it.
For $m\in\mathbb N$ and $S_m=(x_1,\dots,x_m)\in\mathcal X^m$, write the empirical measure
\[
\widehat P_m \;:=\; \frac1m\sum_{j=1}^m \delta_{x_j}.
\]
These factorization statements mirror the empirical-measure sufficiency/maximal-invariance viewpoint in \citet[Thm.\ ``Empirical measure is sufficient, minimal, and a maximal invariant'']{fishman2025generative}; we include short proofs for completeness and to match DCT notation.

\begin{definition}[Permutation and proportional invariance]
Fix $m\in\mathbb N$. A map $\mathcal E_m:\mathcal X^m\to\mathbb R^d$ is \emph{permutation invariant} if
\[
\mathcal E_m(x_1,\dots,x_m)=\mathcal E_m(x_{\pi(1)},\dots,x_{\pi(m)})
\qquad\text{for all permutations }\pi\in\mathfrak S_m.
\]
A family $(\mathcal E_m)_{m\ge 1}$ is \emph{proportionally invariant} if for every integer $k\ge 1$ and every $(x_1,\dots,x_m)\in\mathcal X^m$,
\[
\mathcal E_m(x_1,\dots,x_m)
\;=\;
\mathcal E_{km}(\underbrace{x_1,\dots,x_m,\dots,x_1,\dots,x_m}_{k\ \text{copies}}).
\]
\end{definition}

\begin{proposition}[Permutation invariance $\Rightarrow$ factorization]\label{prop:factorization_fixed_m}
Fix $m\in\mathbb N$ and let $\mathcal E_m:\mathcal X^m\to\mathbb R^d$ be permutation invariant. Then there exists a measurable map $\phi_m$ defined on the set of empirical measures $\{\widehat P_m:\,S_m\in\mathcal X^m\}$ such that
\[
\mathcal E_m(S_m)=\phi_m(\widehat P_m)\qquad\text{for all }S_m\in\mathcal X^m.
\]
\end{proposition}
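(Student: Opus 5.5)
The plan is to define $\phi_m$ directly on the image of the empirical-measure map and check that it is well defined and measurable. Let $\Pi_m:\mathcal X^m\to\mathcal P(\mathcal X)$, $S_m\mapsto\widehat P_m$, and let $\mathcal M_m=\Pi_m(\mathcal X^m)$. For $\mu\in\mathcal M_m$ set $\phi_m(\mu):=\mathcal E_m(S_m)$ for any $S_m$ with $\Pi_m(S_m)=\mu$. Then $\mathcal E_m=\phi_m\circ\Pi_m$ holds by construction, so the content of the proof is well-definedness plus measurability.

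\emph{Well-definedness.} The key step is the standard fact that two tuples with the same empirical measure are permutations of one another. Suppose $\Pi_m(x_1,\dots,x_m)=\Pi_m(y_1,\dots,y_m)$. For every point $a$, evaluating both measures on the singleton $\{a\}$ gives $\#\{j:x_j=a\}=\#\{j:y_j=a\}$. This needs singletons to be measurable, so I would assume $\mathcal B$ separates points; for instance, $\mathcal X$ standard Borel suffices. Alternatively, one can test against sets in $\mathcal B$ distinguishing the finitely many points involved. Since the multiplicities agree, the multisets coincide, so $y=x\circ\pi$ for some $\pi\in\mathfrak S_m$. Permutation invariance then gives $\mathcal E_m(y)=\mathcal E_m(x)$.

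\emph{Measurability.} This is the step I expect to be the main obstacle. I would equip $\mathcal M_m$ with the $\sigma$-algebra inherited from the evaluation maps $\mu\mapsto\mu(B)$, $B\in\mathcal B$. Under the standard Borel assumption, $\Pi_m$ is continuous (equivalently, Borel) into $\mathcal P(\mathcal X)$ with the weak topology. The route I would try first is a measurable section. Fix a Borel linear order on $\mathcal X$, which exists by Borel-isomorphism with a subset of $\mathbb R$. Define $s:\mathcal M_m\to\mathcal X^m$ by listing the atoms of $\mu$ in increasing order, each repeated $m\,\mu(\{a\})$ times. Then $\Pi_m\circ s=\mathrm{id}$, and $\phi_m=\mathcal E_m\circ s$. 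It therefore suffices to show that $s$ is measurable.

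For that, I would note that $s$ is the inverse of the bijection $\Pi_m$ restricted to the Borel set of ordered tuples $\{x_1\le\dots\le x_m\}$. This restriction is an injective Borel map between standard Borel spaces, so by Lusin--Souslin its image $\mathcal M_m$ is Borel and its inverse $s$ is Borel. Composing with the measurable $\mathcal E_m$ yields the measurable $\phi_m$.

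If one prefers to avoid descriptive set theory, an alternative is to define the $\sigma$-algebra on $\mathcal M_m$ as the pushforward $\{A:\Pi_m^{-1}(A)\text{ measurable}\}$. Measurability of $\phi_m$ then follows because $\Pi_m^{-1}(\phi_m^{-1}(U))=\mathcal E_m^{-1}(U)$ is measurable for every Borel set $U$. I would mention this as the fallback for a general measurable space in which singletons are measurable.
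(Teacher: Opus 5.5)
Your proposal is correct. Your fallback is exactly the paper's proof. The paper equips the image of $\widehat P_m$ with the quotient $\sigma$-algebra $\{A : \widehat P_m^{-1}(A)\ \text{measurable}\}$, observes that $\mathcal E_m$ is constant on fibers because equal empirical measures come from permuted tuples, and stops there.

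Your main route is genuinely different. The paper's version works on any measurable space and needs no descriptive set theory, but it has three weaknesses:
\begin{itemize}
\item its measurability conclusion is nearly tautological, since quotient-measurability of $\phi_m$ is equivalent to measurability of $\mathcal E_m$;
\item its $\sigma$-algebra depends on $m$;
\item it asserts the fiber claim without the point-separation hypothesis that claim needs. If $\mathcal B$ is trivial, $\widehat P_m$ is constant and the statement fails.
\end{itemize}
You state the separation hypothesis explicitly.

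Your measurable-section argument costs a standard Borel assumption and Lusin--Souslin. In exchange it gives measurability for the evaluation (weak-topology Borel) $\sigma$-algebra on $\mathcal P(\mathcal X)$, which is natural and the same for all $m$. This is the $\sigma$-algebra in which the gluing of Prop.~\ref{prop:factorization_all_m} is cleanly phrased: each $\mathcal M_m$ is Borel and $\phi^{-1}(U)=\bigcup_m \phi_m^{-1}(U)$. It is also the one in which $\phi$ is later treated as a functional on $\mathcal P(\mathcal X)$ for the delta method. As a bonus, your argument shows that in the standard Borel case the paper's quotient $\sigma$-algebra on $\mathcal M_m$ coincides with the trace of the evaluation $\sigma$-algebra, since every quotient-measurable $A$ equals $s^{-1}(\Pi_m^{-1}(A))$.

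Two small remarks:
\begin{itemize}
\item Measurability of $\Pi_m$ for the evaluation $\sigma$-algebra holds on any measurable space, because $\mu\mapsto\mu(B)$ composed with $\Pi_m$ is $x\mapsto \frac1m\sum_j \mathbf{1}_B(x_j)$. Relatedly, ``continuous (equivalently, Borel)'' should read ``continuous, hence Borel.''
\item Both your argument and the paper's tacitly assume that $\mathcal E_m$ itself is measurable, which the statement omits.
\end{itemize}
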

\begin{proof}
Let $\widehat P_m:\mathcal X^m\to \mathcal P(\mathcal X)$ be the empirical-measure map and equip its image $\{\widehat P_m(S_m):S_m\in\mathcal X^m\}$ with the quotient $\sigma$-algebra induced by $\widehat P_m$.
If $\widehat P_m(S_m)=\widehat P_m(S_m')$, then $S_m'$ is a permutation of $S_m$ (as a multiset with multiplicity), so permutation invariance implies $\mathcal E_m(S_m)=\mathcal E_m(S_m')$; hence $\mathcal E_m$ is constant on fibers of $\widehat P_m$.
Therefore there exists a well-defined measurable $\phi_m$ on the image of $\widehat P_m$ such that $\mathcal E_m=\phi_m\circ \widehat P_m$, i.e.\ $\mathcal E_m(S_m)=\phi_m(\widehat P_m)$.
\end{proof}

\begin{proposition}[Permutation + proportional invariance $\Rightarrow$ a single functional]\label{prop:factorization_all_m}
Let $(\mathcal E_m)_{m\ge 1}$ be permutation and proportionally invariant. Then there exists a single measurable map $\phi$ defined on the set of all empirical measures $\bigcup_{m\ge 1}\{\widehat P_m:\,S_m\in\mathcal X^m\}$ such that
\[
\mathcal E_m(S_m)=\phi(\widehat P_m)\qquad\text{for all }m\ge 1,\ S_m\in\mathcal X^m.
\]
\end{proposition}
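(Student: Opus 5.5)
The plan is to glue together the per-$m$ factorizations from Proposition~\ref{prop:factorization_fixed_m} and to use proportional invariance to show that they agree wherever their domains overlap. By Proposition~\ref{prop:factorization_fixed_m}, for each $m$ there is a measurable $\phi_m$ on $\mathcal P_m:=\{\widehat P_m(S_m): S_m\in\mathcal X^m\}$ with $\mathcal E_m=\phi_m\circ\widehat P_m$. I would define $\phi$ on $\bigcup_m \mathcal P_m$ by $\phi(\mu):=\phi_m(\mu)$ for any $m$ with $\mu\in\mathcal P_m$. The whole content is that this is well defined.

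\textbf{Consistency.} Suppose $\mu=\widehat P_m(S_m)=\widehat P_{m'}(S'_{m'})$. Write $k=m'$ and $k'=m$. Let $T$ be $S_m$ repeated $m'$ times and $T'$ be $S'_{m'}$ repeated $m$ times; both are tuples of length $mm'$. Proportional invariance gives
\[
\mathcal E_m(S_m)=\mathcal E_{mm'}(T)
\qquad\text{and}\qquad
\mathcal E_{m'}(S'_{m'})=\mathcal E_{mm'}(T').
\]
Moreover $\widehat P_{mm'}(T)=\widehat P_m(S_m)=\mu=\widehat P_{mm'}(T')$, since repeating a tuple leaves its empirical measure unchanged. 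Two tuples of the same length with the same empirical measure are permutations of each other: each atom $x$ of $\mu$ appears exactly $mm'\,\mu(\{x\})$ times in each. Permutation invariance of $\mathcal E_{mm'}$, equivalently Proposition~\ref{prop:factorization_fixed_m} at level $mm'$, then yields $\mathcal E_{mm'}(T)=\mathcal E_{mm'}(T')$. Hence $\phi_m(\mu)=\phi_{m'}(\mu)$ and $\phi$ is well defined. By construction $\mathcal E_m(S_m)=\phi(\widehat P_m)$ for every $m$ and every $S_m$.

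\textbf{Measurability.} Equip $\bigcup_m\mathcal P_m$ with the final $\sigma$-algebra induced by the family of maps $\widehat P_m:\mathcal X^m\to\bigcup_m\mathcal P_m$. A set $A$ is measurable iff $\widehat P_m^{-1}(A)$ is measurable in $\mathcal X^m$ for every $m$. Then for any Borel $B\subseteq\mathbb R^d$ we have $\widehat P_m^{-1}(\phi^{-1}(B))=\mathcal E_m^{-1}(B)$, which is measurable, so $\phi$ is measurable. This is the same quotient-$\sigma$-algebra convention used in the proof of Proposition~\ref{prop:factorization_fixed_m}. Restricted to each $\mathcal P_m$, the function $\phi$ coincides with the measurable $\phi_m$.

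The main obstacle is the consistency step. The one subtlety is choosing a common refinement $mm'$ so that proportional invariance applies in both directions. This works because proportional invariance is stated for every integer multiple $k$, not only $k=2$, and because equality of empirical measures at a fixed length implies equality as multisets. Measurability is a matter of declaring the appropriate $\sigma$-algebra on the union.
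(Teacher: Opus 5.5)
Your proof is correct, and it departs from the paper's at the consistency step.

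\textbf{The paper's route.} The paper argues that if $\nu$ is an empirical measure at sizes $m$ and $m'$, then necessarily $m'=km$ and the $m'$-sample is a $k$-fold duplication of the $m$-sample. It then applies proportional invariance once. That divisibility claim is false as stated: $\delta_x$ is an empirical measure at sizes $2$ and $3$, and $\tfrac12\delta_x+\tfrac12\delta_y$ is one at sizes $4$ and $6$. A repaired version of the paper's route would pass \emph{down} to the minimal representation size $m_0$. Every admissible size is a multiple of $m_0$, and every representing tuple is a permutation of a duplication of a minimal tuple, so one applies proportional invariance and permutation invariance on each side.

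\textbf{Your route.} You pass \emph{up} to the common size $mm'$. There proportional invariance applies to both tuples, and permutation invariance of $\mathcal E_{mm'}$ closes the argument. This avoids having to identify a canonical minimal representative. It also makes explicit the use of permutation invariance. The paper's argument needs that step even in the divisible case, because proportional invariance is stated only for the concatenated ordering, but the paper never mentions it.

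\textbf{Measurability.} Your final-$\sigma$-algebra argument establishes measurability of the glued $\phi$. The paper's proof does not address this at all.

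\textbf{Shared assumptions.} Both arguments, like Proposition~\ref{prop:factorization_fixed_m}, tacitly assume two things. First, singletons are measurable, so that equal empirical measures of equal length come from the same multiset. Second, each $\mathcal E_m$ is measurable.
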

\begin{proof}
By Prop.~\ref{prop:factorization_fixed_m}, for each $m$ there exists $\phi_m$ with $\mathcal E_m(S_m)=\phi_m(\widehat P_m)$.
Define $\phi$ on empirical measures by $\phi(\widehat P_m):=\phi_m(\widehat P_m)$.
If the same empirical measure $\nu$ can be represented at two sample sizes $m$ and $m'$, then necessarily $m'=km$ for some integer $k$ and the $m'$-sample is obtained by duplicating each of the $m$ atoms $k$ times.
Proportional invariance then implies $\phi_m(\nu)=\phi_{m'}(\nu)$, so $\phi$ is well defined.
\end{proof}

\subsubsection{Encoder CLT via the functional delta method}

Let $P\in\mathcal P(\mathcal X)$ and let $X_1,X_2,\dots\overset{\mathrm{iid}}{\sim}P$ with empirical measure $P_m=\frac1m\sum_{i=1}^m\delta_{X_i}$.
Assume the encoder is distributionally invariant, so that $\mathcal E(S_m)=\phi(P_m)$ for a functional $\phi:\mathcal P(\mathcal X)\to\mathbb R^d$.

\begin{theorem}[Encoder CLT]\label{thm:encoder_clt}
Suppose $\phi:\mathcal P(\mathcal X)\to\mathbb R^d$ is Hadamard differentiable at $P$ tangentially to a subspace containing the empirical process limit, and denote its derivative by $D\phi_P$.
Then
\[
\sqrt m\,\bigl(\phi(P_m)-\phi(P)\bigr)
\;\xRightarrow{d}\;
D\phi_P(\mathbb G_P),
\]
where $\mathbb G_P$ is the $P$-Brownian bridge.
In particular, if $D\phi_P(\mathbb G_P)$ is a mean-zero Gaussian in $\mathbb R^d$ with covariance $\Sigma_P$, then
\[
\sqrt m\,\bigl(\mathcal E(S_m)-\phi(P)\bigr)\xRightarrow{d}\mathcal N(0,\Sigma_P),
\]
which is Eqn.~\ref{eq:clt}.
\end{theorem}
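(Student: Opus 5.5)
The argument combines Donsker's theorem for the empirical process with the functional delta method (van der Vaart and Wellner). It has four steps.

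\textbf{Setting.} Regard $\mathcal P(\mathcal X)$ as embedded in $\ell^\infty(\mathcal F)$ via $P\mapsto (Pf)_{f\in\mathcal F}$ for a $P$-Donsker class $\mathcal F$. This is the implicit setting in which ``Hadamard differentiable tangentially to a subspace containing the empirical process limit'' makes sense. Donsker's theorem then gives
\[
\mathbb G_m:=\sqrt m\,(P_m-P)\rightsquigarrow \mathbb G_P \quad\text{in }\ell^\infty(\mathcal F),
\]
with $\mathbb G_P$ a tight Borel-measurable $P$-Brownian bridge. Its sample paths lie almost surely in the closed subspace $\mathbb D_0$ of uniformly continuous functions with respect to the intrinsic $L^2(P)$ semimetric. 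By hypothesis, $\phi$ is Hadamard differentiable at $P$ tangentially to $\mathbb D_0$.

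\textbf{Step 1: deterministic expansion.} Write
\[
\sqrt m\,\bigl(\phi(P_m)-\phi(P)\bigr)=g_m(\mathbb G_m),\qquad g_m(h):=\sqrt m\,\bigl(\phi(P+h/\sqrt m)-\phi(P)\bigr),
\]
where $g_m$ is defined on $\{h:P+h/\sqrt m\in\operatorname{dom}\phi\}$. Tangential Hadamard differentiability means exactly that $g_m(h_m)\to D\phi_P(h)$ whenever $h_m\to h\in\mathbb D_0$ with $h_m$ in the domain of $g_m$. This is the hypothesis of the extended continuous mapping theorem. Applying that theorem to $g_m(\mathbb G_m)$ yields $g_m(\mathbb G_m)\rightsquigarrow D\phi_P(\mathbb G_P)$, which is the first display. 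This step is where I expect the main care to be needed. Measurability of $\mathbb G_m$ in the nonseparable space $\ell^\infty(\mathcal F)$ must be handled with outer expectations, which the Hoffmann--J{\o}rgensen weak convergence theory provides. One must also check that $\mathbb G_P$ concentrates on the tangent set $\mathbb D_0$, which holds because it is tight and its paths are uniformly continuous.

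\textbf{Step 2: Gaussian limit.} $D\phi_P$ is continuous and linear on $\mathbb D_0$, and $\mathbb G_P$ is a centered Gaussian process. Hence every linear functional $a^\top D\phi_P(\mathbb G_P)$ is a mean-zero Gaussian, because it is an $L^2$ limit of finite linear combinations of coordinates $\mathbb G_P f$. So $D\phi_P(\mathbb G_P)\sim\mathcal N(0,\Sigma_P)$. When $D\phi_P$ has an influence-function representation $D\phi_P(h)=h(\psi_P)$ with $\psi_P\in\mathcal F^d$, the covariance is $\Sigma_P=\operatorname{Cov}_P(\psi_P(X))$.

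\textbf{Step 3: encoder form.} Since $\mathcal E(S_m)=\phi(P_m)$ by Prop.~\ref{prop:factorization_all_m}, the Gaussian limit from Step 2 gives the second display, i.e.\ Eqn.~\ref{eq:clt}.
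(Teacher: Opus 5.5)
Your proposal is correct and follows the same route as the paper's proof. Both apply Donsker's theorem to $\sqrt m(P_m-P)$ in $\ell^\infty(\mathcal F)$ and then the functional delta method (van der Vaart--Wellner Thm.~3.9.4); you simply unpack the delta method through the extended continuous mapping theorem, and your Step~2 additionally derives the Gaussianity of $D\phi_P(\mathbb G_P)$ from linearity and continuity of $D\phi_P$, which the statement only assumes but which your argument handles soundly.
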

\begin{proof}
By the classical empirical process CLT, $\sqrt m\,(P_m-P)\Rightarrow \mathbb G_P$ in an appropriate function space (e.g.\ $\ell^\infty(\mathcal F)$ for a Donsker class $\mathcal F$), see \citet[Ch.~2]{van1996weak}.
Hadamard differentiability of $\phi$ at $P$ allows application of the functional delta method \citet[Thm.~3.9.4]{van1996weak}, yielding the stated limit.
See \citet{fishman2025generative} for this argument in the GDE setting.
\end{proof}

\subsubsection{Plug-in losses for conditioned models}

We now formalize the plug-in CLTs used to justify minibatch training in Alg.~\ref{alg:dct} and Prop.~\ref{prop:transport_plugin_clt}.
The core plug-in loss CLT and bias expansion are proved in \citet[Assump.~(C.1)--(C.3) and Thm.\ ``Large-$m$ behaviour of the plug-in loss'']{fishman2025generative}.
Here we record a DCT-specialized statement and explicitly handle two technical points needed for conditional transport:
(i) joint CLTs for \emph{pairs} of embeddings when the source and target minibatches are either independent (product coupling) or \emph{paired/coupled} (non-product coupling), and (ii) the fact that reusing a \emph{finite} number of points from the same minibatch as the embedding does not affect the limiting distribution.

\begin{theorem}[Plug-in loss CLTs for DCT]\label{thm:plugin_loss_clts}
Fix indices $(u,v)$.
Let $z_u^\star:=\phi(P_u)$ and $z_v^\star:=\phi(P_v)$.
Consider either of the following sampling schemes:
\begin{enumerate}
\item[(A)] (\textbf{Independent minibatches}) $\hat S_u=(X_{u1},\dots,X_{um})\sim P_u^{\otimes m}$ and $\hat S_v=(X_{v1},\dots,X_{vm'})\sim P_v^{\otimes m'}$ are independent.
\item[(B)] (\textbf{Coupled/paired minibatches}) $m'=m$ and $(X_{ui},X_{vi})_{i=1}^m\overset{\mathrm{iid}}{\sim}\Pi_{uv}$ for some coupling $\Pi_{uv}$ with marginals $P_u$ and $P_v$, with $\hat S_u=(X_{u1},\dots,X_{um})$ and $\hat S_v=(X_{v1},\dots,X_{vm})$.
\end{enumerate}
Define $z_u:=\mathcal E(\hat S_u)$ and $z_v:=\mathcal E(\hat S_v)$.
When $m'=m$, case (A) is obtained from case (B) by taking the product coupling $\Pi_{uv}=P_u\otimes P_v$.

Assume the encoder is asymptotically linear at $P_u$ and $P_v$ as in \citet[Assump.~(C.1)]{fishman2025generative}:
there exist $\psi_u\in L^2(P_u)$ and $\psi_v\in L^2(P_v)$ with mean zero such that
\[
\sqrt m\,(z_u-z_u^\star)=\frac1{\sqrt m}\sum_{i=1}^m \psi_u(X_{ui})+o_{\mathbb P}(1),
\qquad
\sqrt {m'}\,(z_v-z_v^\star)=\frac1{\sqrt {m'}}\sum_{i=1}^{m'} \psi_v(X_{vi})+o_{\mathbb P}(1).
\]
Let $\Sigma_u:=\operatorname{Var}_{X\sim P_u}[\psi_u(X)]$ and $\Sigma_v:=\operatorname{Var}_{X\sim P_v}[\psi_v(X)]$.
In case (B), define the cross-covariance
\(
\Sigma_{uv}^{12}:=\operatorname{Cov}_{(X,Y)\sim\Pi_{uv}}[\psi_u(X),\psi_v(Y)].
\)

Fix $x\in\mathcal X$ and let $\ell^{\mathrm{sc}}(x,\cdot):\mathbb R^d\to\mathbb R$ and $\ell^{\mathrm{stc}}(x,\cdot,\cdot):\mathbb R^d\times\mathbb R^d\to\mathbb R$ be differentiable at $z_u^\star$ and $(z_u^\star,z_v^\star)$, respectively.
Define the plug-in losses
\[
\widehat\ell^{\mathrm{sc}}_{m}(x):=\ell^{\mathrm{sc}}(x,z_u),
\qquad
\widehat\ell^{\mathrm{stc}}_{m,m'}(x):=\ell^{\mathrm{stc}}(x,z_u,z_v).
\]
Then:
\begin{enumerate}
\item[(i)] \textbf{Joint embedding CLT.}\;
      $\bigl(\sqrt m(z_u-z_u^\star),\sqrt{m'}(z_v-z_v^\star)\bigr)\Rightarrow \mathcal N(0,\Sigma_{uv})$.
      In case (A), $\Sigma_{uv}=\mathrm{diag}(\Sigma_u,\Sigma_v)$, while in case (B) (with $m'=m$) we have
      \[
      \Sigma_{uv}
      =
      \begin{pmatrix}
      \Sigma_u & \Sigma_{uv}^{12}\\[1pt]
      (\Sigma_{uv}^{12})^\top & \Sigma_v
      \end{pmatrix}.
      \]
\item[(ii)] \textbf{Source-conditioned plug-in loss.}\;
      $\sqrt m\bigl(\widehat\ell^{\mathrm{sc}}_{m}(x)-\ell^{\mathrm{sc}}(x,z_u^\star)\bigr)\Rightarrow \mathcal N(0,\sigma^2_{\mathrm{sc}}(x))$.
\item[(iii)] \textbf{Source--target conditioned plug-in loss.}\;
      If $m/m'\to\rho\in(0,\infty)$ then
      $\sqrt m\bigl(\widehat\ell^{\mathrm{stc}}_{m,m'}(x)-\ell^{\mathrm{stc}}(x,z_u^\star,z_v^\star)\bigr)\Rightarrow \mathcal N(0,\sigma^2_{\mathrm{stc},\rho}(x))$.
\end{enumerate}
Moreover, the same limits in (ii)--(iii) continue to hold if $x$ is replaced by any fixed finite subset of points from $\hat S_u$ (e.g.\ $x=X_{u1}$): conditioning on those points changes $\sqrt m(z_u-z_u^\star)$ only by $o_{\mathbb P}(1)$.
\end{theorem}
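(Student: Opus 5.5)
The plan is to reduce everything to the asymptotic linearity assumption and then apply the multivariate CLT, Slutsky's lemma, and the ordinary (finite-dimensional) delta method.

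\textbf{Step 1: part (i).} In case (B) the pairs $W_i:=(\psi_u(X_{ui}),\psi_v(X_{vi}))$ are i.i.d.\ under $\Pi_{uv}$ with mean zero and finite second moments, since $\psi_u\in L^2(P_u)$ and $\psi_v\in L^2(P_v)$. Their covariance matrix has diagonal blocks $\Sigma_u,\Sigma_v$ and off-diagonal block $\Sigma^{12}_{uv}$.
\begin{itemize}
\item The multivariate Lindeberg--L\'evy CLT gives $m^{-1/2}\sum_i W_i\Rightarrow\mathcal N(0,\Sigma_{uv})$.
\item Adding the two $o_{\mathbb P}(1)$ remainders and applying Slutsky yields (i).
\item In case (A) with $m\neq m'$, the two normalized sums are independent. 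Each converges marginally, so the joint law converges to the product limit (e.g.\ via characteristic functions factorizing), giving $\mathrm{diag}(\Sigma_u,\Sigma_v)$.
\item When $m=m'$, case (A) is the product-coupling instance of (B), where $\Sigma^{12}_{uv}=0$.
\end{itemize}

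\textbf{Step 2: parts (ii) and (iii).} Differentiability of $\ell^{\mathrm{sc}}(x,\cdot)$ at $z_u^\star$ gives
\[
\ell^{\mathrm{sc}}(x,z_u)-\ell^{\mathrm{sc}}(x,z_u^\star)=\nabla_z\ell^{\mathrm{sc}}(x,z_u^\star)^\top(z_u-z_u^\star)+o(\|z_u-z_u^\star\|).
\]
By (i), $\|z_u-z_u^\star\|=O_{\mathbb P}(m^{-1/2})$, so the remainder is $o_{\mathbb P}(m^{-1/2})$. This gives (ii) with
\[
\sigma^2_{\mathrm{sc}}(x)=g^\top\Sigma_u g,\qquad g=\nabla_z\ell^{\mathrm{sc}}(x,z_u^\star).
\]
For (iii), write
\[
\sqrt m\,(z_v-z_v^\star)=\sqrt{m/m'}\cdot\sqrt{m'}\,(z_v-z_v^\star),
\]
where $\sqrt{m/m'}\to\sqrt\rho$. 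Slutsky then gives joint convergence of $\sqrt m\,(z_u-z_u^\star,\,z_v-z_v^\star)$ to $\mathcal N(0,D_\rho\Sigma_{uv}D_\rho)$ with $D_\rho=\mathrm{diag}(I,\sqrt\rho\, I)$. A first-order expansion of $\ell^{\mathrm{stc}}(x,\cdot,\cdot)$ at $(z_u^\star,z_v^\star)$ gives
\[
\sigma^2_{\mathrm{stc},\rho}(x)=h^\top D_\rho\Sigma_{uv}D_\rho\, h,\qquad h=(\nabla_{z_u}\ell^{\mathrm{stc}},\nabla_{z_v}\ell^{\mathrm{stc}}).
\]
Note that in case (B) the ratio $\rho$ equals $1$ automatically.

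\textbf{Step 3 (main obstacle): the ``moreover'' clause.} Let the fixed finite index set be $J$ with $|J|=k$. Split the linear term as
\[
\frac1{\sqrt m}\sum_{i\notin J}\psi_u(X_{ui})\;+\;\frac1{\sqrt m}\sum_{i\in J}\psi_u(X_{ui}).
\]
\begin{itemize}
\item The second sum is a sum of $k$ terms, each $O_{\mathbb P}(1)$, so after dividing by $\sqrt m$ it is $o_{\mathbb P}(1)$.
\item The first sum is independent of $(X_{ui})_{i\in J}$. Rescaling by $\sqrt{m/(m-k)}\to1$, it still has the same Gaussian limit.
\end{itemize}
Hence, conditionally on $(X_{ui})_{i\in J}$ (or jointly with them), $\sqrt m\,(z_u-z_u^\star)$ has the same limit. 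This is an asymptotic-independence statement: the joint law of $\bigl((X_{ui})_{i\in J},\sqrt m\,(z_u-z_u^\star)\bigr)$ converges to the product of the law of the $J$-points with the Gaussian limit.

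The delicate point is that the loss is now evaluated at a random $x=X_{u1}$. The delta-method expansion must therefore hold with the gradient $\nabla\ell(X_{u1},z^\star)$ random. I would handle this by assuming differentiability in $z$ locally uniformly in $x$, or at least for $P_u$-almost every $x$. Then the asymptotic-independence statement together with the continuous mapping theorem applies to
\[
\bigl(X_{u1},\,\sqrt m\,(z_u-z_u^\star)\bigr)\mapsto\nabla\ell(X_{u1},z^\star)^\top\sqrt m\,(z_u-z_u^\star).
\]
The remainder term is controlled conditionally on $X_{u1}$ via dominated convergence. This yields the same limit conditionally on the reused point, i.e.\ a mixture over $x$ of the fixed-$x$ Gaussian limits.
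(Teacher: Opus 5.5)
Your proposal is correct and follows the paper's own argument. Asymptotic linearity reduces (i) to a multivariate CLT for the i.i.d.\ pairs $(\psi_u(X_{ui}),\psi_v(X_{vi}))$, block-diagonal under independence; (ii)--(iii) follow from the delta method with the rescaling $\sqrt{m/m'}\to\sqrt\rho$; and the reuse clause holds because finitely many summands contribute only $O_{\mathbb P}(m^{-1/2})$. Your explicit variances, and your handling of the random gradient $\nabla_z\ell(X_{u1},z_u^\star)$ via joint convergence of $((X_{ui})_{i\in J},\sqrt m\,(z_u-z_u^\star))$ to a product law plus a.e.\ differentiability, make rigorous what the paper compresses into ``apply the delta method conditional on $X_{u1}$''. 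The one thing to add is that in case (B), part (iii) also requires dropping the paired summands $\psi_v(X_{vi})$, $i\in J$, from $z_v$, which the same splitting argument handles.
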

\begin{proof}
\emph{(i)} The asymptotic linearity expansions reduce the joint fluctuations to sums of i.i.d.\ terms.
In case (A) the two sums are independent, giving the block-diagonal covariance.
In case (B), apply the multivariate CLT to the i.i.d.\ sequence $(\psi_u(X_{ui}),\psi_v(X_{vi}))_{i=1}^m$ to obtain a joint Gaussian limit with covariance as stated.

\emph{(ii)--(iii)} Apply the (multivariate) delta method to the map $z\mapsto \ell^{\mathrm{sc}}(x,z)$ in the source-conditioned case and to $(z,z')\mapsto \ell^{\mathrm{stc}}(x,z,z')$ in the source--target-conditioned case, using the joint CLT from (i). For (iii), note that $\sqrt m(z_v-z_v^\star)=\sqrt{m/m'}\cdot \sqrt{m'}(z_v-z_v^\star)\Rightarrow \sqrt\rho\,Z_v$.

\emph{(reuse-of-points dependence)} The expansion in Assump.~(C.1) already gives
\(
\sqrt m(z_u-z_u^\star)=m^{-1/2}\sum_{i=1}^m \psi_u(X_{ui})+o_{\mathbb P}(1).
\)
If we condition on $X_{u1}$ (or any fixed finite subset), only finitely many summands are affected and they vanish at rate $m^{-1/2}$, so the conditional limit of $\sqrt m(z_u-z_u^\star)$ is unchanged.
Applying the same delta-method argument conditional on $X_{u1}$ yields the stated conclusion.
\end{proof}

\begin{corollary}[Mean consistency and (asymptotic) unbiasedness of plug-in training]\label{cor:unbiased_training}
Under the assumptions of Thm.~\ref{thm:plugin_loss_clts}, suppose additionally that for each fixed $x$ the maps $z\mapsto \ell^{\mathrm{sc}}(x,z)$ and $(z,z')\mapsto \ell^{\mathrm{stc}}(x,z,z')$ are locally Lipschitz near $z_u^\star$ and $(z_u^\star,z_v^\star)$, respectively, and that $\sup_m\mathbb E\|\sqrt m(z_u-z_u^\star)\|^2<\infty$ and $\sup_{m'}\mathbb E\|\sqrt{m'}(z_v-z_v^\star)\|^2<\infty$.
Then $\mathbb E\bigl|\widehat\ell^{\mathrm{sc}}_{m}(x)-\ell^{\mathrm{sc}}(x,z_u^\star)\bigr|\to 0$ and $\mathbb E\bigl|\widehat\ell^{\mathrm{stc}}_{m,m'}(x)-\ell^{\mathrm{stc}}(x,z_u^\star,z_v^\star)\bigr|\to 0$, hence the plug-in objectives are mean consistent.
If, moreover, these maps are twice continuously differentiable with bounded Hessian near the same points and $\mathbb E[z_u-z_u^\star]=O(m^{-1})$ and $\mathbb E[z_v-z_v^\star]=O({m'}^{-1})$, then the plug-in biases are $O(m^{-1})$ (and $O({m'}^{-1})$).
\end{corollary}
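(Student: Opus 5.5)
The plan is to convert the distributional statements of Thm.~\ref{thm:plugin_loss_clts} into moment statements, using the local Lipschitz and second-moment hypotheses. I treat the source-conditioned case first. The source--target case is identical once $(z_u,z_v)$ is viewed as a single vector in $\mathbb R^{2d}$, with the joint moment bound following from the two marginal bounds via $\|(a,b)\|^2=\|a\|^2+\|b\|^2$.

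\emph{Mean consistency.} Fix $x$ and let $L,r>0$ be such that $|\ell^{\mathrm{sc}}(x,z)-\ell^{\mathrm{sc}}(x,z_u^\star)|\le L\|z-z_u^\star\|$ on the ball $B_r(z_u^\star)$. I split the expectation on the events $\{\|z_u-z_u^\star\|\le r\}$ and its complement.

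On the good event, Jensen's inequality and the moment bound give
\[
\mathbb E\bigl[L\|z_u-z_u^\star\|\bigr]\le L\bigl(\mathbb E\|z_u-z_u^\star\|^2\bigr)^{1/2}\le LC^{1/2}m^{-1/2}\to 0 .
\]
On the bad event, Chebyshev's inequality gives $\mathbb P(\|z_u-z_u^\star\|>r)\le C/(mr^2)\to 0$. To make the contribution of the bad event vanish, I need some control of $\ell^{\mathrm{sc}}$ away from $z_u^\star$, which local Lipschitzness alone does not supply. I expect this to be the main obstacle.

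To handle it, I will read the hypothesis as a uniform (global) Lipschitz bound, or assume $\ell$ is bounded or of linear growth in $z$; this is the regime in which \citet{fishman2025generative} state the result. With a global Lipschitz bound, the Jensen estimate above applies on the whole space and no splitting is needed. With bounded or linear-growth $\ell$, I combine Cauchy--Schwarz with the Chebyshev estimate. In either case $\mathbb E|\widehat\ell^{\mathrm{sc}}_m(x)-\ell^{\mathrm{sc}}(x,z_u^\star)|=O(m^{-1/2})\to 0$, which is the claimed mean consistency.

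\emph{Bias rate.} Under the $C^2$ assumption, I Taylor-expand around $z_u^\star$:
\[
\ell^{\mathrm{sc}}(x,z_u)=\ell^{\mathrm{sc}}(x,z_u^\star)+\nabla\ell^\top(z_u-z_u^\star)+R,\qquad |R|\le \tfrac12 H\|z_u-z_u^\star\|^2 ,
\]
where $H$ bounds the Hessian near $z_u^\star$ (extended globally, or handled on the bad event exactly as above). Taking expectations, the linear term equals $\nabla\ell^\top\mathbb E[z_u-z_u^\star]=O(m^{-1})$ by assumption. The remainder satisfies $\mathbb E|R|\le \tfrac12 H\,\mathbb E\|z_u-z_u^\star\|^2\le \tfrac12 HC/m$. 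Hence the bias is $O(m^{-1})$.

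For the STC loss, the same expansion in $(z,z')$ gives a linear term of order $O(m^{-1})+O(m'^{-1})$. The quadratic remainder is bounded by $H\bigl(\mathbb E\|z_u-z_u^\star\|^2+\mathbb E\|z_v-z_v^\star\|^2\bigr)=O(m^{-1}+m'^{-1})$; any cross terms are controlled by Cauchy--Schwarz. Both coupling schemes (A) and (B) are therefore covered, since only marginal moments enter.

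Finally, because $\ell$ is evaluated at fixed $x$, the point-reuse remark of Thm.~\ref{thm:plugin_loss_clts} carries over by conditioning on the reused point, provided the moment bounds hold conditionally on it. This holds because conditioning on finitely many points perturbs $z_u$ by only $O(m^{-1})$ under asymptotic linearity.
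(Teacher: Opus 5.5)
Your proposal follows the paper's proof, which simply invokes the local-Lipschitz and second-order Taylor argument of \citet{fishman2025generative} for $z\mapsto\ell^{\mathrm{sc}}(x,z)$ and $(z,z')\mapsto\ell^{\mathrm{stc}}(x,z,z')$ combined with the second-moment bounds. That is exactly your Jensen estimate, together with your expansion giving an $O(m^{-1})$ linear term and a $\tfrac12 H\,\mathbb E\|z_u-z_u^\star\|^2=O(m^{-1})$ remainder. The obstacle you flag is also real, and the paper's one-line proof passes over it: local Lipschitzness plus second moments cannot control the event $\{\|z_u-z_u^\star\|>r\}$. For example, a mean encoder on Student-$t_3$ data with $\ell^{\mathrm{sc}}(x,z)=e^{\|z\|^2}$ satisfies every stated hypothesis yet has $\mathbb E\,\widehat\ell^{\mathrm{sc}}_m(x)=\infty$ for all $m$. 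So a global Lipschitz, boundedness, or linear-growth condition like the one you add is genuinely needed, and with it your argument is complete.
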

\begin{proof}
This is the same local Lipschitz / second-order Taylor expansion argument as in \citet[Thm.\ ``Large-$m$ behaviour of the plug-in loss'']{fishman2025generative}, applied to the scalar maps $z\mapsto \ell^{\mathrm{sc}}(x,z)$ and $(z,z')\mapsto \ell^{\mathrm{stc}}(x,z,z')$ together with the moment bounds implied by the encoder CLTs.
\end{proof}

\subsection{When do source samples matter?}
\label{app:when_source_matters}

The plug-in CLTs above justify minibatch training for objectives that compare \emph{distributions}.
Separately, when training any-to-any models with an \emph{independent} pairing policy (product coupling) and a stochastic generator that has access to additional noise, the population objective can admit solutions that match the target distribution while effectively ignoring the source sample.
This subsection records a simple identifiability observation and a practical diagnostic.

\begin{proposition}[Degenerate conditional sampler under product coupling]
\label{prop:degenerate_product_coupling}
Fix indices $(u,v)$.
Let $\xi\sim\nu$ be auxiliary noise independent of $X\sim P_u$.
Consider a source--target conditioned generator $\mathcal T(\cdot\mid z_u,z_v;\xi)$ and a distribution-level loss of the form
\[
\mathcal L_{\mathrm{stc}}(u,v)
:=
\mathcal L\Bigl(P_v,\ \mathsf{Law}\bigl(\mathcal T(X\mid z_u,z_v;\xi)\bigr)\Bigr),
\]
where $\mathcal L(P,Q)=0$ iff $P=Q$.
If the model class contains \emph{$x$-ignoring} generators of the form
\[
\mathcal T(x\mid z_u,z_v;\xi)=G(z_v;\xi)
\]
and for each $v$ there exists such a $G$ with $G(z_v;\xi)\sim P_v$, then the minimum of $\mathcal L_{\mathrm{stc}}(u,v)$ equals $0$ and is achieved by an $x$-ignoring solution.
\end{proposition}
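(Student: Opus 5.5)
The plan is to verify the statement directly from the two properties of $\mathcal L$ and the hypothesis on the model class. No limit theory from Thm.~\ref{thm:plugin_loss_clts} is needed, since the loss is stated at the level of laws.

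\textbf{Step 1: the minimum is at least $0$.} I will first note that the infimum of $\mathcal L_{\mathrm{stc}}(u,v)$ over the model class is bounded below by $0$. This uses nonnegativity of $\mathcal L$, which is implicit in ``$\mathcal L(P,Q)=0$ iff $P=Q$'' for a divergence-type loss such as MMD, Sinkhorn, or the energy distance. I will state nonnegativity as a standing convention, since the proposition only makes sense for divergence-like losses.

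\textbf{Step 2: an $x$-ignoring generator attains $0$.} Fix $(u,v)$ and take the $G$ guaranteed by hypothesis, with $G(z_v;\xi)\sim P_v$. Set $\mathcal T(x\mid z_u,z_v;\xi):=G(z_v;\xi)$; this lies in the model class by assumption. Because $\mathcal T$ does not depend on its first argument, we have $\mathcal T(X\mid z_u,z_v;\xi)=G(z_v;\xi)$ pointwise in $(X,\xi)$. Hence
\[
\mathsf{Law}\bigl(\mathcal T(X\mid z_u,z_v;\xi)\bigr)=\mathsf{Law}\bigl(G(z_v;\xi)\bigr)=P_v .
\]
Independence of $\xi$ and $X$ ensures the law is computed under $\nu$ alone. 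The ``if'' direction of the hypothesis on $\mathcal L$ then gives $\mathcal L_{\mathrm{stc}}(u,v)=\mathcal L(P_v,P_v)=0$.

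\textbf{Step 3: conclusion.} Combining Steps 1 and 2, the minimum equals $0$ and is achieved by an $x$-ignoring solution. I will remark that the product coupling plays its role here: the loss only sees the marginal law of the output, not any joint law with $X$. Consequently nothing in the objective penalizes discarding $x$. Under a non-product pairing with a loss on joint laws, this argument would fail.

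\textbf{Main obstacle.} The mathematics is short, and the only delicate point is precision about what ``minimum'' means. I plan to handle it by making nonnegativity of $\mathcal L$ explicit, and by making explicit that $z_u,z_v$ are treated as fixed conditioning values, either population embeddings or realized plug-in embeddings. The argument holds verbatim for either choice, since $G(z_v;\xi)\sim P_v$ is assumed at whichever $z_v$ is used.
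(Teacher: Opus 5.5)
Your proposal is correct and matches the paper's proof. The paper's argument is exactly your Step 2: the $x$-ignoring map $\mathcal T(x\mid z_u,z_v;\xi)=G(z_v;\xi)$ has output law $P_v$ regardless of $P_u$, so $\mathcal L_{\mathrm{stc}}(u,v)=\mathcal L(P_v,P_v)=0$. Your Step 1 usefully states the nonnegativity of $\mathcal L$ explicitly; the paper leaves it implicit, and ``$\mathcal L(P,Q)=0$ iff $P=Q$'' does not supply it on its own, yet it is needed to call $0$ the minimum.
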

\begin{proof}
If $\mathcal T(x\mid z_u,z_v;\xi)=G(z_v;\xi)$ and $G(z_v;\xi)\sim P_v$, then the output marginal distribution equals $P_v$ regardless of $P_u$.
Thus $\mathcal L(P_v,\mathsf{Law}(\mathcal T(X\mid z_u,z_v;\xi)))=0$.
\end{proof}

Prop.~\ref{prop:degenerate_product_coupling} does \emph{not} say a model must ignore $x$; it says a product-coupling, distribution-level objective cannot generally rule out this failure mode when the generator has sufficient stochasticity.
In practice, one must break this symmetry---e.g.\ by restricting to deterministic generators; using structured pairings/couplings (e.g.\ optimal transport) \cite{tong2023improving}; or adding regularization that selects a coupling or geometry (cf.\ \cite{Haviv2024-qj}).

\paragraph{An alignment diagnostic.}
Fix a cost $c:\mathcal{X}\times\mathcal{X}\to\mathbb{R}_+$ (we use $c(x,y)=\|x-y\|_2$).
For a fixed pair $(u,v)$, let $X\sim P_u$ and $Y\sim P_v$ be independent and let $\hat Y:=\mathcal T(X\mid z_u,z_v;\xi)$ for independent noise $\xi\sim\nu$.
Let $\mu_u:=\mathbb E[X]$ and $\mu_v:=\mathbb E[Y]$ and define the centered cost
\[
\bar c_{uv}(x,y):=c(x-\mu_u,\ y-\mu_v).
\]
We report
\[
d_{\mathrm{pair}}:=\mathbb E\bigl[\bar c_{uv}(X,\hat Y)\bigr],
\qquad
d_{\mathrm{rand}}:=\mathbb E\bigl[\bar c_{uv}(X,Y)\bigr],
\qquad
\text{and the ratio } d_{\mathrm{pair}}/d_{\mathrm{rand}}.
\]
Centering removes the bulk mean-shift component (which can otherwise dominate $c$) and makes the diagnostic more sensitive to the coupling.

\begin{lemma}[Independence baseline]\label{lem:alignment_independence_baseline}
If $\hat Y$ is independent of $X$ and $\mathsf{Law}(\hat Y)=P_v$, then $d_{\mathrm{pair}}=d_{\mathrm{rand}}$.
\end{lemma}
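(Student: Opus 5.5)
The argument rests on a single observation: under the stated hypotheses, the pair $(X,\hat Y)$ has exactly the same joint law as the pair $(X,Y)$ used to define $d_{\mathrm{rand}}$. Once that is established, the two expectations agree because $d_{\mathrm{pair}}$ and $d_{\mathrm{rand}}$ are expectations of the same measurable function $\bar c_{uv}$ under that common law.

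The proof proceeds in three steps.

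\emph{Step 1: identify the joint laws.} By hypothesis, $\hat Y$ is independent of $X$ and $\mathsf{Law}(\hat Y)=P_v$. Hence
\[
\mathsf{Law}(X,\hat Y)=P_u\otimes P_v .
\]
By construction of the diagnostic, $X\sim P_u$ and $Y\sim P_v$ are independent, so
\[
\mathsf{Law}(X,Y)=P_u\otimes P_v
\]
as well. Independence together with the two marginals determines a product measure uniquely, so the two joint laws coincide.

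\emph{Step 2: check that the centering constants agree.} The centering uses $\mu_v=\mathbb E[Y]$. Since $\hat Y$ has the same law as $Y$, we also have $\mathbb E[\hat Y]=\mu_v$. This means the same function $\bar c_{uv}(x,y)=c(x-\mu_u,\,y-\mu_v)$ appears in both $d_{\mathrm{pair}}$ and $d_{\mathrm{rand}}$; nothing depends on which variable is used to compute the mean. The function $\bar c_{uv}$ is measurable and nonnegative because $c$ is.

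\emph{Step 3: conclude.} Write both quantities as integrals against the common law:
\[
d_{\mathrm{pair}}=\int \bar c_{uv}\,d(P_u\otimes P_v)=d_{\mathrm{rand}} .
\]
This equality holds in $[0,\infty]$, so it requires no integrability assumption, since $c\ge 0$. By Tonelli's theorem the integral may be evaluated in either order.

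There is no real obstacle in this argument. The only points that need care are the following.
\begin{itemize}
\item The auxiliary noise $\xi$ does not affect the argument, because only the law of the pair $(X,\hat Y)$ enters the expectation.
\item The means $\mu_u$ and $\mu_v$ should be assumed finite so that the centering is well defined. Alternatively, one can note that the statement holds for any fixed pair of shift vectors, since Step 1 does not use the specific values.
\end{itemize}
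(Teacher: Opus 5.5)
Your proposal is correct and follows the same route as the paper, whose proof is the one-line observation that $(X,\hat Y)\overset{d}{=}(X,Y)$ because both pairs have law $P_u\otimes P_v$, so the expectations of $\bar c_{uv}$ agree. Your additional remarks are harmless refinements of that argument: nonnegativity of $c$ removes any integrability requirement, and finiteness of $\mu_u,\mu_v$ is needed for the centering to be well defined.
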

\begin{proof}
Under the assumptions, $(X,\hat Y)\overset{d}{=}(X,Y)$ with $Y\sim P_v$ independent of $X$, so the expectations agree.
\end{proof}

As a complementary diagnostic, let $\Delta_{uv}:=\mu_v-\mu_u$ and compute the Spearman rank correlation $\rho$ between the one-dimensional projections $\langle X,\Delta_{uv}\rangle$ and $\langle \hat Y,\Delta_{uv}\rangle$.
High $|\rho|$ indicates that the map preserves rank structure along the mean-shift direction, whereas $\rho\approx 0$ is consistent with weak pointwise dependence.
In practice we estimate the expectations and means with sample averages.
For the \emph{stochastic energy sampler} baseline in Table~\ref{tab:source_sample_alignment}, we augment the deterministic Energy regression map with per-sample noise (so $\hat Y=\mathcal T(X\mid z_u,z_v;\xi)$ with $\xi\sim\nu$), making the model class expressive enough to realize $x$-ignoring solutions.

\begin{table}[t]
\centering
\small
\begin{tabular}{lcccc}
\toprule
Generator & $d_{\mathrm{pair}}$ & $d_{\mathrm{rand}}$ & $d_{\mathrm{pair}}/d_{\mathrm{rand}}$ & Rank $\rho$ \\
\midrule
Flow matching (FM)           & $0.011$ & $0.629$ & $0.017$ & $1.000$ \\
Energy regression (Energy)   & $0.543$ & $0.615$ & $0.885$ & $0.106$ \\
Sliced Wasserstein (SWD)     & $0.555$ & $0.623$ & $0.893$ & $0.108$ \\
Stochastic energy sampler    & $0.647$ & $0.649$ & $0.998$ & $\phantom{-}0.009$ \\
\bottomrule
\end{tabular}
\caption{%
Source-sample alignment diagnostic on the MVN any-to-any benchmark (App.~\ref{app:gaussian}, $K{=}1000$ training distributions; averages over 20 random $(u,v)$ pairs with $N{=}200$ samples each; $d_{\mathrm{rand}}$ averaged over 50 random permutations).
FM (which uses a sample-level paired objective) yields strong alignment ($d_{\mathrm{pair}}/d_{\mathrm{rand}}\approx 0.02$, $\rho\approx 1$).
Energy and SWD are deterministic maps at inference but are trained with set-level distributional losses; in this experiment they show moderate alignment ($d_{\mathrm{pair}}/d_{\mathrm{rand}}\approx 0.9$, $\rho\approx 0.1$).
The stochastic energy sampler injects noise at inference and achieves $d_{\mathrm{pair}}\approx d_{\mathrm{rand}}$, consistent with the $x$-ignoring failure mode in Prop.~\ref{prop:degenerate_product_coupling} and Lem.~\ref{lem:alignment_independence_baseline}.
}
\label{tab:source_sample_alignment}
\end{table}

Taken together, these results illustrate when we can expect DCT to learn more than an independent conditional sampler: objectives with an explicit sample-level pairing signal (or additional structure selecting a coupling) can induce meaningful pointwise dependence, while purely distributional objectives need not.

\subsection{How is the latent space structured?}
\label{app:latent_structure}

Following the manifold/trajectory viewpoint of \citet{fishman2025generative}, we use the learned embedding space as a coordinate system for a family of related distributions and probe it via \emph{latent interpolants}.
Fix a pair $(u,v)$ with embeddings $(z_u,z_v)$.
We linearly interpolate between them, $z(t)=(1-t)z_u+t z_v$, choose a grid $0=t_0<\cdots<t_K=1$, and set $z_k=z(t_k)$.
Starting from samples $x_0\sim P_u$, we form a discrete trajectory by repeatedly applying the source--target conditioned map between successive latent codes, i.e.\ $x_{k+1}=\mathcal T(x_k\mid z_k,z_{k+1})$.
The empirical laws of $(x_k)_{k=0}^K$ define a path in distribution space.

Fig.~\ref{fig:mvn_latent_trajectories} shows the resulting STC trajectory on the MVN any-to-any benchmark (App.~\ref{app:gaussian}) alongside the closed-form optimal transport (OT) displacement interpolation between the same endpoint Gaussians.
In this example, the learned trajectory closely tracks the OT path, suggesting that linear structure in the latent space induces a meaningful geometry in distribution space (at least within this controlled family). This is not a guarantee, it merely shows that the latent spaces learned by the distribution encoders for the any-to-any transport task inherit many of the interesting and useful properties of GDEs.

\begin{figure}[t]
\centering
\includegraphics[width=0.3\textwidth]{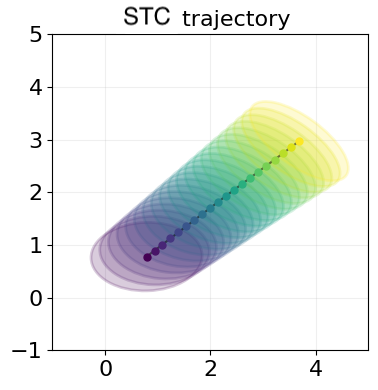}
\includegraphics[width=0.3\textwidth]{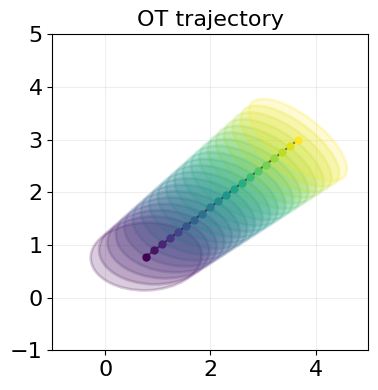}
\caption{%
Latent-space interpolants on the MVN any-to-any benchmark.
\textbf{Left:} STC trajectory obtained by sequentially pushing samples using intermediate latent codes along a linear interpolant between endpoint embeddings.
\textbf{Right:} OT displacement interpolation (computed in closed form for Gaussians).
}
\label{fig:mvn_latent_trajectories}
\end{figure}

\section{Basline Comparisons}\label{app:baselines}

\subsection{Unsupervised Transport}

In our unsupervised transport experiments, we compare source-target-condition models in the DCT framework to $K$-to-$K$ baselines, which condition on one-hot embeddings of distributions. This approach is closely related to that of MMSI \cite{albergo2023multimarginal}, but distinct in a subtle way.

MMSI models interpolant time on the simplex, where each point in the simplex corresponds to an interpolant taking a convex combination of a sample from each of the $K$ distributions proportional to their simplicial weight. Our $K$-to-$K$ baselines (when using flow matching generators) can be viewed as MMSI models which take ``edge paths'' on the simplex, where we flow directly between two corners without any weight on other distributions.

We choose our one-hot $K$-to-$K$ as a baseline, because it allows for direct comparison across a broader family of transport mechanisms rather than flow matching alone, and anecdotally we have observed more stable performance with MMSI models restricted to simplicial edge paths.


\subsection{Supervised Transport}

In our supervised transport experiments, we evaluate source-conditioned models. These models are closely related to Meta Flow Matching models \cite{Atanackovic2024-ml}. When using flow matching, source-conditioned DCTs differ from MFM only in the choice of distribution encoder. Rather than a distributionally invariant encoder, MFM instead uses a heuristic ``population encoder'' which involves message passing over a k-nearest-neighbor graph on the input population.

We choose to baseline against source-conditioned models with distributionally invariant encoders as these provide a CLT and robustness to variation in sample set size. Furthermore, using source-conditioned models rather than MFM alone allows us to again directly compare between source-conditioned and source-target-conditioned models across a broad range of transport mechansisms.
\section{Standard distribution-conditioned transport maps}\label{app:standard_generators}

Several experiments in this paper reuse the same generator families and only vary the dataset, encoder, or model capacity. This appendix defines the three \emph{continuous-data} generators used throughout: sliced Wasserstein regression (\textbf{SWD}) \cite{kolouri2019generalized}, energy/MMD regression (\textbf{Energy}) \cite{gneiting2007strictly,sutherland2016generative}, and flow matching (\textbf{FM}) \cite{liu2022rectified_flow,lipman2023flow_matching,albergo2023stochastic_interpolants}. Discrete sequence generators for the TCR experiments are described separately in Appendix~\ref{app:tcr_generators}. Where an experiment uses different hyperparameters (e.g.\ number of projections or solver tolerances), it is stated explicitly in that experiment's appendix.

\paragraph{Terminology.}
Throughout, \textbf{DCT} refers to the overall \emph{distribution-conditioned transport} framework of coupling a distribution encoder with a conditional generator. Within DCT we use \textbf{source-conditioned} (\textbf{SC}) to mean generators conditioned only on the source embedding $z_{\text{src}}$, and \textbf{source--target-conditioned} (\textbf{STC}) to mean generators conditioned on both $(z_{\text{src}},z_{\text{tgt}})$; STC models are also referred to as \emph{any-to-any} models in the main text. Unless otherwise stated, the generator families below apply to both SC and STC, with SC obtained by dropping $z_{\text{tgt}}$.

\subsection{Notation}

Let $S^{\text{src}}=\{x_j\}_{j=1}^{n}\sim P^{\text{src}}$ and $S^{\text{tgt}}=\{y_j\}_{j=1}^{n}\sim P^{\text{tgt}}$ denote source and target sample sets (not necessarily paired). Let $z_{\text{src}}=\mathcal E(S^{\text{src}})$ and $z_{\text{tgt}}=\mathcal E(S^{\text{tgt}})$ be the corresponding distribution embeddings.
All generators produce a transported sample set $\widehat S^{\text{tgt}}=\{\hat y_j\}_{j=1}^{n}$; in the STC setting they are conditioned on $(z_{\text{src}},z_{\text{tgt}})$, while in the SC setting they are conditioned on $z_{\text{src}}$ alone.

\subsection{Standard distribution encoder}\label{app:standard_encoder}

Across continuous-data experiments, we use a single \emph{standard} distribution encoder $\mathcal E$ to map a finite sample set to a fixed-dimensional embedding. Concretely, given a set $S=\{x_j\}_{j=1}^{n}\subset\mathbb R^d$, the encoder outputs $z=\mathcal E(S)\in\mathbb R^{d_z}$. We adopt the mean-pooled DeepSets/GNN-style architecture used in Generative Distribution Embeddings (GDE) \citep{fishman2025generative}, which is consistently strong across their benchmark suite.

\paragraph{Architecture.}
We first compute per-sample features with an MLP,
\begin{equation}
    h_j^{(0)} = \mathrm{MLP}_{\mathrm{in}}(x_j)\in\mathbb R^{d_h}.
\end{equation}
We then apply $L$ mean-pooled update blocks. For $\ell=1,\dots,L$, define the pooled representation $\bar h^{(\ell-1)}=\frac1n\sum_{j=1}^n h_j^{(\ell-1)}$ and update each sample feature by concatenating with the pooled representation:
\begin{equation}
    h_j^{(\ell)} = \mathrm{MLP}^{(\ell)}_{\mathrm{pool}}\!\left(\big[h_j^{(\ell-1)};\bar h^{(\ell-1)}\big]\right).
\end{equation}
Finally, we average over samples and project to the latent space:
\begin{equation}
    z = \sigma\!\left(W\cdot\frac1n\sum_{j=1}^n h_j^{(L)} + b\right),
\end{equation}
with $\sigma=\mathrm{SELU}$. Optionally, we $\ell_2$-normalize $z$; this is used only when stated explicitly.

\paragraph{Hyperparameters.}
Unless stated otherwise in an experiment appendix, we use $L=2$ mean-pooled blocks and 2-layer MLPs with SELU activations. The hidden dimension $d_h$ and embedding dimension $d_z$ are reported per experiment.

\paragraph{Non-continuous data.}
For discrete sequence experiments (TCR forecasting), we use an ESM2-based distribution encoder described in Appendix~\ref{app:tcr_embedding}.

\subsection{Deterministic regression generators (SWD and Energy)}

Both SWD and Energy use a deterministic transport map applied pointwise:
\begin{equation}
    \hat y_j = \mathcal T_\theta(x_j \mid z_{\text{src}}, z_{\text{tgt}}).
\end{equation}
They differ only in the training objective, which compares $\widehat S^{\text{tgt}}$ to the observed target set $S^{\text{tgt}}$ using a distributional distance (Appendix~\ref{app:standard_metrics}).

\paragraph{SWD (sliced Wasserstein regression).}
We minimize the empirical sliced 2-Wasserstein distance using $L=100$ random projections $\{\omega_\ell\}_{\ell=1}^{L}$ drawn uniformly from the unit sphere. For each $\omega_\ell$, we compute scalar projections and sort them,
\(
\mathrm{sort}(\{\langle \omega_\ell, \hat y_j\rangle\}_{j=1}^{n})
\)
and
\(
\mathrm{sort}(\{\langle \omega_\ell, y_j\rangle\}_{j=1}^{n}),
\)
then take the mean squared difference across matched order statistics and projections.

\paragraph{Energy (energy/MMD regression).}
We minimize the energy distance, equivalently MMD with the energy kernel $k(x,y)=-\|x-y\|$ (Appendix~\ref{app:standard_metrics}). The energy kernel is parameter-free, avoiding bandwidth selection.

\subsection{Flow matching generator (FM)}

Flow matching parameterizes transport with a time-conditional velocity field $v_\theta(x,t,z_{\text{src}},z_{\text{tgt}})$. For training we sample $t\sim\mathrm{Uniform}[0,1]$, $x_0\sim P^{\text{src}}$, $x_1\sim P^{\text{tgt}}$, and $\epsilon\sim\mathcal N(0,I)$, and form the noisy linear interpolation
\begin{equation}
    x_t = (1-t)x_0 + tx_1 + \sigma\epsilon,
\end{equation}
with $\sigma=0.5$ unless otherwise stated.
We train with the conditional flow matching objective
\begin{equation}
    \mathcal L_{\text{FM}}
    =
    \mathbb E\Big[\big\|v_\theta(x_t,t,z_{\text{src}},z_{\text{tgt}}) - (x_1 - x_0)\big\|_2^2\Big].
\end{equation}

At inference, for each source sample $x(0)=x_0$ we solve the ODE
\begin{equation}
    \frac{dx}{dt} = v_\theta(x,t,z_{\text{src}},z_{\text{tgt}}),\qquad t\in[0,1],
\end{equation}
and take $\hat y=x(1)$. We use an adaptive Dormand--Prince solver (dopri5) with absolute and relative tolerances $10^{-4}$ unless otherwise stated.

\section{Standard evaluation metrics}\label{app:standard_metrics}

Across experiments we report three distributional distances between a generated sample set $\widehat S=\{\hat y_i\}_{i=1}^{n}$ and a ground-truth target set $S=\{y_j\}_{j=1}^{n}$: sliced Wasserstein distance (\textbf{SWD}), energy distance (\textbf{Energy}), and MMD with an RBF kernel (\textbf{MMD-RBF}). These metrics are computed in the data space unless stated otherwise (e.g.\ in the TCR experiments, they are computed in ESM2 embedding space).

\subsection{Energy distance (MMD with energy kernel)}

The energy distance between distributions $P$ and $Q$ can be written as
\begin{equation}
    D_{\text{energy}}(P,Q)
    =
    2\mathbb{E}\|X - Y\|
    -
    \mathbb{E}\|X - X'\|
    -
    \mathbb{E}\|Y - Y'\|,
\end{equation}
where $X,X'\sim P$ i.i.d.\ and $Y,Y'\sim Q$ i.i.d. We estimate this quantity empirically from the two finite sample sets. This is equivalent to MMD with kernel $k(x,y)=-\|x-y\|$.
The energy distance is closely related to the \emph{energy score}, a strictly proper scoring rule for probabilistic prediction \citep{gneiting2007strictly}.

\subsection{Sliced 2-Wasserstein distance (SWD)}

We use the standard sliced 2-Wasserstein distance \citep{kolouri2019generalized} estimated with $L=100$ random projections $\{\omega_\ell\}_{\ell=1}^{L}$ drawn uniformly from the unit sphere. For each projection, we compute and sort the scalar projections of both sets, match order statistics, and average the squared differences across projections:
\begin{equation}
    \widehat{\mathrm{SWD}}_2^2(\widehat S,S)
    =
    \frac{1}{L}\sum_{\ell=1}^L \frac{1}{n}\sum_{j=1}^n
    \left(
        \mathrm{sort}(\{\langle \omega_\ell,\hat y_i\rangle\}_{i=1}^n)_j
        -
        \mathrm{sort}(\{\langle \omega_\ell,y_i\rangle\}_{i=1}^n)_j
    \right)^2.
\end{equation}
We report $\widehat{\mathrm{SWD}}_2(\widehat S,S)=\sqrt{\widehat{\mathrm{SWD}}_2^2(\widehat S,S)}$.

\subsection{MMD with RBF kernel (MMD-RBF)}

Given a positive definite kernel $k$, the squared MMD is
\begin{equation}
    \mathrm{MMD}^2(P,Q)
    =
    \mathbb{E}[k(X,X')]
    +
    \mathbb{E}[k(Y,Y')]
    -
    2\mathbb{E}[k(X,Y)],
\end{equation}
with $X,X'\sim P$ and $Y,Y'\sim Q$ i.i.d., estimated empirically from samples.
MMD is a standard kernel two-sample statistic \citep{gretton2012kernel}.
We use an RBF kernel
\begin{equation}
    k_{\text{RBF}}(x,y)=\exp\!\Big(-\tfrac{\|x-y\|_2^2}{2\sigma^2}\Big),
\end{equation}
with bandwidth $\sigma$ set by the median heuristic on the pooled samples from $\widehat S\cup S$.

\section{Gaussian and Gaussian Mixture Experiments}\label{app:gaussian}

\noindent This appendix provides experimental details for the Gaussian experiments in Sections~\ref{subsec:unsup_gauss} and~\ref{subsec:semisup_gauss}. We describe the data-generating processes, model architectures, training configurations, and evaluation protocols. Generator families and metrics are defined in Appendices~\ref{app:standard_generators} and~\ref{app:standard_metrics}.

\subsection{Data-Generating Process}\label{app:gaussian_dgp}

\subsubsection{Multivariate Normal (MVN) Distributions}

We construct a dataset of $n = 50{,}000$ sample sets, each drawn from a distinct bivariate normal distribution. The parameters for each distribution $i$ are sampled as follows:

\paragraph{Mean Prior.} Each component of the mean vector is drawn independently from a uniform distribution:
\begin{equation}
    \mu_i \sim \mathrm{Uniform}([0, 5]^2)
\end{equation}

\paragraph{Covariance Prior.} The covariance matrix is sampled from an inverse-Wishart distribution:
\begin{equation}
    \Sigma_i \sim \mathrm{InverseWishart}(\nu, \Psi)
\end{equation}
where the degrees of freedom $\nu = 10 \cdot (d - 1) = 10$ for $d = 2$ dimensions, and the scale matrix $\Psi = I_d$ (identity matrix). This parameterization ensures well-conditioned covariance matrices with moderate variability.

\paragraph{Sample Generation.} For each distribution $i$, we draw $n_{\text{set}} = 100$ i.i.d.\ samples:
\begin{equation}
    x_{ij} \sim \mathcal{N}(\mu_i, \Sigma_i), \quad j = 1, \ldots, n_{\text{set}}
\end{equation}
Sampling is performed via the Cholesky decomposition $\Sigma_i = L_i L_i^\top$:
\begin{equation}
    x_{ij} = \mu_i + L_i z_{ij}, \quad z_{ij} \sim \mathcal{N}(0, I_d)
\end{equation}

\paragraph{Unique Distribution Subsets.} For experiments studying generalization across distribution density (varying $K$), we sample $K \in \{10, 100, 1{,}000, 10{,}000\}$ unique parameter sets $(\mu_k, \Sigma_k)$ and repeat each parameter set $n / K$ times. This creates multiple sample sets from the same underlying distribution, enabling the study of how models perform when training distributions are seen multiple times versus once.

\subsubsection{Gaussian Mixture Models (GMM)}

We extend the MVN setting to Gaussian mixture models with $C = 3$ components. The parameters for each GMM $i$ are sampled as follows:

\paragraph{Mixture Weight Prior.} Component weights are drawn from a symmetric Dirichlet distribution:
\begin{equation}
    \pi_i \sim \mathrm{Dirichlet}(\alpha \cdot \mathbf{1}_C)
\end{equation}
where $\alpha = 1.0$ yields a uniform prior over the simplex.

\paragraph{Component Mean Prior.} Each component mean is drawn independently:
\begin{equation}
    \mu_{ic} \sim \mathrm{Uniform}([0, 5]^2), \quad c = 1, \ldots, C
\end{equation}

\paragraph{Component Covariance Prior.} Each component covariance is drawn independently:
\begin{equation}
    \Sigma_{ic} \sim \mathrm{InverseWishart}(\nu, \Psi), \quad c = 1, \ldots, C
\end{equation}
with the same hyperparameters as the MVN case ($\nu = 10$, $\Psi = I_d$).

\paragraph{Sample Generation.} For each GMM $i$, we draw $n_{\text{set}} = 1{,}000$ samples via ancestral sampling:
\begin{align}
    c_{ij} &\sim \mathrm{Categorical}(\pi_i) \\
    x_{ij} &\sim \mathcal{N}(\mu_{i,c_{ij}}, \Sigma_{i,c_{ij}})
\end{align}
The larger set size ($1{,}000$ vs.\ $100$ for MVN) provides sufficient samples to capture the multimodal structure.

\subsection{Supervised Gaussian Experiments}\label{app:gaussian_supervised}

For the semi-supervised experiments, we construct paired source-target datasets with known transformations.

\subsubsection{Supervised MVN Dataset}

The supervised MVN dataset creates paired distributions via a fixed shift transformation:
\begin{equation}
    Y = X + b, \quad b = (1, 1)^\top
\end{equation}

\paragraph{Source Distribution.} To study the generalization beyond the training distribution and evaluate the value of any-to-any models for generalization, source means are restricted:
\begin{equation}
    \mu_i^{\text{src}} \sim \mathrm{Uniform}([0, 2.5]^2)
\end{equation}

\paragraph{Target Generation.} Given source samples $\{x_{ij}\}$, target samples are:
\begin{equation}
    y_{ij} = x_{ij} + b
\end{equation}
The target samples are then randomly permuted to remove point-wise correspondence, creating an unpaired but distributionally related dataset.

\subsubsection{Supervised GMM Dataset}

The supervised GMM dataset creates a more challenging bimodal target structure:
\begin{equation}
    Y = X + b \pm b_{\text{off}}
\end{equation}
where $b = (1, 1)^\top$ and $b_{\text{off}} = 0.1 \cdot (-1, 1)^\top$.

\paragraph{Target Generation.} For each source sample, we create two shifted versions:
\begin{align}
    y_{ij}^{+} &= x_{ij} + b + b_{\text{off}} \\
    y_{ij}^{-} &= x_{ij} + b - b_{\text{off}}
\end{align}
These are concatenated and subsampled to maintain the original set size, then randomly permuted. This creates a bimodal target distribution from a unimodal (per-component) source, testing the model's ability to learn complex distributional transformations.

\subsection{Model Architecture}\label{app:gaussian_architecture}

\subsubsection{Distribution Encoder}

We use the standard distribution encoder described in Appendix~\ref{app:standard_encoder} (mean-pooled DeepSets/GNN encoder). For MVN experiments, we use hidden dimension $d_h = 64$ and latent dimension $d_z = 16$; for GMM, we increase these to $d_h = 256$ and $d_z = 128$ to accommodate the more complex multimodal structure.

\subsubsection{Embedding Encoder (Baseline)}

For the $K$-to-$K$ baseline that treats each distribution as a discrete label, we use a learnable embedding table:
\begin{equation}
    z_i = \mathrm{Embedding}(i) \in \mathbb{R}^{d_z}
\end{equation}
where $i \in \{1, \ldots, K\}$ is the distribution index. This encoder cannot generalize to unseen distributions at test time.

\subsubsection{Transport Map}

The transport map $\mathcal{T}(x \mid z_{\text{src}}, z_{\text{tgt}})$ is a 4-layer MLP with SELU activations that takes the concatenation of a source sample $x$ with both the source and target distribution embeddings: $\hat{y} = \mathrm{MLP}([x; z_{\text{src}}; z_{\text{tgt}}])$. The hidden dimension matches the encoder ($d_h = 64$ for MVN, $d_h = 256$ for GMM).

We compare the three standard continuous-data generator families defined in Appendix~\ref{app:standard_generators}: sliced Wasserstein regression (SWD), energy/MMD regression (Energy), and flow matching (FM).

\subsection{Training Protocol}\label{app:gaussian_training}

All models are trained using Adam with a fixed learning rate of $2 \times 10^{-4}$ for 200 epochs. We use a batch size of 256 distribution pairs, where each pair consists of source and target sample sets. No learning rate scheduling or early stopping is applied.

\paragraph{Pair Sampling.} During training, we sample source-target pairs uniformly at random. For the unsupervised any-to-any setting, source and target distributions are sampled independently from the full dataset, so the model sees arbitrary pairings. For the supervised setting, each target is deterministically derived from its corresponding source via the fixed transformation (shift for MVN, shift plus off-axis displacement for GMM).

\paragraph{Bidirectional vs.\ Unidirectional Loss.} The unsupervised any-to-any objective trains transport in both directions, summing the losses:
\begin{equation}
    \mathcal{L}_{\text{unsup}} = \mathcal{L}(\text{src} \to \text{tgt}) + \mathcal{L}(\text{tgt} \to \text{src})
\end{equation}
This symmetric objective encourages the encoder to learn representations that support transport regardless of which distribution serves as source or target. For supervised training, we use only the forward direction $\mathcal{L}_{\text{sup}} = \mathcal{L}(\text{src} \to \text{tgt})$, since the transformation is inherently asymmetric.

\subsection{Evaluation Protocol}\label{app:gaussian_evaluation}

\subsubsection{Unsupervised Evaluation}

We evaluate transport quality in two settings. For \emph{in-distribution} evaluation, we test on held-out source-target pairs where both distributions appeared during training, though not necessarily paired together. For \emph{out-of-distribution} evaluation, we fix a source distribution from the training set and transport to a dense grid of novel target distributions with means $\mu \in [0, 5]^2$ and randomly sampled covariances. This tests zero-shot generalization to target distributions never seen during training.

We measure transport quality using the energy distance (Appendix~\ref{app:standard_metrics}), estimated empirically from the finite sample sets.

\subsubsection{Semi-Supervised Evaluation}

For the semi-supervised experiments, we restrict the supervised training data to distributions with source means satisfying $\|\mu\|_\infty \leq 2.5$, then evaluate across the full range $\|\mu\|_\infty \leq 5$. We partition test results by the $L^\infty$ norm of the source mean to assess how performance degrades outside the supervised training support.

The semi-supervised approach first trains an any-to-any encoder on the full unsupervised dataset (all 50,000 distributions), then fits a ridge regression predictor $\hat{z}_{\text{tgt}} = W z_{\text{src}} + b$ on the paired subset within the restricted support. At test time, we encode the source distribution, predict the target embedding via the linear map, and apply the learned transport.

We compare against two baselines: a \emph{supervised} model that conditions only on the source embedding and is trained exclusively on paired data within the support, and an \emph{oracle} that uses the true target embedding computed by the encoder (providing an upper bound on semi-supervised performance).

\subsection{Hyperparameter Summary}\label{app:gaussian_configs}

We summarize the key hyperparameters for reproducibility. All experiments use the same random seed protocol and data splits.

\subsubsection{Dataset and Model Hyperparameters}

Table~\ref{tab:gaussian_hyperparams} provides a consolidated view of the dataset and model configurations for both experimental settings.

\begin{table}[h]
\centering
\caption{Dataset and model hyperparameters for Gaussian experiments.}
\label{tab:gaussian_hyperparams}
\begin{tabular}{lcc}
\toprule
\textbf{Hyperparameter} & \textbf{MVN} & \textbf{GMM} \\
\midrule
\multicolumn{3}{l}{\textit{Dataset}} \\
Number of sample sets & 50,000 & 50,000 \\
Samples per set & 100 & 1,000 \\
Data dimension & 2 & 2 \\
Mean prior range & $[0, 5]^2$ & $[0, 5]^2$ \\
Covariance prior (inv-Wishart $\nu$) & 10 & 10 \\
Mixture components & --- & 3 \\
Dirichlet concentration $\alpha$ & --- & 1.0 \\
\midrule
\multicolumn{3}{l}{\textit{Encoder}} \\
Latent dimension $d_z$ & 16 & 128 \\
Hidden dimension $d_h$ & 64 & 256 \\
Mean-pooled GNN layers & 2 & 2 \\
FC layers per block & 2 & 2 \\
\midrule
\multicolumn{3}{l}{\textit{Transport Map}} \\
MLP layers & 4 & 4 \\
Hidden dimension & 64 & 256 \\
Activation & SELU & SELU \\
\midrule
\multicolumn{3}{l}{\textit{Training}} \\
Batch size & 256 & 256 \\
Learning rate & $2 \times 10^{-4}$ & $2 \times 10^{-4}$ \\
Epochs & 200 & 200 \\
Optimizer & Adam & Adam \\
\bottomrule
\end{tabular}
\end{table}

For the supervised experiments, we restrict the mean prior to $[0, 2.5]^2$ so that the shifted targets remain within the original $[0, 5]^2$ range. The shift vector is $b = (1, 1)^\top$ for both MVN and GMM; GMM additionally uses an off-axis displacement of magnitude $0.1$ to create bimodal targets.

\subsection{Full Results}\label{app:gaussian_results}

We present comprehensive results from our Gaussian experiments. All tables report energy distance (lower is better), averaged across three random seeds.

\subsubsection{Unsupervised Transport: Scaling with $K$}

Tables~\ref{tab:mvn_full_scaling} and~\ref{tab:gmm_full_scaling} report the full breakdown of transport quality as a function of $K$, the number of unique training distributions. The key observation is that the $K$-to-$K$ baseline performs well on in-distribution targets when $K$ is small (it can memorize each distribution), but degrades significantly on out-of-distribution targets. In contrast, the any-to-any encoder maintains consistent performance across both IID and OOD targets, demonstrating genuine distributional generalization.

\begin{table}[h]
\centering
\small
\caption{Distributional distances ($\downarrow$) for MVN experiments across $K$ unique distributions. We report MMD, SWD, and Energy distance.}
\label{tab:mvn_full_scaling}
\textbf{IID Targets}
\vspace{0.3em}

\begin{tabular}{lcccccccccccc}
\toprule
 & \multicolumn{3}{c}{$K=10$} & \multicolumn{3}{c}{$K=100$} & \multicolumn{3}{c}{$K=1,000$} & \multicolumn{3}{c}{$K=10,000$} \\
\textbf{Model} & MMD & SWD & Energy & MMD & SWD & Energy & MMD & SWD & Energy & MMD & SWD & Energy \\
\midrule
SWD ($K$) & 0.003 & 0.044 & 0.002 & 0.002 & 0.037 & 0.001 & 0.011 & 0.083 & 0.010 & 0.016 & 0.099 & 0.014 \\
SWD (Any) & 0.045 & 0.146 & 0.044 & 0.003 & 0.044 & 0.002 & 0.003 & 0.038 & 0.002 & 0.001 & 0.031 & 0.000 \\
\midrule
Energy ($K$) & 0.002 & 0.036 & 0.001 & 0.004 & 0.052 & 0.002 & 0.009 & 0.092 & 0.008 & 0.016 & 0.101 & 0.015 \\
Energy (Any) & 0.077 & 0.173 & 0.076 & 0.003 & 0.047 & 0.002 & 0.001 & 0.031 & -0.000 & 0.001 & 0.033 & -0.000 \\
\midrule
FM ($K$) & 0.008 & 0.072 & 0.006 & 0.009 & 0.077 & 0.008 & 0.012 & 0.092 & 0.011 & 0.021 & 0.125 & 0.020 \\
FM (Any) & 0.034 & 0.120 & 0.032 & 0.014 & 0.098 & 0.012 & 0.012 & 0.094 & 0.011 & 0.013 & 0.094 & 0.012 \\
\bottomrule
\end{tabular}

\vspace{0.5em}
\textbf{OOD Targets}
\vspace{0.3em}

\begin{tabular}{lcccccccccccc}
\toprule
 & \multicolumn{3}{c}{$K=10$} & \multicolumn{3}{c}{$K=100$} & \multicolumn{3}{c}{$K=1,000$} & \multicolumn{3}{c}{$K=10,000$} \\
\textbf{Model} & MMD & SWD & Energy & MMD & SWD & Energy & MMD & SWD & Energy & MMD & SWD & Energy \\
\midrule
SWD ($K$) & 0.911 & 0.642 & 0.910 & 0.147 & 0.242 & 0.145 & 0.037 & 0.138 & 0.036 & 0.020 & 0.110 & 0.019 \\
SWD (Any) & 0.147 & 0.219 & 0.146 & 0.004 & 0.046 & 0.003 & 0.003 & 0.039 & 0.001 & 0.001 & 0.031 & 0.000 \\
\midrule
Energy ($K$) & 0.942 & 0.651 & 0.941 & 0.127 & 0.227 & 0.126 & 0.036 & 0.143 & 0.035 & 0.020 & 0.111 & 0.019 \\
Energy (Any) & 0.171 & 0.237 & 0.169 & 0.004 & 0.048 & 0.002 & 0.001 & 0.035 & 0.000 & 0.001 & 0.033 & -0.000 \\
\midrule
FM ($K$) & 0.930 & 0.649 & 0.929 & 0.123 & 0.221 & 0.121 & 0.034 & 0.134 & 0.032 & 0.027 & 0.134 & 0.025 \\
FM (Any) & 0.071 & 0.168 & 0.070 & 0.015 & 0.102 & 0.014 & 0.013 & 0.096 & 0.012 & 0.012 & 0.090 & 0.011 \\
\bottomrule
\end{tabular}

\end{table}

\begin{table}[h]
\centering
\small
\caption{Distributional distances ($\downarrow$) for GMM experiments across $K$ unique distributions. We report MMD, SWD, and Energy distance.}
\label{tab:gmm_full_scaling}
\textbf{IID Targets}
\vspace{0.3em}

\begin{tabular}{lcccccccccccc}
\toprule
 & \multicolumn{3}{c}{$K=10$} & \multicolumn{3}{c}{$K=100$} & \multicolumn{3}{c}{$K=1,000$} & \multicolumn{3}{c}{$K=10,000$} \\
\textbf{Model} & MMD & SWD & Energy & MMD & SWD & Energy & MMD & SWD & Energy & MMD & SWD & Energy \\
\midrule
SWD ($K$) & 0.004 & 0.115 & 0.001 & 0.006 & 0.127 & 0.003 & 0.011 & 0.148 & 0.008 & 0.020 & 0.186 & 0.017 \\
SWD (Any) & 0.948 & 0.866 & 0.945 & 0.056 & 0.271 & 0.053 & 0.019 & 0.163 & 0.016 & 0.009 & 0.127 & 0.006 \\
\midrule
Energy ($K$) & 0.004 & 0.114 & 0.000 & 0.005 & 0.131 & 0.002 & 0.009 & 0.154 & 0.005 & 0.020 & 0.208 & 0.017 \\
Energy (Any) & 0.937 & 0.947 & 0.934 & 0.043 & 0.276 & 0.040 & 0.010 & 0.159 & 0.007 & 0.009 & 0.149 & 0.005 \\
\midrule
FM ($K$) & 0.045 & 0.302 & 0.042 & 0.054 & 0.352 & 0.051 & 0.062 & 0.366 & 0.059 & 0.055 & 0.339 & 0.052 \\
FM (Any) & 0.284 & 0.564 & 0.280 & 0.060 & 0.308 & 0.057 & 0.044 & 0.280 & 0.041 & 0.037 & 0.269 & 0.034 \\
\bottomrule
\end{tabular}

\vspace{0.5em}
\textbf{OOD Targets}
\vspace{0.3em}

\begin{tabular}{lcccccccccccc}
\toprule
 & \multicolumn{3}{c}{$K=10$} & \multicolumn{3}{c}{$K=100$} & \multicolumn{3}{c}{$K=1,000$} & \multicolumn{3}{c}{$K=10,000$} \\
\textbf{Model} & MMD & SWD & Energy & MMD & SWD & Energy & MMD & SWD & Energy & MMD & SWD & Energy \\
\midrule
SWD ($K$) & 0.787 & 0.882 & 0.784 & 0.337 & 0.567 & 0.334 & 0.169 & 0.408 & 0.166 & 0.113 & 0.341 & 0.110 \\
SWD (Any) & 0.764 & 0.851 & 0.760 & 0.058 & 0.271 & 0.054 & 0.015 & 0.149 & 0.011 & 0.010 & 0.129 & 0.006 \\
\midrule
Energy ($K$) & 0.786 & 0.884 & 0.783 & 0.335 & 0.564 & 0.332 & 0.171 & 0.409 & 0.168 & 0.109 & 0.345 & 0.106 \\
Energy (Any) & 1.395 & 1.109 & 1.392 & 0.041 & 0.269 & 0.038 & 0.010 & 0.154 & 0.007 & 0.008 & 0.142 & 0.005 \\
\midrule
FM ($K$) & 0.782 & 0.858 & 0.779 & 0.350 & 0.592 & 0.348 & 0.187 & 0.465 & 0.184 & 0.124 & 0.401 & 0.121 \\
FM (Any) & 0.267 & 0.607 & 0.264 & 0.059 & 0.304 & 0.056 & 0.046 & 0.284 & 0.043 & 0.039 & 0.278 & 0.036 \\
\bottomrule
\end{tabular}

\end{table}

\subsubsection{Semi-Supervised Generalization}

Tables~\ref{tab:mvn_semisup} and~\ref{tab:gmm_semisup} compare supervised, semi-supervised, and oracle methods on the paired Gaussian tasks. The supervised model is trained only on paired data within the restricted support ($\|\mu\|_\infty \leq 2.5$), while the semi-supervised approach uses the any-to-any encoder with a ridge predictor for the target embedding. The oracle uses the true target embedding.

\begin{table}[h]
\centering
\small
\caption{Distributional distances ($\downarrow$) for MVN semi-supervised experiments. IID: $\|\mu\|_\infty \leq 2.5$, OOD: $\|\mu\|_\infty > 2.5$.}
\label{tab:mvn_semisup}
\begin{tabular}{llcccccc}
\toprule
 & & \multicolumn{3}{c}{\textbf{IID}} & \multicolumn{3}{c}{\textbf{OOD}} \\
\cmidrule(lr){3-5} \cmidrule(lr){6-8}
\textbf{Generator} & \textbf{Method} & MMD & SWD & Energy & MMD & SWD & Energy \\
\midrule
Flow Matching & Supervised & 0.000 & 0.003 & -0.000 & 0.000 & 0.006 & -0.000 \\
 & Semi-sup. & 0.001 & 0.019 & 0.001 & 0.001 & 0.021 & 0.001 \\
 & Oracle & 0.001 & 0.020 & 0.001 & 0.003 & 0.028 & 0.003 \\
\midrule
Energy & Supervised & 0.000 & 0.003 & -0.000 & 0.020 & 0.062 & 0.020 \\
 & Semi-sup. & 0.001 & 0.021 & 0.000 & 0.003 & 0.033 & 0.002 \\
 & Oracle & 0.001 & 0.021 & 0.000 & 0.002 & 0.030 & 0.002 \\
\midrule
SWD & Supervised & 0.000 & 0.003 & -0.000 & 0.017 & 0.052 & 0.017 \\
 & Semi-sup. & 0.001 & 0.024 & 0.001 & 0.002 & 0.034 & 0.002 \\
 & Oracle & 0.001 & 0.024 & 0.001 & 0.003 & 0.036 & 0.003 \\
\bottomrule
\end{tabular}
\end{table}

\begin{table}[h]
\centering
\small
\caption{Distributional distances ($\downarrow$) for GMM semi-supervised experiments. IID: $\|\mu\|_\infty \leq 2.5$, OOD: $\|\mu\|_\infty > 2.5$.}
\label{tab:gmm_semisup}
\begin{tabular}{llcccccc}
\toprule
 & & \multicolumn{3}{c}{\textbf{IID}} & \multicolumn{3}{c}{\textbf{OOD}} \\
\cmidrule(lr){3-5} \cmidrule(lr){6-8}
\textbf{Generator} & \textbf{Method} & MMD & SWD & Energy & MMD & SWD & Energy \\
\midrule
Flow Matching & Supervised & 0.000 & 0.015 & 0.000 & 0.001 & 0.022 & 0.001 \\
 & Semi-sup. & 0.003 & 0.032 & 0.002 & 0.011 & 0.052 & 0.011 \\
 & Oracle & 0.003 & 0.032 & 0.002 & 0.005 & 0.041 & 0.005 \\
\midrule
Energy & Supervised & 0.000 & 0.008 & 0.000 & 0.037 & 0.094 & 0.037 \\
 & Semi-sup. & 0.001 & 0.028 & 0.001 & 0.009 & 0.061 & 0.008 \\
 & Oracle & 0.002 & 0.029 & 0.001 & 0.005 & 0.044 & 0.005 \\
\midrule
SWD & Supervised & 0.000 & 0.008 & 0.000 & 0.018 & 0.065 & 0.018 \\
 & Semi-sup. & 0.003 & 0.036 & 0.003 & 0.013 & 0.070 & 0.013 \\
 & Oracle & 0.003 & 0.036 & 0.003 & 0.009 & 0.059 & 0.009 \\
\bottomrule
\end{tabular}
\end{table}

The results show that the semi-supervised approach matches or exceeds supervised performance within the training support (IID) while maintaining substantially lower error outside it (OOD). This demonstrates that the any-to-any encoder learns distributional structure from the unsupervised objective that enables extrapolation even when the latent predictor must generalize beyond its training data.

\paragraph{Flow Matching Anomaly.} We observe that supervised flow matching generalizes surprisingly well outside its training support in this synthetic setting (low OOD error even for the supervised method). This behavior is not replicated in real-world experiments, suggesting it may be an artifact of the simple linear transformation and low-dimensional setting where the learned flow field happens to extrapolate correctly.

\section{MNIST-Colors image benchmark}\label{app:mnist_colors}

\noindent This appendix describes the MNIST-Colors image experiment, which serves as a sanity check that the qualitative behavior observed in the MVN/GMM scaling experiments (Appendix~\ref{app:gaussian_results}) persists on simple image data. Generator families and standard distributional metrics are defined in Appendices~\ref{app:standard_generators} and~\ref{app:standard_metrics}.

\subsection{Dataset}

We construct a dataset of image distributions by taking grayscale MNIST digits \citep{lecun2010mnist} and applying an RGB ``style'' via elementwise multiplication with a color vector $c\in[0,1]^3$. Each distribution is represented by a set of $n=64$ colored images of shape $3\times 28\times 28$. We study scaling with $K$ by restricting training to $K\in\{10,100,1{,}000,10{,}000\}$ unique colors (with repetition to keep the total number of training sets fixed). We evaluate both in-distribution (IID) targets that use colors from the training set and out-of-distribution (OOD) targets that use held-out colors.

\subsection{Models}

\paragraph{Encoders.}
For any-to-any (STC) models, we use a convolutional distribution encoder that processes each image with a shared CNN and aggregates across the set via mean-pooled message passing, outputting $z\in\mathbb R^{64}$. For the $K$-to-$K$ baseline, we use a learned embedding lookup table with $K$ embeddings.

\paragraph{Generators.}
All MNIST-Colors generators use a context-conditioned U-Net backbone. For the deterministic SWD and Energy regressors (Appendix~\ref{app:standard_generators}), the U-Net is conditioned on the concatenated embeddings $(z_{\text{src}},z_{\text{tgt}})$ and predicts transported images $\hat y=\mathcal T_\theta(x\mid z_{\text{src}},z_{\text{tgt}})$. For flow matching (FM), we use the same backbone as a time-conditional velocity field $v_\theta(x_t,t,z_{\text{src}},z_{\text{tgt}})$ with $\sigma=0.5$.

\paragraph{Training.}
Unless stated otherwise, we use latent dimension 64, set size 64, batch size 32, learning rate $2\times 10^{-4}$, and train for 200 epochs.

\subsection{Metrics and results}

We report SWD and MMD-RBF (Appendix~\ref{app:standard_metrics}) computed on flattened pixels, as well as a color error metric defined as the mean squared error between the per-image mean RGB vectors of generated and target samples (\textbf{Color MSE}). Full results are shown in Table~\ref{tab:mnist_comparison}.

\begin{table}[h]
\centering
\small
\caption{Distributional distances ($\downarrow$) for MNIST-Colors across $K$ unique training colors. We report SWD, MMD-RBF, and Color MSE.}
\label{tab:mnist_comparison}
{\tiny
\textbf{IID Targets}\\
\vspace{0.3em}
\begin{tabular}{lcccccccccccc}
\toprule
 & \multicolumn{3}{c}{$K=10$} & \multicolumn{3}{c}{$K=100$} & \multicolumn{3}{c}{$K=1,000$} & \multicolumn{3}{c}{$K=10,000$} \\
\textbf{Model} & SWD & MMD & Color & SWD & MMD & Color & SWD & MMD & Color & SWD & MMD & Color \\
\midrule
SWD ($K$) & 0.0472 & 0.3746 & 0.0011 & 0.0425 & 0.3395 & 0.0009 & 0.0444 & 0.3549 & 0.0010 & 0.0442 & 0.3597 & 0.0010 \\
SWD (Any) & 0.0415 & 0.2987 & 0.0014 & 0.0400 & 0.3138 & 0.0011 & 0.0404 & 0.2956 & 0.0012 & 0.0414 & 0.3197 & 0.0012 \\
\midrule
Energy ($K$) & 0.0426 & 0.3225 & 0.0013 & 0.0387 & 0.2905 & 0.0011 & 0.0408 & 0.3104 & 0.0012 & 0.0406 & 0.3107 & 0.0012 \\
Energy (Any) & 0.0392 & 0.2549 & 0.0013 & 0.0378 & 0.2630 & 0.0011 & 0.0400 & 0.2833 & 0.0012 & 0.0404 & 0.2955 & 0.0012 \\
\midrule
FM ($K$) & 0.0763 & 1.2057 & 0.0017 & 0.0634 & 0.8942 & 0.0012 & 0.0656 & 0.9301 & 0.0014 & 0.0657 & 0.9353 & 0.0014 \\
FM (Any) & 0.0678 & 0.9243 & 0.0015 & 0.0575 & 0.7122 & 0.0012 & 0.0595 & 0.7879 & 0.0014 & 0.0563 & 0.6539 & 0.0013 \\
\bottomrule
\end{tabular}\\
\vspace{0.5em}
\textbf{OOD Targets}\\
\vspace{0.3em}

\begin{tabular}{lcccccccccccc}
\toprule
 & \multicolumn{3}{c}{$K=10$} & \multicolumn{3}{c}{$K=100$} & \multicolumn{3}{c}{$K=1,000$} & \multicolumn{3}{c}{$K=10,000$} \\
\textbf{Model} & SWD & MMD & Color & SWD & MMD & Color & SWD & MMD & Color & SWD & MMD & Color \\
\midrule
SWD ($K$) & 0.0646 & 0.9201 & 0.0021 & 0.0493 & 0.4626 & 0.0012 & 0.0459 & 0.3825 & 0.0011 & 0.0459 & 0.3792 & 0.0011 \\
SWD (Any) & 0.0612 & 0.8022 & 0.0016 & 0.0430 & 0.3481 & 0.0012 & 0.0409 & 0.3012 & 0.0012 & 0.0427 & 0.3325 & 0.0012 \\
\midrule
Energy ($K$) & 0.0638 & 0.8685 & 0.0019 & 0.0463 & 0.4114 & 0.0013 & 0.0424 & 0.3352 & 0.0012 & 0.0422 & 0.3272 & 0.0012 \\
Energy (Any) & 0.0601 & 0.7550 & 0.0018 & 0.0413 & 0.3037 & 0.0012 & 0.0408 & 0.2912 & 0.0012 & 0.0417 & 0.3064 & 0.0012 \\
\midrule
FM ($K$) & 0.0793 & 1.3419 & 0.0018 & 0.0692 & 1.0138 & 0.0014 & 0.0677 & 0.9777 & 0.0014 & 0.0659 & 0.9071 & 0.0014 \\
FM (Any) & 0.0748 & 1.2180 & 0.0018 & 0.0601 & 0.7541 & 0.0013 & 0.0610 & 0.8206 & 0.0014 & 0.0560 & 0.6305 & 0.0013 \\
\bottomrule
\end{tabular}
}

\end{table}
\normalsize

The MNIST-Colors results mirror the Gaussian experiments: at small $K$, $K$-to-$K$ baselines can perform competitively on IID targets but degrade on OOD targets, while any-to-any models improve OOD performance consistently across generator families. As $K$ increases, the gap between the two regimes narrows, reflecting the reduced need to interpolate beyond the training set.

\section{Experimental details for scRNA-seq batch effect transfer}\label{app:batch}

\noindent This appendix provides experimental details for the scRNA-seq batch effect transfer experiment in Section~\ref{subsec:unsup_batch}. We describe the preprocessing pipeline, model configurations, and evaluation protocol. Generator families and metrics are defined in Appendices~\ref{app:standard_generators} and~\ref{app:standard_metrics}.

\subsection{Data preprocessing}

We use the murine pancreas scRNA-seq atlas data collected in \citet{Hrovatin2023-wd}. We use a preprocessed version of the dataset distributed by \verb|cellxgene|\footnote{\href{https://cellxgene.cziscience.com/collections/296237e2-393d-4e31-b590-b03f74ac5070}{https://cellxgene.cziscience.com/collections/296237e2-393d-4e31-b590-b03f74ac5070}}. We use the top $d=10$ principal components for all downstream analysis. These 10 PCs are standardized to have unit variance and zero mean.

\paragraph{Train-test split.}
In order to induce a distribution shift at test-time, we hold out all mice of one condition. We hold out three donor mice at the 2 year age, leaving 53 remaining donor mice for the training dataset.

\paragraph{Dataset sampling.}
For each donor in the split, we sample sets of $n = 128$ cells uniformly at random. At training time, for each source donor, a random target donor is selected from the training set, and sets are sampled from both, yielding random source-target sets $(S_i^{\text{src}}, S_i^{\text{tgt}})$. This corresponds to STC (any-to-any) training in the terminology of Appendix~\ref{app:standard_generators}.

\subsection{Model configurations}

We evaluate the three standard continuous-data generator families described in Appendix~\ref{app:standard_generators}: flow matching (FM), energy/MMD regression (Energy), and sliced Wasserstein regression (SWD). For each generator type, we compare two encoder architectures: (i) a \emph{distribution encoder} (as in DCT) that maps each sampled cell set to an embedding, and (ii) a \emph{one-hot encoder} that learns a fixed embedding vector for each donor in the training set (a $K$-conditioned baseline).
Unless stated otherwise, the distribution encoder uses the standard architecture described in Appendix~\ref{app:standard_encoder}.

For the one-hot encoder at test time, we assign each test donor to its nearest training donor based on Euclidean distance between donor centroids (computed as the mean of all cells in PCA space for that donor). The model then uses the learned embedding of this nearest training donor.

\paragraph{Baselines.} We compare against scVI \cite{Lopez2018-lh}, a deep generative model for scRNA-seq that learns a latent representation while correcting for batch effects. We train scVI on the three held-out donors, using the donor identity as the batch condition. We use a model with 10 latent dimensions and 2 hidden layers, and 128 hidden units per layer. We train for 1000 epochs. We evaluate its ability to transform source distributions to match target distributions via the \texttt{transform\_batch} functionality, which changes the batch condition passed to the decoder. The resulting normalized expression is then matched to its nearest neighbor in the target dataset. We also compare against Harmony \cite{Korsunsky2019-ji}, which is a non-generative method. In order to ``transfer'' batch effect for a source cell, we compute its nearest neighbor in the batch corrected space which has the target donor label. This approach does not allow for any notion of a held-out donor -- all cells are seen during the integration step.

\paragraph{Evaluation protocol.} For each test donor pair (source, target), we sample $n = 10$ independent source-target set pairs. For our models, we generate predictions by sampling from the learned generator conditioned on the source distribution and target embedding. For scVI, we transform the source cells to the target batch and use the observed nearest neighbor for each prediction. We then compute metrics between the predicted and ground truth target distributions.

\paragraph{Metrics.} We report the standard distributional metrics defined in Appendix~\ref{app:standard_metrics}: energy distance, sliced Wasserstein distance, and MMD with an RBF kernel. Metrics are computed between generated and ground-truth target distributions for each sampled pair, aggregated across all test donor combinations and reported as mean $\pm$ standard error.

\subsection{Full results across metrics}

In Table~\ref{tab:batch}, we show MMD-RBF for all evaluated models. In Table~\ref{tab:app_batch}, we show all three evaluated metrics.


\begin{table}[t]
    \centering
    \begin{tabular}{llccc}
    \toprule
    \textbf{Generator} & \textbf{Encoder} & \textbf{Energy} $\downarrow$ & \textbf{SWD} $\downarrow$ & \textbf{MMD-RBF} $\downarrow$ \\
    \midrule
    \multirow[t]{2}{*}{SWD}
        & Any-to-any & 0.1164 $\pm$ 0.0118 & 0.3051 $\pm$ 0.0298 & 0.2329 $\pm$ 0.0236 \\
        & $K$-to-$K$ & 0.3160 $\pm$ 0.0286 & 0.5164 $\pm$ 0.0226 & 0.6319 $\pm$ 0.0571 \\
    \midrule
    \multirow[t]{2}{*}{Energy}
        & Any-to-any & 0.0639 $\pm$ 0.0046 & 0.4248 $\pm$ 0.0257 & 0.1278 $\pm$ 0.0092 \\
        & $K$-to-$K$ & 0.4000 $\pm$ 0.0453 & 0.6222 $\pm$ 0.0333 & 0.8000 $\pm$ 0.0905 \\
    \midrule
    \multirow[t]{2}{*}{FM}
        & Any-to-any & 0.0733 $\pm$ 0.0119 & 0.3309 $\pm$ 0.0436 & 0.1466 $\pm$ 0.0238 \\
        & $K$-to-$K$ & 0.2429 $\pm$ 0.0385 & 0.4659 $\pm$ 0.0390 & 0.4858 $\pm$ 0.0770 \\
    \midrule
    \multicolumn{2}{l}{\textsc{scVI}} & 0.9074 $\pm$ 0.0698 & 0.6643 $\pm$ 0.0566 & 1.8147 $\pm$ 0.1395 \\
    \multicolumn{2}{l}{Harmony} & 0.0903 $\pm$ 0.0086 & 0.4411 $\pm$ 0.0185 & 0.1806 $\pm$ 0.0173 \\
    \bottomrule
    \end{tabular}
    \caption{Distributional metrics for batch effect transfer between held-out donors (mean $\pm$ standard error).}
    \label{tab:app_batch}
\end{table}

\section{Drug perturbation prediction on organoids (Trellis)}\label{app:trellis}

\noindent This appendix provides additional details for the Trellis organoid drug-perturbation benchmarks in Section~\ref{subsec:trellis}. We use the Trellis single-cell mass cytometry dataset \citep{zapatero2023trellis}, representing each cell by 43 protein abundance features and forming 927 matched control--treatment population pairs across 10 patients and 11 treatments. Each population is represented at training time by a set of $n=256$ cells sampled uniformly at random.

We evaluate on two replicate-holdout splits (\texttt{replicas-1}, \texttt{replicas-2}) and three patient-holdout splits (\texttt{pdo21}, \texttt{pdo27}, \texttt{pdo75}). Models use the standard continuous-data generators (SWD/Energy/FM; Appendix~\ref{app:standard_generators}) and are evaluated with the standard metrics (Appendix~\ref{app:standard_metrics}). We report three conditioning regimes: a supervised source-conditioned (SC) baseline (MFM) that conditions on treatment identity, a semi-supervised source--target-conditioned (STC) model with a post-hoc ridge predictor for the target embedding, and an oracle STC evaluation that conditions on the true target embedding. Unless stated otherwise, the distribution encoder uses the standard architecture described in Appendix~\ref{app:standard_encoder}.

\paragraph{Key hyperparameters.} Unless otherwise stated, we use embedding dimension 256, hidden dimension 256, batch size 4, learning rate $2\times 10^{-4}$, and train for 20{,}000 epochs.

\subsection{Full results across metrics}

\begin{table}[t]
    \centering
    \caption{Distributional metrics ($\downarrow$). Mean $\pm$ standard deviation reported for IID (replicate holdout) and OOD (patient holdout) settings.}
    \label{tab:trellis_combined}
    \begin{tabular}{llccc}
    \toprule
    \textbf{Generator} & \textbf{Regime} & \textbf{Energy} & \textbf{SWD} & \textbf{MMD-RBF} \\
    \midrule
    \multicolumn{5}{c}{\textit{IID (Replicate Holdout)}} \\
    \midrule
    \multirow[t]{3}{*}{SWD}
        & Supervised (SC) & $0.0897 \pm 0.0332$ & $0.1266 \pm 0.0256$ & $0.0064 \pm 0.0022$ \\
        & Semi-supervised (STC) & $0.1305 \pm 0.0305$ & $0.1524 \pm 0.0185$ & $0.0094 \pm 0.0018$ \\
        & Oracle (STC) & $0.0154 \pm 0.0005$ & $0.0633 \pm 0.0009$ & $0.0010 \pm 0.0000$ \\
    \midrule
    \multirow[t]{3}{*}{Energy}
        & Supervised (SC) & $0.0779 \pm 0.0290$ & $0.1199 \pm 0.0209$ & $0.0054 \pm 0.0019$ \\
        & Semi-supervised (STC) & $0.1442 \pm 0.0520$ & $0.1588 \pm 0.0297$ & $0.0103 \pm 0.0032$ \\
        & Oracle (STC) & $0.0169 \pm 0.0026$ & $0.0659 \pm 0.0037$ & $0.0011 \pm 0.0002$ \\
    \midrule
    \multirow[t]{3}{*}{FM}
        & Supervised (SC) & $0.1431 \pm 0.0377$ & $0.1621 \pm 0.0187$ & $0.0101 \pm 0.0023$ \\
        & Semi-supervised (STC) & $0.1450 \pm 0.0380$ & $0.1602 \pm 0.0168$ & $0.0106 \pm 0.0025$ \\
        & Oracle (STC) & $0.0415 \pm 0.0034$ & $0.0929 \pm 0.0054$ & $0.0030 \pm 0.0003$ \\
    \midrule
    \multicolumn{5}{c}{\textit{OOD (Patient Holdout)}} \\
    \midrule
    \multirow[t]{3}{*}{SWD}
        & Supervised (SC) & $0.3391 \pm 0.0093$ & $0.2452 \pm 0.0078$ & $0.0259 \pm 0.0008$ \\
        & Semi-supervised (STC) & $0.2580 \pm 0.0403$ & $0.2103 \pm 0.0196$ & $0.0197 \pm 0.0022$ \\
        & Oracle (STC) & $0.0346 \pm 0.0098$ & $0.0819 \pm 0.0066$ & $0.0026 \pm 0.0009$ \\
    \midrule
    \multirow[t]{3}{*}{Energy}
        & Supervised (SC) & $0.3932 \pm 0.0258$ & $0.2622 \pm 0.0074$ & $0.0298 \pm 0.0019$ \\
        & Semi-supervised (STC) & $0.2503 \pm 0.0556$ & $0.2091 \pm 0.0246$ & $0.0191 \pm 0.0049$ \\
        & Oracle (STC) & $0.0315 \pm 0.0016$ & $0.0808 \pm 0.0028$ & $0.0022 \pm 0.0002$ \\
    \midrule
    \multirow[t]{3}{*}{FM}
        & Supervised (SC) & $0.2969 \pm 0.0209$ & $0.2247 \pm 0.0089$ & $0.0227 \pm 0.0006$ \\
        & Semi-supervised (STC) & $0.2668 \pm 0.0347$ & $0.2138 \pm 0.0167$ & $0.0208 \pm 0.0018$ \\
        & Oracle (STC) & $0.0705 \pm 0.0214$ & $0.1185 \pm 0.0200$ & $0.0053 \pm 0.0013$ \\
    \bottomrule
    \end{tabular}
\end{table}

\subsection{scGen baseline}

To compare our results to existing methods for single-cell drug perturbation prediction, we train the scGen model \cite{Lotfollahi2019scGen} on the same IID and OOD holdouts used in Table~\ref{tab:pdo}. Following the approach in \cite{Lotfollahi2019scGen}, we first train the VAE with default parameters on all single-cell mass cytometry profiles across all patients and drug perturbations present in the training dataset. In a second step, we group cells by drug perturbation and compute a latent difference vector for each drug perturbation independently (difference of mean latent embeddings of source and target cells). Since the OOD holdouts require generalization to new patients, the latent difference vectors are not explicitely conditioned on patient-level information. The results for scGen across metrics and holdouts can be found in Table~\ref{tab:scGen}. We find that scGen significantly underperforms DCT on both the IID and OOD holdouts. Here, we note that while the performance of scGen is moderately worse for the OOD holdout, the gap between IID and OOD performance is not as big as it is for both SC and STC DCT. The most likely explanation is the lack of patient-level representations in scGen that DCT condition on through the distribution embeddings of the control samples.

\begin{table}[]
    \centering
    \caption{Distributional metrics for scGen ($\downarrow$). Mean $\pm$ standard deviation reported for IID (replicate holdout) and OOD (patient holdout) settings.}
    \label{tab:scGen}
    \begin{tabular}{lccc}
    \toprule
    & Energy & SWD & RBF \\
    \midrule
    IID & 0.2924{\scriptsize$\pm$0.0881} & 0.2189{\scriptsize$\pm$0.0383} & 0.0219{\scriptsize$\pm$0.0057} \\
    OOD & 0.3278{\scriptsize$\pm$0.0345} & 0.2272{\scriptsize$\pm$0.0162} & 0.0265{\scriptsize$\pm$0.0022} \\
    \bottomrule
    \end{tabular}
\end{table}

\subsection{CellOT baseline}

In addition to scGen, we evaluate CellOT \cite{Bunne2023-cd} as an additional baseline. CellOT lacks patient-level conditioning as well, so we again group single-cell mass cytometry profiles based on drug perturbations and train a separate instance of CellOT for each drug perturbation. The results can be found in Table~\ref{tab:CellOT}. We find that CellOT performs comparably to scGen. Notably, like scGen, CellOT yields a smaller gap in performance between IID and OOD holdouts when compared to DCTs, most likely due to the same lack of patient-level conditioning as in scGen.

\begin{table}[]
    \centering
    \caption{Distributional metrics for CellOT ($\downarrow$). Mean $\pm$ standard deviation reported for IID (replicate holdout) and OOD (patient holdout) settings.}
    \label{tab:CellOT}
    \begin{tabular}{lccc}
    \toprule
    & Energy & SWD & RBF \\
    \midrule
    IID & 0.2795{\scriptsize$\pm$0.0760} & 0.2155{\scriptsize$\pm$0.0366} & 0.0198{\scriptsize$\pm$0.0046} \\
    OOD & 0.3990{\scriptsize$\pm$0.0181} & 0.2499{\scriptsize$\pm$0.0047} & 0.0304{\scriptsize$\pm$0.0033} \\
    \bottomrule
    \end{tabular}
\end{table}

\subsection{Cotraining and stratified sampling}

The distribution encoder compresses large sets of datapoints into singular latent representations, leading to small train set sizes for the latent predictor on the order of $10-1000$ datapoints for biological datasets used in this paper. In our regular unsupervised learning approach, the encoder is not explicitely trained to structure the latent space such that it favors the downstream training of a lightweight predictor model. Therefore, we explore the benefits of co-training the latent predictor model during the unsupervised training of the encoder and generator. When sampling a source-target pair (in contrast to distribution pairs from separate source-target pairs or pairs including orphan sets), we add an additional loss term $\ell_\mathcal F$ to the reconstruction loss $\mathcal L_{\mathrm{stc}}$, weighted by a hyperparameter $\alpha \in \mathbb{R}^{+}$:

\begin{equation}
    \mathcal L = \mathcal L_{\mathrm{stc}} + \alpha \: \ell_{\mathcal F} \left(\mathcal F \left(S_{src}\right), S_{tgt}-S_{src})\right),
\end{equation}

with $\mathcal F$ the predictor that we co-train to predict the difference between source and target latents. 

In the any-to-any unsupervised learning approach most pairs that are trained on are not corresponding source-target pairs. We therefore additionally stratify the selection of distribution pairs such that corresponding source-target pairs make up a fraction $\eta \in (0,1]$ of sampled distribution pairs. We run a small hyperparameter search across $\alpha \in \{0.1, 0.001\}$ and $\eta \in \{0.2, 0.5\}$. Table~\ref{tab:trellis_strat} contains the results for the parameter combination with the lowest MMD-RBF OOD score $(\alpha =  0.001, \eta = 0.5)$. We find that the performance of this co-training approach is very similar to that without any co-training or stratification. Given that the single-cell mass cytometry dataset we are using here contains 927 source-target pairs, it could be that this is a sufficiently large amount of data for the predictor to not require regularization of the latent space.

\begin{table}[t]
    \centering
    \caption{Distributional metrics for flow matching any-to-any models with and without cotraining for the latent predictor ($\downarrow$). Mean $\pm$ standard deviation reported for IID (replicate holdout) and OOD (patient holdout) settings.}
    \label{tab:trellis_strat}
    \begin{tabular}{llccc}
    \toprule
    \textbf{Generator} & \textbf{Regime} & \textbf{Energy} & \textbf{SWD} & \textbf{MMD-RBF} \\
    \midrule
    \multicolumn{5}{c}{\textit{IID (Replicate Holdout)}} \\
    \midrule
    \multirow[t]{3}{*}{FM}
        & without predictor co-training & $0.1450 \pm 0.0380$ & $0.1602 \pm 0.0168$ & $0.0106 \pm 0.0025$ \\
        & with predictor co-training & $0.1507 \pm 0.0321$ & $0.1603 \pm 0.0173$ & $0.0109 \pm 0.0020$ \\
    \midrule
    \multicolumn{5}{c}{\textit{OOD (Patient Holdout)}} \\
    \midrule
    \multirow[t]{3}{*}{FM}
        & without predictor co-training & $0.2668 \pm 0.0347$ & $0.2138 \pm 0.0167$ & $0.0208 \pm 0.0018$ \\
        & with predictor co-training & $0.2624 \pm 0.0014$ & $0.2117 \pm 0.0022$ & $0.0202 \pm 0.0010$  \\
    \bottomrule
    \end{tabular}
\end{table}

\subsection{Predictor ablation}

The strong oracle performance reported in Table~\ref{tab:pdo} indicates that the encoder and generator trained in an unsupervised any-to-any manner are expressive enough to yield much higher performance than we currently observe for both SC and STC models when paired with a more accurate latent predictor. To explore this further, we benchmark three different predictor model architectures: ridge regression (as used in Table~\ref{tab:pdo}), random forest regression and a small MLP. We run a small hyperparameter search for the predictor model via 5-fold cross validation for replicate holdouts (IID), independent of patient origin, and 9-fold cross validation for patient holdouts (OOD), where for each fold we leave out all cells of one of the 9 patients in the training dataset for validation. For the ridge regressor, we cross-validate the ridge regularization parameter $\alpha_{\mathrm{ridge}} \in \{1\mathrm{e}{-5},1\mathrm{e}{-4}$, $1\mathrm{e}{-3},1\mathrm{e}{-2}$, $1\mathrm{e}{-1},1\mathrm{e}{0}$, $1\mathrm{e}{1}\}$. For the random forest regressor, we cross-validate the number of estimators $N_{\mathrm{estimators}} \in \{50, 100, 200\}$ and the maximum depth $d_{\mathrm{max}} \in \{10, 20, \infty\}$. For the MLP, we use a simple two-layer network with ReLU activations and dropout regularization, and cross validate the hidden dimension $d_h \in \{128, 256\}$ and the weight decay $\lambda \in \{1\mathrm{e}{-4}$, $1\mathrm{e}{-3}\}$. 

We report the results for the three predictor model architectures in Table~\ref{tab:trellis_predictor_ablation}. We find that the MLP predictor model consistently outperforms ridge regression and random forest regression across metrics and different generators, though the gains relative to ridge regression are overall modest.

\begin{table}[t]
    \centering
    \caption{Distributional metrics ($\downarrow$). Mean $\pm$ standard deviation reported for IID (replicate holdout) and OOD (patient holdout) settings.}
    \label{tab:trellis_predictor_ablation}
    \begin{tabular}{llccc}
    \toprule
    \textbf{Generator} & \textbf{Predictor} & \textbf{Energy} & \textbf{SWD} & \textbf{MMD-RBF} \\
    \midrule
    \multicolumn{5}{c}{\textit{IID (Replicate Holdout)}} \\
    \midrule
    \multirow[t]{3}{*}{SWD}
        & Ridge & $0.1305 \pm 0.0305$ & $0.1524 \pm 0.0185$ & $0.0094 \pm 0.0018$ \\
        & Random Forest & $0.1530 \pm 0.0526$ & $0.1598 \pm 0.0295$ & $0.0106 \pm 0.0032$ \\
        & MLP & $0.1200 \pm 0.0524$ & $0.1415 \pm 0.0295$ & $0.0082 \pm 0.0032$ \\
    \midrule
    \multirow[t]{3}{*}{Energy}
        & Ridge & $0.1442 \pm 0.0520$ & $0.1589 \pm 0.0297$ & $0.0103 \pm 0.0032$ \\
        & Random Forest & $0.1515 \pm 0.0619$ & $0.1619 \pm 0.0351$ & $0.0107 \pm 0.0040$ \\
        & MLP & $0.1268 \pm 0.0583$ & $0.1434 \pm 0.0313$ & $0.0086 \pm 0.0037$ \\
    \midrule
    \multirow[t]{3}{*}{FM}
        & Ridge & $0.1450 \pm 0.0380$ & $0.1600 \pm 0.0166$ & $0.0106 \pm 0.0025$ \\
        & Random Forest & $0.1684 \pm 0.0505$ & $0.1704 \pm 0.0270$ & $0.0121 \pm 0.0033$ \\
        & MLP & $0.1436 \pm 0.0503$ & $0.1580 \pm 0.0237$ & $0.0097 \pm 0.0030$ \\
    \midrule
    \multicolumn{5}{c}{\textit{OOD (Patient Holdout)}} \\
    \midrule
    \multirow[t]{3}{*}{SWD}
        & Ridge & $0.2580 \pm 0.0403$ & $0.2102 \pm 0.0195$ & $0.0197 \pm 0.0022$ \\
        & Random Forest & $0.2792 \pm 0.0113$ & $0.2115 \pm 0.0121$ & $0.0216 \pm 0.0006$ \\
        & MLP & $0.2504 \pm 0.0248$ & $0.2054 \pm 0.0176$ & $0.0185 \pm 0.0011$ \\
    \midrule
    \multirow[t]{3}{*}{Energy}
        & Ridge & $0.2503 \pm 0.0556$ & $0.2093 \pm 0.0246$ & $0.0191 \pm 0.0049$ \\
        & Random Forest & $0.2497 \pm 0.0243$ & $0.2023 \pm 0.0189$ & $0.0190 \pm 0.0009$ \\
        & MLP & $0.2445 \pm 0.0026$ & $0.2072 \pm 0.0168$ & $0.0172 \pm 0.0006$ \\
    \midrule
    \multirow[t]{3}{*}{FM}
        & Ridge & $0.2668 \pm 0.0347$ & $0.2140 \pm 0.0170$ & $0.0208 \pm 0.0018$ \\
        & Random Forest & $0.2966 \pm 0.0385$ & $0.2188 \pm 0.0193$ & $0.0232 \pm 0.0012$ \\
        & MLP & $0.2614 \pm 0.0486$ & $0.2098 \pm 0.0259$ & $0.0201 \pm 0.0013$ \\
    \bottomrule
    \end{tabular}
\end{table}

\subsection{Source-conditioned transport: encoder analysis}

In the original work that introduced Meta Flow Matching (MFM), the distribution encoder did not follow the framework of GDEs and instead constructed a k-NN graph–based message-passing graph neural network. To compare the dependence of SC DCT on the encoder used to embed the source, we benchmarked SC DCT with the same k-NN based encoder architecture. The results are reported in Table~\ref{tab:trellis_mfm}. We find that performance is very similar to the results reported in Table~\ref{tab:trellis_combined}, suggesting that both encoders are of similar utility for forecasting drug perturbations in this case.

\begin{table}[t]
    \centering
    \caption{Distributional metrics for supervised k-NN encoder models ($\downarrow$). Mean $\pm$ standard deviation reported for IID (replicate holdout) and OOD (patient holdout) settings.}
    \label{tab:trellis_mfm}
    \begin{tabular}{llccc}
    \toprule
    \textbf{Generator} & & \textbf{Energy} & \textbf{SWD} & \textbf{MMD-RBF} \\
    \midrule
    \multicolumn{5}{c}{\textit{IID (Replicate Holdout)}} \\
    \midrule
    SWD & & $0.0807 \pm 0.0253$ & $0.1238 \pm 0.0176$ & $0.0058 \pm 0.0016$ \\
    Energy & & $0.0836 \pm 0.0230$ & $0.1254 \pm 0.0145$ & $0.0059 \pm 0.0014$ \\
    FM & & $0.1235 \pm 0.0194$ & $0.1477 \pm 0.0091$ & $0.0088 \pm 0.0012$ \\
    \midrule
    \multicolumn{5}{c}{\textit{OOD (Patient Holdout)}} \\
    \midrule
    SWD & & $0.3524 \pm 0.0299$ & $0.2493 \pm 0.0111$ & $0.0268 \pm 0.0018$ \\
    Energy & & $0.3595 \pm 0.0058$ & $0.2512 \pm 0.0009$ & $0.0274 \pm 0.0007$ \\
    FM & & $0.3071 \pm 0.0427$ & $0.2288 \pm 0.0140$ & $0.0235 \pm 0.0029$ \\
    \bottomrule
    \end{tabular}
\end{table}
\section{Experimental details for lineage-traced scRNA-seq}\label{app:lt}

\noindent This appendix provides experimental details for the lineage-traced scRNA-seq forecasting experiment in Section~\ref{subsec:lt}. We describe dataset construction, model configurations, and evaluation. Generator families and metrics are defined in Appendices~\ref{app:standard_generators} and~\ref{app:standard_metrics}.

\subsection{Data preprocessing}

We use the \textit{in vitro} lineage-traced scRNA-seq dataset from \citet{Weinreb2020-vh}, which tracks hematopoietic stem cell differentiation over time points $t \in \{2, 4, 6\}$ days.

We download normalized count matrices, metadata, gene names, and clone assignment matrices from the paper data repository\footnote{\url{https://kleintools.hms.harvard.edu/paper_websites/state_fate2020/}}. We discard all cells which have not been assigned to a clone.

\paragraph{Gene expression preprocessing.}
Following standard scRNA-seq preprocessing pipelines \cite{Heumos2023-ez}, we apply:
\begin{enumerate}
    \itemsep0em 
    \item Normalization to $10^4$ counts per cell
    \item $\log(x + 1)$ transform
    \item Feature selection of top 10,000 highly variable genes
    \item Unit variance, zero mean rescaling per gene with maximum value clipped at 10
    \item PCA to $d = 50$ principal components
\end{enumerate}
Downstream analysis is done on these 50 principal components.

\paragraph{Clone set construction.}
For each (clone, timepoint) pair, we partition cells into sets of size $n = 100$ with minimum $n_{\min} = 3$ cells per set. Sets smaller than $n$ are padded by sampling with replacement from the same (clone, timepoint) group. This yields $N$ sets $\{S_i\}_{i=1}^N$. 

\paragraph{Dataset sampling (supervised).}
We construct source-target pairs $(S_i^{\text{src}}, S_i^{\text{tgt}})$ where both sets share the same clone identity but differ by $\Delta t = 2$ days (i.e., $t \in \{2, 4\}$ paired with $t + 2 \in \{4, 6\}$). This results in 1256 clones observed at consecutive timepoints. These clones are split into train/test with ratio 1:1. This split is at the level of clones, not cells -- each clone appears in only part of the split.

\paragraph{Dataset sampling (semi-supervised).}
For the semi-supervised setting, we additionally construct unstructured pairs by matching source sets from clones without complete targets to random target sets at the correct timepoint. During training, each batch samples from the true paired distribution with probability $p = 0.25$ and from the random pairing with probability $1 - p = 0.75$. Clone splits are identical.

\subsection{Model configurations}

We evaluate the three standard continuous-data generator families described in Appendix~\ref{app:standard_generators}: flow matching (FM), energy/MMD regression (Energy), and sliced Wasserstein regression (SWD). For each generator type, we compare three regimes (as in the main text): a supervised source-conditioned (SC) baseline trained only on paired clones; a semi-supervised source--target-conditioned (STC) model trained on a mixture of true pairs and random early$\to$late pairings and evaluated using a post-hoc ridge predictor for the target embedding; and an oracle STC evaluation that conditions on the true target embedding.
Unless stated otherwise, the distribution encoder uses the standard architecture described in Appendix~\ref{app:standard_encoder}.

For the semi-supervised setting, we fit a linear predictor post-hoc to map source embeddings to target embeddings. Specifically, we encode all training source and target distributions using the semi-supervised encoder, then fit a ridge regression model with 5-fold cross-validation over regularization values $\alpha \in [10^{-6}, 10^6]$ to predict target embeddings from source embeddings. 

\paragraph{Metrics.} We report the standard distributional metrics defined in Appendix~\ref{app:standard_metrics}: energy distance (Energy), sliced Wasserstein distance (SWD), and MMD with an RBF kernel (MMD-RBF). Metrics are computed between generated and ground-truth target distributions for each test clone, with results reported as mean $\pm$ standard error across test clones.

\subsection{Full results across metrics}

In Table~\ref{tab:lt} we show MMD-RBF for transport across all evaluated models. In Table~\ref{tab:app_lt}, we show results across all three evaluated metrics. Regardless of the distributional metric, results remain qualitatively similar.

\begin{table}[t]
    \centering
    \begin{tabular}{llccc}
    \toprule
    \textbf{Generator} & \textbf{Regime} & \textbf{Energy} $\downarrow$ & \textbf{SWD} $\downarrow$ & \textbf{MMD-RBF} $\downarrow$ \\
    \midrule
    \multirow[t]{3}{*}{SWD}
        & Supervised (SC) & $4.936 \pm 0.133$ & $1.679 \pm 0.023$ & $9.872 \pm 0.267$ \\
        & Semi-supervised (STC) & $4.333 \pm 0.108$ & $1.575 \pm 0.020$ & $8.666 \pm 0.215$ \\
        & Oracle (STC) & $1.494 \pm 0.025$ & $0.953 \pm 0.011$ & $2.988 \pm 0.049$ \\
    \midrule
    \multirow[t]{3}{*}{Energy}
        & Supervised (SC) & $4.709 \pm 0.127$ & $1.650 \pm 0.023$ & $9.419 \pm 0.255$ \\
        & Semi-supervised (STC) & $4.269 \pm 0.103$ & $1.575 \pm 0.020$ & $8.538 \pm 0.205$ \\
        & Oracle (STC) & $1.440 \pm 0.022$ & $0.951 \pm 0.011$ & $2.881 \pm 0.045$ \\
    \midrule
    \multirow[t]{3}{*}{FM}
        & Supervised (SC) & $7.049 \pm 0.139$ & $2.239 \pm 0.025$ & $14.099 \pm 0.278$ \\
        & Semi-supervised (STC) & $4.834 \pm 0.123$ & $1.747 \pm 0.023$ & $9.668 \pm 0.247$ \\
        & Oracle (STC) & $2.539 \pm 0.052$ & $1.309 \pm 0.017$ & $5.077 \pm 0.104$ \\
    \bottomrule
    \end{tabular}
    \caption{Distributional metrics for clonal distribution forecasting in held-out clones (mean $\pm$ standard error).}
    \label{tab:app_lt}
\end{table}

\section{TCR Repertoire Forecasting Experiments}\label{app:tcr}

\noindent This appendix provides detailed descriptions of the dataset, model architectures, and evaluation protocols for the T-cell receptor repertoire forecasting experiments presented in Section~\ref{subsec:tcr}. We describe the data processing pipeline, the two discrete sequence generators we employ, and explain architectural differences that may account for their divergent performance. Distributional evaluation metrics are defined in Appendix~\ref{app:standard_metrics} and are computed here in ESM2 embedding space.

\subsection{Data preprocessing}\label{app:tcr_data}

\subsubsection{Source Data}

We use longitudinal TCR repertoire sequencing data from COVID-19 patients collected by \citet{schultheiss2020next}. The dataset contains TRB (T-cell receptor beta chain) CDR3 amino acid sequences from 37 patients, of whom 10 were profiled at multiple timepoints during disease progression and recovery. Each repertoire comprises thousands of unique CDR3 junction sequences, typically 10--20 amino acids in length.

\subsubsection{Sequence Filtering}

Raw CDR3 junction sequences are filtered to retain only those composed of the 20 standard amino acids (ACDEFGHIKLMNPQRSTVWY). Sequences containing ambiguous or non-standard residues are discarded. We truncate or pad sequences to a maximum length of 30 tokens (including special tokens), which accommodates the vast majority of TCR CDR3 sequences.

\subsubsection{Train-Test Split}

The train-test split is performed at the patient level to ensure that test evaluation reflects true generalization to unseen individuals. We partition the 10 multi-timepoint patients, holding out 3 patients (30\%) for evaluation and training on the remaining 7. Single-timepoint patients are included only in training for the semi-supervised (any-to-any) model, where they serve as additional unpaired marginals that enrich the learned embedding space.

For the supervised (within-patient) model, only consecutive timepoint pairs $(t, t{+}1)$ from multi-timepoint patients are used for training. The semi-supervised model additionally forms cross-patient pairs, training on transitions from any earlier timepoint to any later timepoint across different patients. This pairing strategy encodes the inductive bias that immune repertoires evolve forward in time, while allowing the model to leverage the larger pool of unpaired data.

\subsubsection{Tokenization}

Each sequence is tokenized for both ESM2 and ProGen2 vocabularies and cached for efficient training. ESM2 tokenization follows the standard protocol with \texttt{<cls>} and \texttt{<eos>} special tokens, producing input sequences of the form $\langle\texttt{cls}\rangle\texttt{SEQUENCE}\langle\texttt{eos}\rangle\langle\texttt{pad}\rangle\cdots$. ProGen2 tokenization uses \texttt{<|bos|>} and \texttt{<|eos|>} delimiters with \texttt{<|pad|>} for padding.

\subsection{Embedding Architecture}\label{app:tcr_embedding}

All models share a common distribution encoder that maps repertoires to fixed-dimensional embeddings. The encoder operates in two stages: a pretrained protein language model extracts per-sequence representations, which are then aggregated into a distribution embedding.

\subsubsection{Sequence Encoder}

We use ESM2 \citep{lin2023evolutionary} with 8 million parameters as the backbone sequence encoder. For each sequence, we extract the final hidden states and apply mean pooling over non-padding positions to obtain a 320-dimensional sequence embedding. The ESM2 backbone is frozen during training to preserve its pretrained representations.

\subsubsection{Distribution Encoder}

Given a set of $n$ sequence embeddings $\{h_1, \ldots, h_n\} \subset \mathbb{R}^{320}$, we apply a transformer-based aggregator following the architecture of \citet{Zaheer2017-tu}. The aggregator consists of:
\begin{enumerate}
    \item An input projection $\texttt{Linear}(320, 256) + \texttt{SELU}$
    \item Two self-attention layers with 4 heads each
    \item Mean pooling across sequences
    \item A latent projection $\texttt{Linear}(256, 128) + \texttt{SELU}$
    \item $\ell_2$ normalization of the final embedding
\end{enumerate}
The output is a 128-dimensional distribution embedding $z \in \mathbb{R}^{128}$ that summarizes the entire repertoire.

\subsection{Generator Architectures}\label{app:tcr_generators}

We compare two discrete sequence generators that share the same conditioning interface but differ fundamentally in their generative mechanisms.

\subsubsection{ESM2 Discrete Flow Matching (DFM)}

The DFM generator adapts the discrete flow matching framework of \citet{gat2024discreteflowmatching} to protein sequence generation by repurposing ESM2 as a denoising backbone. Rather than training an autoregressive model, DFM learns to iteratively refine sequences from source to target through a learned flow over discrete tokens.

\paragraph{Architecture.} We extend \texttt{EsmForMaskedLM} with conditioning modules that inject source and target distribution embeddings at two points:
\begin{enumerate}
    \item \textbf{Input-level conditioning:} The concatenated latent $[z_\text{src}; z_\text{tgt}] \in \mathbb{R}^{64}$ is projected through a two-layer MLP ($\texttt{Linear}(64, 256) \to \texttt{GELU} \to \texttt{Linear}(256, 320)$) and added to the token embeddings before the transformer encoder. This ensures all 6 transformer layers process conditioned representations.
    \item \textbf{Output-level conditioning:} A parallel projection is added to the hidden states after the transformer, before the language modeling head.
\end{enumerate}
Time conditioning uses sinusoidal embeddings following the original transformer formulation, with the continuous time $t \in [0,1]$ embedded into the model's hidden dimension and added to token embeddings.

\paragraph{Training.} Given source sequence $x^\text{src}$ and target sequence $x^\text{tgt}$, we sample $t \sim \text{Uniform}(0, 1)$ and construct an interpolated sequence $x_t$ via Bernoulli masking: at each position, $x_t$ equals $x^\text{tgt}$ with probability $t$ and $x^\text{src}$ otherwise. The model predicts $x^\text{tgt}$ from $x_t$ using cross-entropy loss over all positions, including padding tokens. This allows the model to learn length changes (insertions and deletions) between source and target.

\paragraph{Sampling.} Generation proceeds by discrete flow integration from $t=0$ to $t=1$ with step size $\Delta t = 0.05$. At each step, the model predicts a distribution over tokens at each position. The update uses a mixture formula:
\begin{equation}
    p(x_{t+\Delta t}) = \mathbf{1}_{x_t} + \frac{\Delta t}{1-t}\left(p_\theta(x^\text{tgt} \mid x_t, t) - \mathbf{1}_{x_t}\right)
\end{equation}
where $\mathbf{1}_{x_t}$ is the one-hot encoding of the current sequence. Vocabulary constraints enforce that position 0 contains only \texttt{<cls>} and subsequent positions contain only amino acids, \texttt{<eos>}, or \texttt{<pad>}.

\subsubsection{ProGen2 with Prefix Conditioning}

The ProGen2 generator uses the pretrained ProGen2 language model \citep{Nijkamp2023-xq} with prefix-based conditioning for source-to-target generation.

\paragraph{Architecture.} We use \texttt{progen2-medium} as the backbone. Conditioning is implemented by inserting a learned prefix embedding between source and target sequences:
\begin{enumerate}
    \item The concatenated latent $[z_\text{src}; z_\text{tgt}]$ is projected through $\texttt{Linear}(64, 256) \to \texttt{GELU} \to \texttt{Linear}(256, d_\text{model})$
    \item The resulting vector is inserted as a single prefix token between the source and target portions of the input
    \item The attention mask is adjusted to allow the prefix to attend to source tokens and target tokens to attend to the prefix
\end{enumerate}

\paragraph{Training.} The input sequence is formed by concatenating source and target: $\langle\texttt{bos}\rangle\text{SOURCE}\langle\texttt{eos}\rangle\langle\texttt{bos}\rangle\text{TARGET}\langle\texttt{eos}\rangle$. The prefix is inserted at the boundary. The model is trained with causal language modeling loss computed only on target tokens, ignoring source and prefix positions.

\paragraph{Sampling.} Generation proceeds autoregressively: given the source sequence and predicted target embedding, the model generates target tokens one at a time until \texttt{<eos>} or maximum length. To encourage diversity when generating multiple samples, we add Gaussian noise with scale 0.1 to the latent embeddings.

\subsection{Evaluation Protocol}\label{app:tcr_eval}

All models are evaluated on the same task: given a patient's repertoire at time $t$, forecast the repertoire at time $t{+}1$. We evaluate only on consecutive timepoint pairs from held-out patients.

\subsubsection{Embedding Computation}

For each repertoire, we sample sequences uniformly at random and compute the distribution embedding using the trained encoder. At training time, we use 2048 sequences per repertoire to compute embeddings for the predictor.

\subsubsection{Target Embedding Prediction}

For the semi-supervised model, we train a ridge regression predictor on the delta $z_{t+1} = z_t + \Delta z, \Delta z = Wz_t + b$ on consecutive pairs from training patients, using 2048 sequences per repertoire for embedding computation. Regularization strength is selected via leave-one-out cross-validation over a grid of 20 log-spaced values from $10^{-4}$ to $10^4$.

\subsubsection{Sequence Generation and Comparison}

For each test pair:
\begin{enumerate}
    \item Encode the current timepoint repertoire to obtain $z_t$
    \item Form the target embedding $\hat{z}_{t+1}$ using either (i) a supervised within-patient baseline that sets $\hat{z}_{t+1}=0$ (source-conditioned), (ii) a semi-supervised ridge predictor $\hat{z}_{t+1}$ (source--target-conditioned), or (iii) an oracle that uses the true target embedding $z_{t+1}$
    \item Generate 2048 sequences conditioned on $z_t$ and $\hat{z}_{t+1}$
    \item Re-tokenize generated sequences for ESM2 and compute mean-pooled embeddings
    \item Sample 2048 sequences from the actual next timepoint and compute their ESM2 embeddings
    \item Compute distributional distances between generated and actual sequence embeddings
\end{enumerate}

\subsubsection{Metrics}

We report the standard distributional metrics defined in Appendix~\ref{app:standard_metrics} (energy distance, sliced Wasserstein distance, and MMD-RBF), computed in the ESM2 embedding space.

Results are averaged over all test pairs, with standard errors computed across pairs.

\subsection{Discussion: Why ProGen2 Underperforms}\label{app:tcr_discussion}

The results in Table~\ref{tab:tcr} show that the DFM generator substantially outperforms ProGen2, particularly in the semi-supervised setting. We hypothesize several factors contribute to this gap.

\paragraph{Autoregressive vs.\ iterative refinement.} ProGen2 generates sequences left-to-right, committing to each token before seeing subsequent context. The DFM generator, by contrast, refines all positions simultaneously through iterative denoising. For the task of transforming one sequence into another (rather than generating de novo), iterative refinement may better preserve structural features of the source while adapting to the target distribution.

\paragraph{Conditioning mechanism.} The DFM generator injects conditioning at both input and output levels of every transformer layer, ensuring the distributional signal permeates the entire network. ProGen2's prefix conditioning provides a single conditioning vector that the model must propagate through causal attention. This architectural difference may limit ProGen2's ability to leverage the target embedding for precise distributional control.

\subsection{Hyperparameter Summary}\label{app:tcr_hyperparams}

Table~\ref{tab:tcr_hyperparams} summarizes the key hyperparameters for reproducibility.

\begin{table}[h]
\centering
\caption{Hyperparameters for TCR repertoire forecasting experiments.}
\label{tab:tcr_hyperparams}
\begin{tabular}{lcc}
\toprule
\textbf{Hyperparameter} & \textbf{DFM} & \textbf{ProGen2} \\
\midrule
\multicolumn{3}{l}{\textit{Dataset}} \\
Sequences per repertoire (training) & 2048 & 2048 \\
Maximum sequence length & 30 & 30 \\
Test fraction (multi-TP patients) & 0.3 & 0.3 \\
\midrule
\multicolumn{3}{l}{\textit{Encoder}} \\
ESM2 backbone & \texttt{esm2\_t6\_8M} & \texttt{esm2\_t6\_8M} \\
Distribution embedding dim & 128 & 128 \\
Self-attention layers & 2 & 2 \\
Attention heads & 4 & 4 \\
\midrule
\multicolumn{3}{l}{\textit{Generator}} \\
Backbone & \texttt{esm2\_t6\_8M} & \texttt{progen2-medium} \\
Conditioning dimension & 256 & 256 \\
Latent input dimension & 32 & 32 \\
Temperature & 1.0 & 1.0 \\
Sample step size ($\Delta t$) & 0.05 & --- \\
\midrule
\multicolumn{3}{l}{\textit{Training}} \\
Batch size & 1 & 1 \\
Learning rate & $2 \times 10^{-5}$ & $2 \times 10^{-5}$ \\
Optimizer & Adam & Adam \\
\midrule
\multicolumn{3}{l}{\textit{Evaluation}} \\
Sequences per repertoire & 2048 & 2048 \\
Predictor training samples & 2048 & 2048 \\
Predictor regularization & RidgeCV & RidgeCV \\
\bottomrule
\end{tabular}
\end{table}

\subsubsection{Results}

Table~\ref{tab:app_tcr} records results across all metrics.

\begin{table}[h]
\centering
\caption{TCR repertoire prediction performance across conditioning strategies (mean $\pm$ standard error). DFM shows meaningful conditioning (Oracle $\gg$ Predictor $\gg$ Within-patient), while ProGen2 shows little conditioning effect due to encoder collapse (all embeddings have cosine similarity $>0.998$).}
\label{tab:app_tcr}
\begin{tabular}{llccc}
\toprule
\textbf{Generator} & \textbf{Regime} & \textbf{Energy} $\downarrow$ & \textbf{SWD} $\downarrow$ & \textbf{MMD-RBF} $\downarrow$ \\
\midrule
\multirow{3}{*}{ProGen2}
    & Within-patient (SC) & $0.0682 \pm 0.0103$ & $0.4015 \pm 0.0118$ & $0.0588 \pm 0.0085$ \\
    & Predictor (STC) & $0.0681 \pm 0.0104$ & $0.4012 \pm 0.0118$ & $0.0587 \pm 0.0085$ \\
    & Oracle (STC) & $0.0676 \pm 0.0114$ & $0.3856 \pm 0.0148$ & $0.0567 \pm 0.0093$ \\
\midrule
\multirow{3}{*}{DFM}
    & Within-patient (SC) & $0.0684 \pm 0.0120$ & $0.3895 \pm 0.0147$ & $0.0579 \pm 0.0100$ \\
    & Predictor (STC) & $0.0262 \pm 0.0050$ & $0.2778 \pm 0.0152$ & $0.0215 \pm 0.0043$ \\
    & Oracle (STC) & $0.0090 \pm 0.0015$ & $0.2199 \pm 0.0217$ & $0.0078 \pm 0.0013$ \\
\bottomrule
\end{tabular}
\end{table}


\end{document}